\renewcommand{\clc}[1]{}
\renewcommand{\kj}[1]{}
\renewcommand{\blue}[1]{{#1}}
\icmltitlerunning{Revisiting Simple Regret: Fast Rates for Returning a Good Arm}
\begin{document}
\doparttoc 
\faketableofcontents 

\begin{bibunit}[plainnat]




\twocolumn[
\icmltitle{Revisiting Simple Regret: Fast Rates for Returning a Good Arm}



\icmlsetsymbol{equal}{*}

\begin{icmlauthorlist}
\icmlauthor{Yao Zhao}{US_UA}
\icmlauthor{Connor James Stephens}{CA_UA}
\icmlauthor{Csaba Szepesvári}{CA_UA,DM}
\icmlauthor{Kwang-Sung Jun}{US_UA}
\end{icmlauthorlist}

\icmlaffiliation{US_UA}{University of Arizona}
\icmlaffiliation{CA_UA}{Amii, University of Alberta}
\icmlaffiliation{DM}{DeepMind}

\icmlcorrespondingauthor{Kwang-Sung Jun}{kjun@cs.arizona.edu}

\icmlkeywords{Machine Learning, ICML}

\vskip 0.3in
]
\printAffiliationsAndNotice{}

\begin{abstract}
Simple regret is a natural and parameter-free performance criterion for pure exploration in multi-armed bandits yet is less popular than the probability of missing the best arm or an $\epsilon$-good arm, perhaps due to lack of easy ways to characterize it.
In this paper, we make significant progress on minimizing simple regret in both data-rich ($T\ge n$) and data-poor regime ($T \le n$) where $n$ is the number of arms, and $T$ is the number of samples. 
At its heart is our improved instance-dependent analysis of the well-known Sequential Halving (SH) algorithm, where we bound the probability of returning an arm whose mean reward is not within $\epsilon$ from the best (i.e., not $\epsilon$-good) for \textit{any} choice of $\epsilon>0$, although $\epsilon$ is not an input to SH.
Our bound not only leads to an optimal worst-case simple regret bound of $\sqrt{n/T}$ up to logarithmic factors but also essentially matches the instance-dependent lower bound for returning an $\epsilon$-good arm reported by Katz-Samuels and Jamieson (2020).
For the more challenging data-poor regime, we propose Bracketing SH (BSH) that enjoys the same improvement even without sampling each arm at least once.
Our empirical study shows that BSH outperforms existing methods on real-world tasks.

\end{abstract}

\section{Introduction} 
\label{sec-intro}

We consider the pure exploration problem in multi-armed bandits.
In this problem, given $n$ arms, the learner sequentially chooses an arm $\dsym{I_t} \in [n]:=\{1,\ldots,n\}$ at time $t$ to observe a reward $\dsym{r_t} = \mu_{I_t} + \eta_t$ where $\dsym{\mu_i}$ is the mean reward of arm $i$ and $\dsym{\eta_t}$ is stochastic $\sigma^2$-sub-Gaussian noise.
Without loss of generality, we assume that $1\ge \mu_1 \ge \cdots \ge \mu_n\ge0$.
After $T$ time steps, the learner is required to output an arm $\dsym{J_T}$ that is estimated to have the largest mean reward.
The budget $T$ may or may not be given to the algorithm as input.
Such a problem was formalized by~\citet{even2006action} and~\citet{bubeck2009pure}, but similar problems were considered much earlier~\citep{bechhofer58sequential,chernoff59sequential}.
Recently, pure exploration has found many applications such as efficient crowdsourcing for cartoon caption contests~\citep{tanczos17akllucb}, hyperparameter optimization~\citep{li18hyperband, li20asystem}, and improving the time complexity of clustering algorithms~\citep{baharav19ultra}.
Hereafter, we take $\sig^2=1$ for ease of exposition, but all our results can easily be extended to $\sig^2$-sub-Gaussian noise.

Among many performance criteria, \textit{simple regret} proposed by \citet{bubeck2009pure} is a natural measure of the quality of the estimated best arm $J_T$:
\begin{align}\label{eq:sreg}
   \dsym{\SReg_T} := \EE\sbr{\mu_1 - \mu_{J_T}}~.
\end{align}
Simple regret is an \textit{unverifiable} performance measure, meaning that the algorithm need not verify its performance to a prescribed level. 
This is in stark contrast to the fixed confidence setting \citep{even2006action} such as ($\eps$,$\delta$)-PAC that requires the algorithm to \textit{verify} the quality of the estimated best arm to the prescribed target accuracy $\eps$ and confidence level $\delta$.
While certain applications do need such verification, there are many applications where the sampling budget is too limited to show meaningful guarantees, such as cartoon caption contests~\citep{jain2020new} and biological experiments~\citep{jun16top}.
In these cases, algorithms with verifiable guarantees may not be meaningful.
Furthermore, existing algorithms with verifiable guarantees are vacuous in the data-poor regime of $T \le n$ as it requires each arm to be pulled at least once~\cite[Section C.1]{Katz20}. 
The fixed budget setting considers an unverifiable measure of $\pp{\mu_{J_T} \neq \mu_{1}}$~\citep{audibert10best}, but this quantity inevitably depends on the smallest gap $\mu_1 - \mu_2$~\citep{Carpentier2016}, which is usually not meaningful until the sample size is very large.

\begin{table*}[t]
\caption{ 
    Sample complexity comparison with ME~\citep{even2006action}, $\cP_3$ \citep{chaudhuri2019pac},  and BUCB~\citep{Katz20}.
    The lower bound for Polynomial($\alpha$) instances (\cref{thm:gen_lb}, \cref{lem:poly_lb}) is a contribution of this paper which improves on \citet[Theorem 1]{Katz20} to include the correct dependence on $\dt$. The lower bound for EqualGap($m$) was first shown by \citet{chaudhuri2017pac}, which works for the fixed budget setting as well.
    We omit constant and logarithmic factors in $n$ and $m$.
    Methods with asterisk consider the fixed confidence setting, and those with dagger have an in expectation bound rather than the high probability one.
    Anytime algorithms do not require the knowledge of the budget $T$ as input, and parameter-free ones do not require any of $\{\alpha, m, \eps,\delta\}$ as input.
    }
    \label{tab:summary}
    \vspace{.5em}
  {\footnotesize
    \centering
    \renewcommand{\arraystretch}{1.5} 
    \begin{tabular}{M{0.10\linewidth}P{0.23\linewidth}P{0.23\linewidth}P{0.10\linewidth}P{0.06\linewidth}P{0.12\linewidth}}
      \hline
      & Polynomial$(\alpha)$ with $\alpha>\fr12$ & EqualGap$(m)$ with $m\le n/4$ & Data-poor & Anytime & Parameter-free \\ \hline
      ME$^*$ & $\fr{n}{\eps^2}\log\del{\fr{1}{\delta}}$ & $\fr{n}{\eps^2}\log\del{\fr{1}{\delta}}$ & \no & \no & \no\\ 
      $\cP_3^*$ & $\fr{1}{\eps^{2+(1/\alpha)}}\log\rbr{\fr{1}{\delta}}\vee\fr{1}{\eps^{2}}\log^2\rbr{\fr{1}{\delta}}$ & $\frac{1}{\eps^2}\del{\frac{n}{m}\log\del{\fr{1}{\delta}}+\log^2\del{\fr{1}{\delta}}}$ & \yes & \no & \no\\ 
      BUCB$^\dagger$ & $\fr{n}{\eps^{2-(1/\alpha)}}\log\del{\fr{1}{\delta}}$ & $\fr{n}{m\eps^2}\log\del{\fr{1}{\delta}}$ & \yes & \yes & \no \\ \hline
      DSH (ours)      & $\fr{1}{\eps^2}\log\del{\fr{1}{\delta}}\vee\fr{n}{\eps^{1/\alpha}}$ & $\fr{n}{m\eps^2}\log\del{\fr{1}{\delta}}\vee\fr{n}{\eps^2}$ & \no & \yes & \yes\\ 
      BSH (ours) & $\fr{1}{\eps^2}\log\del{\fr{1}{\delta}}$ & $\fr{n}{m\eps^2}\log\del{\fr{1}{\delta}}$ & \yes & \yes & \yes \\ \hline
      Lower bound & $\fr{1}{\eps^{2}}\log\del{\fr{1}{\delta}}$ & $\fr{n}{m\eps^2}\log\del{\fr{1}{\delta}}$ &   &   &  \\ \hline
    \end{tabular}
    \renewcommand{\arraystretch}{1.0}
  }
\end{table*}

Despite being attractive, the analysis of simple regret has been elusive.
Existing studies focus on either achieving minimax simple regret bound~\citep{lattimore16causal} or characterization of how simple regret is different from the cumulative regret~\citep{bubeck2009pure}.
It is not known whether simple algorithms like Sequential Halving (SH)~\citep{karnin2013almost} enjoy the optimal simple regret or not.

In this paper, we revisit simple regret and make two main contributions.
First, we provide a novel and tight analysis of Sequential Halving (SH), one of the state-of-the-art algorithms for pure exploration.
We analyze a strong performance criterion that we call \textit{$\eps$-error probability}:
\begin{align}\label{eq:def-eps-error}
  \pp{\mu_1 - \mu_{J_T} > \eps}
\end{align}
where $\epsilon$ is not given to the algorithm as input.
This is precisely characterizing the distribution of $\mu_{J_T}$, which was mentioned as an interesting direction of research in~\citet[Section 33.4 Note 9]{lattimore2020bandit}.
If an algorithm $\cA$ does not take $\epsilon$ as input yet has a guarantee for any $\eps$, we say $\cA$ enjoys a  \textit{uniform $\eps$-error  probability bound}. 
The $\eps$-error probability is stronger than simple regret since 
\begin{align}\label{eq:simpleregret-identity}
  \SReg_T = \EE[\mu_1 - \mu_{J_T}] = \int_0^\infty \PP(\mu_1 - \mu_{J_T} > \eps) \dif \eps~.
\end{align}
Thus, a uniform $\eps$-error probability bound implies a simple regret bound.

We call an arm $i$ to be $\eps$-good if $\mu_{i} \ge \mu_1 - \eps$.
Let $\Delta_i := \mu_1 - \mu_i$ be the gap of arm $i$ and $g\rbr{\eps}:=\abr{\cbr{i\in [n] \mid \mu_i\ge\mu_1-\eps}}$ be the number of $\eps$-good arms.
We prove that SH~\citep{karnin2013almost} has the following bound for any $\eps>0$ (Theorem~\ref{them:ins_dep}):
\begin{align}\label{eq:guarantee}
    \pp{\mu_{J_T}<\mu_1-\eps}\le\ep{-\hTT{\fr{T}{H_2(\eps)}}},
\end{align}
where $\hTT{\cd}$ hides logarithmic factors and
\begin{align*}
    H_2(\eps):=\frac{1}{g(\eps/2)}\max_{i\ge g(\eps)+1}\frac{i}{\Delta_i^2}~.
\end{align*} 
is the sample complexity function.
By setting $\eps = 0$, the complexity becomes $H_2(0) = \max_{i\ge2} i \Delta_i^{-2} =: H_2$, so our bound is never worse than the original bound of SH presented in~\citet{karnin2013almost}:  $\PP(\mu_{J_T} \neq \mu_1) \le \exp(-\tilde\Theta(\frac{T}{H_2}))$. 
Furthermore, our bound essentially matches the lower bound presented in~\citet{Katz20}.
We also derive a minimax-style bound on the $\eps$-error probability, which can be easily integrated out over $\eps$ to obtain $\SReg_T = \tilde O(\sqrt{n/T})$, and further show that a variant of SH results in removing logarithmic factors from the bound and achieve $\SReg_T = O(\sqrt{n/T})$.
Finally, we derive an anytime version of SH that we call Doubling SH (DSH), which enjoys the same guarantees without requiring $T$ as input.
We present details of these claims in Section~\ref{sec-sh}.

As the second contribution, we propose an improved pure exploration algorithm for the data-poor regime.
This is motivated by the fact that the guarantee~\eqref{eq:guarantee} becomes vacuous in the data-poor regime ($T \le \max_{i\ge g(\eps)+1} i \Delta_i^{-2}$ to be precise) as it does not even allow us to pull every arm once.
Inspired by~\citet{Katz20}, we propose Bracketing SH (BSH) that progressively subsamples a larger set of arms (i.e., brackets) for which we invoke SH to find a good arm.
Our analysis shows that BSH enjoys a bound that is essentially the same as~\eqref{eq:guarantee} but works even for $T \le \max_{i\ge g(\eps)+1} i \Delta_i^{-2}$. 
Compared to BUCB~\citep{Katz20}, the state-of-the-art algorithm for the data-poor regime, BSH achieves three improvements: (i) BSH is parameter-free whereas BUCB requires the target error rate $\delta$ as input, (ii) BSH has a high probability sample complexity bound as opposed to the expected sample complexity bound of BUCB, (iii) BSH shows a much stronger bound than BUCB. 
We provide details on BSH in Section~\ref{sec-bsh}.

\begin{figure}[t]
\centering
\includegraphics[width=\linewidth]{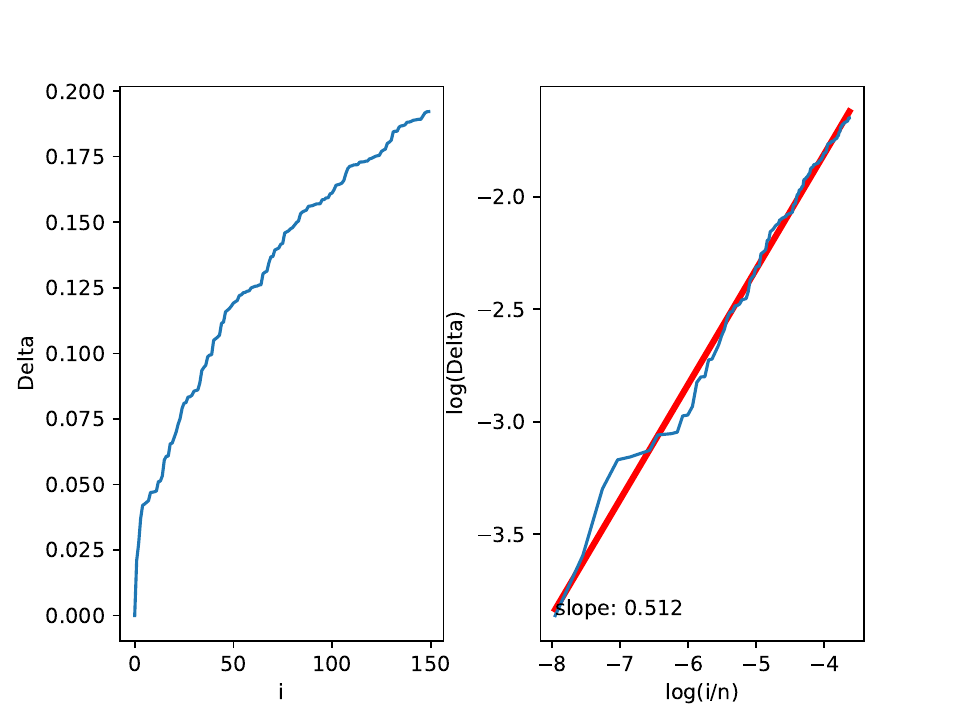}
\caption{The polynomial fitting for the best $150$ arms of the New Yorker Cartoon Caption contest 788. }
\label{fig-polyfit}
\end{figure}
We compare our bounds with existing bounds in various setups in Table~\ref{tab:summary}.
For ease of comparison, we consider special instances called Polynomial$(\alpha)$ and EqualGap$(m)$.
Polynomial$(\alpha)$ is an instance with gap structure $\Delta_i=\rbr{\fr{i}{n}}^\alpha$ for $2\le i \le n$, where $\alpha>0.5$~\citep{jamieson2013finding}, and EqualGap$(m)$ is an instance that has $m-1$ arms with the gap $\eps/4$ and $n-m$ arms with the gap $\fr54 \eps$.
Both $\alpha$ and $m$ have the role of adjusting the number of good arms.
Our bound for BSH matches the lower bounds for the first time to our knowledge, thereby establishing the optimality for these two instances. Note that many real-world instances follow Polynomial$(\alpha)$; e.g., see Figure~\ref{fig-polyfit}.
One interesting aspect of Polynomial($\alpha$) is that, although $\alpha$ affects the fraction of good arms, the optimal rate $1/\eps^2$ is independent of $\alpha$.
This is because the hardness of returning an $\eps$-good arm depends not just on how many good arms we have but also on how far they are from bad arms as we explain in Section \ref{sec-sh}.

Our result also resolves the open problem of having a factor of $\ln^2(1/\dt)$ rather than $\ln(1/\dt)$ when obtaining accelerated sample complexities under the presence of multiple good arms.
Such an issue appeared repeatedly in the literature; e.g., \citet{chaudhuri2019pac}, \citet[Section E]{Katz20}, and \citet[Section 5]{aziz2018pure}.
They all rely on the subsampling device or its variants.
Our improved bound can be attributed to the fact that an accelerated sample complexity can be obtained even \textit{without the subsampling device} -- we are the first to achieve such a result, to our knowledge. 

We evaluate our algorithm on the cartoon caption contest dataset in Section~\ref{sec-expr} and conclude our paper with future work in Section~\ref{sec-future}.
Due to space constraints, we have the related work section in the appendix, but directly-related works are discussed throughout the paper.
All the proofs are deferred to the appendix.

\section{Preliminaries}
\label{sec-prelim}

We focus on the fixed budget and anytime setting in pure exploration.
In the \textit{fixed budget} setting~\citep{audibert10}, the learner is given a budget $T \ge 1$ as input and is required to return an estimated best arm $J_T \in [n]$ at the end of $T$-th time step. 
In the \textit{anytime} setting, there is no prescribed $T$, and the learner is required to output $J_t \in [n]$ for every time step $t\ge1$.
In these settings, unlike the fixed confidence setting, the learner is not required to verify the quality of the returned arm $J_T$.
We collectively call these settings the \textit{unverifiable} setting and refer to the fixed confidence setting as the \textit{verifiable} setting.
Note that for any verifiable setting, one can consider an unverifiable version.
For example, the verifiable $(m,\eps)$-good arm identification problem~\citep{chaudhuri2017pac} requires that the learner, given $(m,\eps,\dt)$, identify an arm $i$ that is $(m,\eps)$-good (i.e., $\mu_i\ge\mu_m-\eps$) w.p. at least $1-\dt$. 
One can consider the unverifiable $(m,\eps)$-good arm identification problem where the goal is to minimize $\PP(\mu_{J_T}\ge\mu_m-\eps)$ after $T$ time steps.

\textbf{Sample Complexity.}
When comparing bounds for simple regret or $\eps$-error probability, it is often easier to consider the sample complexity, which measures the number of samples required to achieve the target performance level.
For example, the sample complexity for the $\eps$-error probability is the smallest time step $\tau$ such that $\forall t\ge \tau, \pp{\mu_1 - \mu_{t} > \eps} \le \dt$.

\textbf{Notations.~}
Throughout the paper, ``const'' is a universal and positive constant, which may have different values for different expressions. 
Both $\tilde\Theta(\cd)$ and $\tilde \cO(\cd)$ omits logarithmic factors on $m$, $n$, but not $1/\dt$.

\section{Improved Analyses of Sequential Halving}
\label{sec-sh}

Sequential Halving (SH)~\citep{karnin2013almost} is a simple and parameter-free algorithm that has had an outsize impact on pure exploration since its introduction. Many recent practical algorithms for pure exploration \citep{aziz2022identifying, li18hyperband, li20asystem, de2021bandits} make use of this algorithm, either with slight modifications or as a subroutine. These works have demonstrated that the algorithm's practical and theoretical significance is greater than its original presentation. 

We present SH in Algorithm~\ref{Alg:SH}.
SH consists of $\lceil \log_2(n) \rceil$ stages.
In each stage $\ell$, the algorithm performs a uniform sampling over the surviving arms $S_{\ell}$ followed by eliminating the bottom half of $S_\ell$ w.r.t. empirical means and sets $S_{\ell+1}$ as the resulting arm set.

SH is known to achieve a nearly optimal instance-dependent bound on the $(\eps=0)$-error probability, which we detail in Table~\ref{tab:sh} in the appendix. 
However, it is not known if SH is a good algorithm for returning an $\eps$-good arm.
For $\eps>0$, though in the fixed confidence setting, Median Elimination (ME) achieves the sample complexity of $\frac{n}{\eps^2} \log\rbr{\fr{1}{\delta}}$~\citep{even2006action}.
Does SH achieve a similar sample complexity for $\eps$-error probability?
Despite the similarity between SH and ME, the answer is not clear because the sampling scheme of SH is different from ME. 
In light of the ubiquity of this algorithm and recent empirical success, 
a stronger characterization of the performance of SH is of great interest to the community.

In this section, via a powerful analysis of the $\eps$-error probability incurred by SH, we show that this algorithm achieves an optimal bound for the unverifiable $(m,\eps)$-good arm identification problem that further implies a minimax optimal simple regret bound~\cite[Exercise 33.1]{lattimore2020bandit}. 
We also provide a near-optimal instance-dependent uniform $\eps$-error probability bound for the first time.

Our main theorem shows the following instance-dependent uniform $\eps$-error probability bound for SH.
\begin{theorem}
\label{them:ins_dep}
For any $\eps\in(0,1)$, the $\eps$-error probability of SH satisfies
\begin{align*}
    \pp{\mu_{J_T}<\mu_1-\eps}\le\ep{-\hTT{\fr{T}{H_2(\eps)}}},
\end{align*}
for $T\ge\hTT{\max_{i\ge g(\eps)+1}\frac{i}{\Delta_i^2}}$.
\end{theorem}
The proof is in Appendix \ref{app-sec-th1}. 
Note the requirement of $T\ge\hTT{\max_{i\ge g(\eps)+1}\frac{i}{\Delta_i^2}}$ is not a weakness of our result. One can easily see that if $\Delta_i = \eps, \forall i\ge 2$, the standard result of SH~\citep{karnin2013almost} becomes vacuous for $T = o(\frac{n}{\eps^2})$.
We stress that, unlike all existing work we are aware of, Theorem~\ref{them:ins_dep} bounds the $\eps$-error probability for an algorithm that does not require $\eps$ as input. That is to say, $\eps$ is a parameter for analysis only. 
This upper bound of $\eps$-error probability implies a sample complexity of $\tilde O(H_2(\eps)\ln(1/\dt))$.

\textbf{Polynomial gap instances.}
The class of polynomial gap instances was introduced by \citet{jamieson2013finding}, who referred to them as `$\alpha$-parameterized' instances. The authors studied these instances in the best-arm identification setting, showing that for $\alpha > 1/2$, the sample complexity $H=\sum_{i=2}^n \Delta_i^{-2}$ scales like $\Omega \rbr{n^{2\alpha}}$, with the increase in sample complexity corresponding to the growing number of close-to-optimal arms as $\alpha$ increases. In one of the highlight results of this work, we show that when $\eps>0$ is not pathologically small, SH returns $\eps$-good arms on these instances with an optimal sample complexity that is independent of the number of arms $n$. 

\begin{corollary} 
\label{cor:poly}
For Polynomial$(\alpha)$ with $\alpha>\fr12$, we have
\begin{align*}
H_2(\eps)=\frac{1}{g\rbr{\fr{\eps}{2}}} \max_{i\ge g(\eps)+1}\frac{i}{\Delta_i^2}<\frac{4}{\eps^2}.
\end{align*}
\end{corollary}
The proof is in Appendix \ref{app-sec-poly}.
While this result initially seems surprising, the lack of dependence on $n$ stems from the ratio between the number of `well-separated' $\eps$-good arms and the sample complexity of filtering out $\eps$-bad arms (i.e., non-$\eps$-good arms) on these instances. 
As $n$ increases, due to the polynomial gap structure, the number of well-separated $\eps$-good arms (e.g., the collection of $\eps/2$-good arms) increases linearly with $n$, which balances the linear growth in $n$ of the sum over the $\eps$-bad arm gaps $\sum_{i=g(\eps)+1}^n \Delta_i^{-2}$. 
This second term is effectively the sample complexity of discarding all $\eps$-bad arms. 
Since we only require that only one of the good arms to be returned, we get to divide $\sum_{i=g(\eps)+1}^n \Delta_i^{-2}$ by $g(\eps/2)$.

\textbf{Lower bounds.}
We compare our upper bound in Theorem~\ref{them:ins_dep} with lower bounds.
For brevity, we present an informal version here and defer the full version to Appendix~\ref{app-sec-lb} along with a  slightly stronger version.
\begin{theorem}(Informal version of \citet[Theorem 1]{Katz20})\label{thm:katz_lb}
    For any algorithm, there exists an instance where the expected unverifiable sample complexity for returning an $\eps$-good arm with probability greater than 15/16 satisfies
    \begin{align*}
        \Omega\del{H^{\text{low}}(\eps) := {\frac{1}{g(\eps)}\sum_{i=g(\eps)+1}^{n}\rbr{\frac{1}{\Delta_i^2}}-\frac{1}{\Delta_{g(\eps)+1}^2}}}.
    \end{align*}
\end{theorem}

\begin{corollary}[Comparison with previous lower bound] 
\label{cor:kjlow}
In the case that $g(\eps)$ and $g\rbr{\frac{\eps}{2}}$ have the same order ($\fr{g\rbr{\fr{\eps}{2}}}{g\rbr{\eps}}$ is irrelevant to $\eps$), the sample complexity measure $H_2(\eps)$ satisfies
\begin{align*}
    H^{\text{low}}(\eps)\lesssim H_2(\eps)\lesssim H^{\text{low}}(\eps) +\frac{2}{\Delta_{g(\eps)+1}^2}
\end{align*}
where $\lesssim$ hides constants and logarithms of $n$ and $1/\dt$.
\end{corollary}
The sample complexity implied by Theorem~\ref{thm:katz_lb} for returning an $\eps$-good arm with probability $1-\delta$, with $\delta \in (0,1/16)$, is $\Omega\rbr{H^{\text{low}}(\eps)}$, which does not have the factor $\ln(1/\dt)$.
We contribute a new lower bound that suggests a sample complexity of $\Omega\rbr{\tilde{H}(\eps)\log(1/\dt)}$, with a complexity term $\tilde{H}(\eps)$ that we describe next, and with which we show that Theorem~\ref{them:ins_dep} is essentially tight on polynomial gap instances with $\alpha > 1/2$.

For an $n$-armed bandit instance $\nu$, let
\begin{equation*}
    \tilde{H}(\nu, \eps) = \sum_{i=2}^{n/g(\eps)} \Delta_{i\cd g(\eps)}^{-2},
\end{equation*}
where for simplicity we assume that $n$ is an integer multiple of $g(\eps)$.

\begin{theorem}\label{thm:gen_lb}
  Fix $n>2$, $\eps>0$ and an $n$-armed unit-variance Gaussian instance $\nu$. With $a \coloneq \tH(\nu, \eps)$ we define $\cB_n(a,\eps)$, the collection of $n$-armed bandit instances with unit-variance gaussian arms for which $\tilde{H}(\nu', \eps) \le a$ for each $\nu'\in \cB_n(a,\eps)$. Then for any algorithm $\pi$ and sample budget $T \in \mathbb{N}$,
  \begin{align*}
    \sup_{\nu' \in \cB_n(a,\eps)} &\PP_{\nu',\pi}\rbr{\mu_{J_T} < \mu_1 - \eps}\\
    &\ge \max\set{\frac{1}{2}, \frac{1}{36} \exp\rbr{-24T/a}}.
  \end{align*}
\end{theorem}

    The complexity term $\tilde{H}(\nu, \eps)$ plays the same role in our result as that of $H(\nu) = \sum_{i=2}^n \Delta_i^{-2}$ in \citet[Theorem 1]{Carpentier2016}. The proof of Theorem~\ref{thm:gen_lb} can be found in Appendix \ref{app-sec-genlb}.

\begin{lemma}\label{lem:poly_lb}
Consider a Polynomial($\alpha$) instances with fixed $\alpha > 1/2$. Then for any $\eps \le 1/2$, $\tilde{H} \ge \frac{c(\alpha)}{\eps^2}$ where $c(\alpha)$ is a problem-dependent constant which does not depend on $\eps$.
\end{lemma}
The proof is in Appendix~\ref{app-sec-polylb}.
Combining Corollary~\ref{cor:poly} and Lemma~\ref{lem:poly_lb}, we see that Theorem~\ref{them:ins_dep} is optimal up to logarithmic factors in the sample complexity for polynomial gap instances with $\alpha > 1/2$.

\subsection{Unverifiable \texorpdfstring{$(m,\eps)$}--Good Arm Identification}
We show the error probability of SH for unverifiably identifying an $(m,\eps)$-good arm (defined in Section~\ref{sec-prelim}) and discuss its implications.

\begin{algorithm}[t]
\label{Alg:SH}
\caption{Sequential Halving (SH)}
\textbf{Input:} budget: $T$, arms: $[n]$\\
\textbf{Initialize:} $S_1=[n]$\\

\For{$\ell=1,\dots,\ur{\log_2 n}$}{ 
    Sample each arm $i\in S_\ell$ for $\dsym{T_\ell}$ times where 
    \begin{align*}
       T_\ell=\dr{\fr{T}{|S_\ell|\ur{\log_2 n}}} 
    \end{align*}
    Let $S_{\ell+1}$ be the set of $\ur{S_{\ell}/2}$ arms in $S_{\ell}$ with the largest empirical rewards.
}
Set $J_T$ as the only arm in $S_{\ur{\log_2 n}}$.\\
\textbf{Output:} $J_T$
\end{algorithm}

\begin{theorem}
\label{them:minmax}
For any $m\le n$ and any $\eps\in(0,1)$, the error probability of SH for identifying an $(m,\eps)$-good arm satisfies
\begin{align*}
    &\pp{\mu_{J_T}<\mu_m-\eps}\le\\
    &\log_2n\cd\ep{-\textup{const}\cd m\cd\rbr{\fr{\eps^2T}{4n\log_2^2(2m)\log_2n}-\ln(4e)}}.
\end{align*}
Thus, there exist an absolute constant $c_1>0$ s.t. if $T\ge c_1\fr{\rbr{\ln(4e)+\ln\ln n}n\log_2^2(2m)\log_2n}{\eps^2}=\tilde{\Theta}\rbr{\fr{n}{\eps^2}}$, we have 
\begin{align*}
\pp{\mu_{J_T}<\mu_m-\eps}\le\ep{-\tilde{\Theta}\rbr{m\fr{\eps^2T}{n}}}.
\end{align*}
\end{theorem}
The proof is in Appendix \ref{app-sec-th2}.

\begin{remark}
Note that our bound implies a \textit{verifiable} sample complexity bound for the $(m,\eps)$-good arm identification setup as well because, given $(m,\eps,\delta)$, one can work out the sufficient samples size $T'$ to control the RHS of Theorem~\ref{them:minmax} to be at most $\dt$ and then run SH with  $T'$.
\end{remark}

\textbf{Implications for $m=1$.~}
The worst-case upper bound of Theorem~\ref{them:minmax} for $m=1$ corresponds to the sample complexity of $\hO{\fr{n}{\eps^2}\log\fr{1}{\delta}}$. 
This is the same as the classic lower bound result of the $(\eps,\delta)$-PAC problem \citep{mannor2004sample} up to logarithmic factors.
While numerous algorithms achieve a matching upper bound including \citet{even2006action} and \citet{hassidim2020optimal}, our result in Theorem~\ref{them:ins_dep} indicates a tighter bound when there are many good arms.

We now apply~\eqref{eq:simpleregret-identity} to convert a uniform $\eps$-error probability bound into a simple regret bound.
\begin{corollary}
\label{cor-2}
SH satisfies  $ \SReg_T\le\hO{\sqrt{\fr{n}{T}}}$.
\end{corollary}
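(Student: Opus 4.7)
}
The plan is to apply the standard layer-cake identity $\EE[X]=\int_0^\infty \PP(X>\eps)\dif\eps$ to the nonnegative random variable $\mu_1-\mu_{J_T}$, and then plug in the uniform $\eps$-error bound from Corollary \ref{cor-3new} with $m=1$. The key point, emphasized in Section~\ref{sec-prelim}, is that $\eps$ is only an analysis parameter for SH, so the same algorithm run simultaneously upper-bounds $\PP(\mu_1-\mu_{J_T}>\eps)$ for every $\eps>0$.

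Concretely, I would start from
\begin{align*}
   \SReg_T \;=\; \EE[\mu_1-\mu_{J_T}] \;=\; \int_0^\infty \PP(\mu_1-\mu_{J_T}>\eps)\dif\eps,
\end{align*}
and invoke Corollary \ref{cor-3new} with $m=1$ to obtain, for every $\eps>0$,
\begin{align*}
   \PP(\mu_1-\mu_{J_T}>\eps) \;\le\; \min\!\left\{1,\; \log_2 n \cdot \exp\!\left(-\mathrm{const}\cdot\Big(\tfrac{\eps^2 T}{4n\log_2 n}-\ln(4e)\Big)\right)\right\}.
\end{align*}
Then I would split the integral at a threshold $\eps_0=C\sqrt{(n\log n)/T}$ chosen so that the exponential bound just becomes nontrivial (i.e., so that $\eps_0^2 T/(n\log n)$ dominates the additive $\ln(4e)$ and also absorbs the leading $\log_2 n$ prefactor). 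On $[0,\eps_0]$ I bound the probability by $1$, contributing $\eps_0=\tilde\cO(\sqrt{n/T})$. On $[\eps_0,\infty)$ I integrate the Gaussian-type tail
\begin{align*}
    \int_{\eps_0}^\infty \log_2 n\cdot \exp\!\left(-\mathrm{const}\cdot\tfrac{\eps^2 T}{n\log_2 n}\right)\dif\eps
    \;=\; \tilde\cO\!\left(\sqrt{n/T}\right),
\end{align*}
by the standard estimate $\int_a^\infty e^{-c\eps^2}\dif\eps\le \tfrac{1}{2ca}e^{-ca^2}$ applied with $c=\mathrm{const}\cdot T/(n\log_2 n)$ and $a=\eps_0$; the choice of $C$ makes the exponential factor at $\eps=\eps_0$ an absolute constant, leaving only the prefactor $1/(ca)=\tilde\cO(\sqrt{n/T})$. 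Summing the two contributions yields the desired $\SReg_T\le \tilde\cO(\sqrt{n/T})$.

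I do not expect any genuine obstacle: the only mild nuisance is to track the logarithmic factors carefully — specifically, ensuring that the $\log_2 n$ prefactor in Corollary \ref{cor-3new} and the additive $\ln(4e)$ inside the exponent are both swallowed by $\tilde\cO(\cdot)$. A sanity check is that $\mu_1-\mu_{J_T}\in[0,1]$, so the upper limit of integration can be truncated at $1$, confirming that the analysis is never dominated by a far tail. Thus the layer-cake trick, combined with the uniform-in-$\eps$ nature of Corollary \ref{cor-3new}, immediately delivers the minimax-optimal simple regret rate.
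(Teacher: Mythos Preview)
Your proposal is correct and follows essentially the same approach as the paper: apply the layer-cake identity $\SReg_T=\int_0^\infty\PP(\mu_1-\mu_{J_T}>\eps)\dif\eps$ and plug in the uniform $\eps$-error bound from the $m=1$ case, then integrate the resulting Gaussian-type tail. The only cosmetic difference is that the paper does not split the integral at a threshold $\eps_0$; it simply upper-bounds $\PP(\cdot)$ by the exponential everywhere (even where it exceeds $1$) and evaluates $\int_0^\infty e^{-c\eps^2}\dif\eps=\tfrac{1}{2}\sqrt{\pi/c}$ in closed form, which already gives $\tilde\cO(\sqrt{n/T})$ without any tail estimate.
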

The proof is in Appendix \ref{app-sec-sr}.
Note that our bound is minimax optimal up to logarithmic factors~\cite[Exercise 33.1]{lattimore2020bandit}.
To our knowledge, the only algorithm that we are aware of that achieves the minimax simple regret is MOSS~\citep{audibert2009minimax,lattimore16causal}. 
However, as MOSS is designed to minimize cumulative regret, it cannot enjoy low instance-dependent bounds for pure exploration tasks~\citep{bubeck11pure-tcs}.
While uniform sampling can also achieve a near-optimal simple regret, one can show that uniform sampling can be arbitrarily worse than SH w.r.t. the $(\eps=0)$-error probability as its sample assignments over the arms are inherently non-adaptive.
\begin{remark}
In Appendix~\ref{app-sec-1}, we show that one can adjust $T_\ell$ in Algorithm~\ref{Alg:SH} to achieve the minimax optimality up to a constant factor (i.e., no logarithmic factors), but it is not clear if this variant achieves a similar instance-dependent sample complexity as Algorithm~\ref{Alg:SH}.
\end{remark}

\textbf{Implications for $m>1$.~}
We now show that SH enjoys the optimal worst-case guarantee for the unverifiable $(m,\eps)$-good arm identification problem up to logarithmic factors.
Our result is especially attractive since the target number of good arms $m$ is not an input to SH.

\begin{algorithm}[t]
\label{Alg:DSH}
\caption{Doubling Sequential Halving (DSH)}
\textbf{Input} arms: $[n]$\\
\textbf{Initialize} $T_1=\ur{n\log_2n}$, $\hat j = j$ for some arbitrarily chosen $j\in[n]$, $\hat \mu^* = -\infty$. Define the time blocks $\cT_1 = \{1,\ldots, T_1\}$ and $\cT_k = \{T_1 (2^{k-1} - 1) + 1, \ldots, T_1 (2^{k} - 1) \}, \forall k \ge 2$, which satisfies $|\cT_k| = T_1 2^{k-1}$. $k=1$. An instance of SH denoted by $\cS_1$ with budget $|\cT_1|$.\\
\For{$t=1, 2,\dots$}{
    Pull arm $I_t$ according to the recommendation from $\cS_k$\\
    Receive reward $R_t$ and send it to $\cS_k$\\
    \If{$t = \max \cT_k$}{
        Set $(\hat j, \hat\mu^*)$ as the output arm and its empirical mean from the last stage of $\cS_k$\\
        $k \leftarrow k + 1$\\
        Initialize a new instance of SH denoted by $\cS_k$ with budget $|\cT_k|$
    }
    \textbf{Output:} $J_t = \hat j$ and $M_t = \hat\mu^*$.
}
\end{algorithm}

Our upper bound of Theorem~\ref{them:minmax} corresponds to a sample complexity of $\hO{\frac{n}{m\eps^2}\log\rbr{\frac{1}{\delta}}}$, which matches the lower bound in \citet{chaudhuri2017pac}, thereby closing the gap between the upper and lower bounds from previous work; the algorithms therein have suboptimal upper bounds that scale with  $\log^2\rbr{\frac{1}{\delta}}$.
Furthermore, these algorithms all require $(m,\eps,\delta)$ as input, which is natural for obtaining verifiable sample complexities but becomes a limitation in other settings. 
In contrast, Theorem~\ref{them:minmax} allows $(m,\eps)$ to be chosen freely to measure the algorithm's performance since the parameters for measuring the performance are no longer necessary for executing the algorithm.
Therefore, one can rewrite Theorem~\ref{them:minmax} as follows:
\begin{align*}
     \pp{\mu_{J_T}<\mu_1-\eps}\le
     \min_{\cbr{\rbr{m',\eps'}:\Delta_{m'}+\eps'\le\eps}}\ep{-\tilde{\Theta}\rbr{m'\fr{{\eps'}^2T}{n}}} ~.
\end{align*}

\subsection{Anytime version of SH}

To support anytime simple regret minimization, we combine SH with the doubling trick~\citep{jun16anytime}, which we call Doubling SH (DSH). 
Specifically, DSH repeatedly runs SH.
The $k$-th SH receives a doubled budget compared with the $(k-1)$-th SH. Before it finishes $k$-th SH, it always returns the output of $(k-1)$-th SH. 
The pseudocode can be found in {Algorithm \ref{Alg:DSH}}.
The following theorem shows that DSH enjoys the same guarantee as SH up to a constant factor.

\begin{theorem}
\label{them:dsh}
Let $\eps\in(0,1)$.
A single run of DSH satisfies the following $\eps$-error probability 
\begin{align*}
    \pp{\mu_{J_T}<\mu_1-\eps}\le\ep{-\hTT{\fr{T}{H_2(\eps)}}},
\end{align*}
simultaneously for all $T\ge\hTT{\max_{i\ge g(\eps)+1}\frac{i}{\Delta_i^2}}$
\end{theorem}
This is a direct consequence of the fact that at time $t$, the latest finished SH was run with a budget of at least $t/4$. 
The full proof is in Appendix~\ref{app-sec-2}.

\section{Simple Regret in the Data-Poor Regime}
\label{sec-bsh}

Despite the optimality of SH shown in the last section, it requires at least $\hTT{\max_{i\ge g(\eps)+1}\frac{i}{\Delta_i^2}}$ samples, which in the worst case could be $\tilde{\Theta}\rbr{n/\eps^2}$. This requirement is unacceptable when the number of arms $n$ is large (or even infinite) while the fraction of the $\eps$-good arms is kept constant.
This setup is commonly referred to as the data-poor regime, where one would like to return a good arm even before the algorithm samples every arm once.

To cope with the data-poor regime, we take inspiration from BUCB~\citep{Katz20} and propose Bracketing SH (BSH) that enjoys a similar guarantee as DSH but enjoys non-vacuous sample complexity in the data-poor regime.
To understand the bracketing trick, suppose that we uniformly sample a subset of size $n/m$ $(0<m\le n)$ from the entire arm set $[n]$. 
With constant probability, this subset includes at least one of the top $m$ arms. 
By applying any pure exploration algorithm to this subset, we expect to find the best arm within the subset with the sample complexity that scales with $n/m$ rather than $n$.
Such a subset is called a \textit{bracket}. 
Note that there is a natural trade-off here with the bracket size.
If the size is large, we are likely to include many good arms, but the sample complexity of identifying a good arm becomes large because we have a lot of arms to pull at least once.
On the other hand, if the size is small, we are not likely to include any good arm, although the sample complexity of identifying an arm that is good relative to the braacket is small.
As we show later, one can precisely work out such a tradeoff mathematically and find the best bracket size.
The challenge is, however, that the best bracket size typically requires knowledge of the number of good arms.

To avoid requiring such knowledge, BSH adopts the bracketing technique of~\citet{Katz20} that progressively creates a larger and larger bracket size as the time step $t$ gets larger while invoking a base algorithm for each bracket by cycling through them (i.e., Round Robin).
Specifically, BSH uses DSH as the base algorithm.
At each time step, BSH takes in the estimated best arms and their empirical means from all the brackets and outputs the one with the largest empirical mean.

We summarize BSH in Algorithm~\ref{Alg:BSH} where the operator $U\rbr{[n],k}$ samples $k$ items with replacement from $[n]$ (uniformly at random). 
If $k\ge n$, $U([n],k)$ returns $[n]$.
To define terminology, we say a new bracket is \textit{opened} when a new bracket is sampled.
We set the bracket-opening schedule such that the $B$-th bracket is opened at time step $(B-1)\cdot2^{B-1}$. 
The arm pulls between the time step $(B-1)\cdot2^{B-1}$ and $B\cdot2^{B}$ are equally allocated to the opened brackets (total $B$ of them). 
We say an arm \textit{represents} bracket $A_B$ at time step $t$ if it is the output returned by the DSH on bracket $A_B$ at time step $t$.

\begin{algorithm}[t]
\label{Alg:BSH}
\caption{Bracketing SH (BSH)}
\textbf{Input:} arms: $[n]$\\
\textbf{Initialize:} $t=1$, $B=0$, $b_1 = 1$. Define $(I(\cD), J(\cD), M(\cD))$ to be the arm to be pulled next, the current estimated best arm, and its empirical mean from an algorithm $\cD$, respectively.\\
\For{$t=1, 2, 3\dots$}{
    \If{$t \ge B 2^B$}{
        $B \leftarrow B + 1$\\
        Sample a bracket $A_{B}=U\rbr{[n],n\vee2^{B}}$\\
        Initialize a new instance of DSH, denoted by $\cD_B$, with the bracket $A_B$.
    }
    Pull arm $I_t = I(\cD_{b_t})$\\
    Receive a reward $R_t$ from the environment and send $R_t$ to $\cD_{b_t}$\\
    Output $J_t = J(\cD_{\hat b})$ where $\hat b = \arg \max_{b\in[B]} M(\cD_b)$\\
    $b_{t+1} = \begin{cases}
      b_t+1 & \text{if $b_t < B$}\\
      1   & \text{otherwise}
    \end{cases}$  
}
\end{algorithm}

Let us first show the properties of the bracketing technique.
The bracketing design ensures the diversity of the size of opened brackets. 
Specifically, the smallest bracket has $2$ arms, and the largest bracket has $\hTT{t}$ arms. 
Moreover, it also ensures that all the opened brackets have received an (order-wise) equal amount of sampling budget at any time.

We now present the $\eps$-error probability bound of BSH.
\begin{theorem} 
\label{them:datapoor}
Let $\eps\in(0,1)$.
A single run of BSH satisfies the following $\eps$-error probability 
\begin{align*}
    \pp{\mu_1-\mu_{J_t}>\eps}\le\ep{-\hTT{\fr{t}{\max\cbr{H_2\rbr{\fr{\eps}{2}},\frac{1}{\eps^2}}}}},
\end{align*}
simultaneously for all $t\ge\hTT{H_2\rbr{\fr{\eps}{2}}}$.
\end{theorem}
The proof is in Appendix \ref{app-sec-them_datapoor}. 
The key difference between Theorem~\ref{them:dsh} and Theorem \ref{them:datapoor} is the latter does not require the number of samples to be at least $T\ge\hTT{\max_{i\ge g(\eps)+1}\frac{i}{\Delta_i^2}}$.

To show the effectiveness of Theorem \ref{them:datapoor}, we now discuss the implication of our result and compare it with BUCB by~\citet{Katz20} using their performance measure called $(\eps,\delta)$-unverifiable sample complexity.
\begin{definition}[$(\eps,\delta)$-unverifiable sample complexity] \label{def-uver} \citep{Katz20}. For an algorithm $\pi$ and an instance $\rho$. Let $\tau_{\eps,\delta}$ be a stopping time such that
\begin{align*}
    \pp{\forall t\ge\tau_{\eps,\delta}: \mu_{J_t}>\mu_1-\eps}\ge1-\delta.
\end{align*}
Then $\tau_{\eps,\delta}$ is called $(\eps,\delta)$-unverifiable sample complexity of the algorithm with respect to $\rho$.
\end{definition}
The $(\eps,\delta)$-unverifiable sample complexity indicates how many samples are sufficient for an algorithm to begin to output an $\eps$-good arm with high probability. Compared with the verifiable sample
complexity, it does not require the algorithm to verify the output is $\eps$-good. More discussion can be found in \citet{Katz20}. 
While the verbatim requirement of $(\eps,\delta)$-unverifiable sample complexity is stronger than the uniform $\eps$-error probability bound, we show that the latter implies the former under a mild assumption in Appendix~\ref{sec-unverifiable-sc}.

To make explicit comparisons, we turn to specific problem instances and show the $(\eps,\delta)$-unverifiable sample complexity bounds for BUCB \citep[Theorem 7]{Katz20} and BSH. 
\begin{corollary}
\label{cor-5}
Consider the EqualGap$(m)$ instance.
BUCB achieves an expected $(\eps,\delta)$-unverifiable sample complexity as
\begin{align*}
    \ee{\tau_{\eps,\delta}}\le\hO{\fr{n}{\eps^2m}\log\rbr{\fr{1}{\delta}}}. 
\end{align*}
BSH achieves an $(\eps,\delta)$-unverifiable sample complexity as, with probability $1-\delta$,
\begin{align*}
   \tau_{\eps,\delta}\le\hO{\fr{n}{\eps^2m}\log\rbr{\fr{1}{\delta}}}.
\end{align*}
\end{corollary}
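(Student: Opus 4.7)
The plan is to derive both bounds in Corollary~\ref{cor-5} as direct specializations of existing results to the equal-gap instance, and then to handle the conversion from the per-round $\eps$-error probability bound to the $(\eps,\delta)$-unverifiable sample complexity (which is a ``for all $t \ge \tau$'' statement).

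For the BUCB bound, I would simply invoke Theorem~7 of \citet{Katz20} on the equal-gap instance~\eqref{eq:equalgap}. On this instance, $m-1$ arms have gap $\eps/2$ to the best arm, and $n-m$ arms have gap $(3/2)\eps$. The $\eps$-good arms are exactly $\{1,\dots,m\}$, so the fraction of $\eps$-good arms is $m/n$. Plugging these quantities into the instance-dependent expression in their theorem and collapsing the subleading terms yields the stated $\tilde{\cO}(n/(\eps^2 m) \log(1/\delta))$ expected unverifiable sample complexity.

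For the BSH bound, I would apply Theorem~\ref{thm-4}. First I would evaluate $m(\eps/2)$: since $\mu_1 - \eps/2 = \eps = \mu_2 = \cdots = \mu_m$ and $\mu_{m+1} = 0$, we have $m(\eps/2) = m$. Next, for every $i \in [m]$, $\Delta_{i, m(\eps/2)+1} = \mu_i - \mu_{m+1} = \mu_i \ge \eps$, so
\begin{align*}
\max_{i \in [m]} \frac{i\, \Delta_{i,m+1}^2}{n} \;\ge\; \frac{m \eps^2}{n},
\end{align*}
achieved at $i = m$. Because $m \le n$ forces $m\eps^2/n \le \eps^2$, the minimum in Theorem~\ref{thm-4} equals $m\eps^2/n$, so the per-round $\eps$-error probability satisfies
\begin{align*}
\PP(\mu_{J_t} < \mu_1 - \eps) \;\le\; \exp\!\left(-\tilde\Theta\!\left(\frac{m \eps^2}{n} \cdot \frac{t}{\ln t}\right)\right).
\end{align*}

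It remains to convert this into a ``for all $t \ge \tau$'' statement as required by Definition~\ref{def-uver}. The plan is to pick $\tau = C \tfrac{n}{m\eps^2}\log(1/\delta)\cdot \operatorname{polylog}(n,1/\eps,1/\delta)$ and apply a union bound over $t \ge \tau$; because the per-round failure probability decays at least polynomially in $1/t$ (in fact, super-polynomially once the exponent exceeds $\log(1/\delta)$), the tail sum over $t \ge \tau$ is absorbed into the logarithmic factors hidden by $\tilde\cO$. This yields the claimed high-probability $\tilde\cO(n/(m\eps^2)\cdot \log(1/\delta))$ bound.

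The main obstacle I anticipate is bookkeeping in the union-bound step: Theorem~\ref{thm-4} carries hidden logarithmic factors of the form $\operatorname{polylog}(t,n,1/\eps)$ in the exponent, and one must check that the peeling over time steps does not inflate these into a worse-than-logarithmic dependence on $1/\delta$. A clean way to handle this is a geometric peeling argument: group $t \in [2^k, 2^{k+1})$ and bound each group's contribution by its left endpoint's failure probability, then sum the geometric series. Beyond this, all computations are routine substitutions of the instance parameters into the two master theorems.
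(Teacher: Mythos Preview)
Your proposal is correct and essentially matches the paper's approach: for BUCB you cite Theorem~7 of \citet{Katz20} and substitute the equal-gap parameters, and for BSH you evaluate the rate in Theorem~\ref{thm-4} at $i=m$ (using $m(\eps/2)=m$ and $\Delta_{m,m+1}=\eps$) to obtain the $n/(m\eps^2)\log(1/\delta)$ sample complexity. The only cosmetic difference is that the paper invokes the sample-complexity form already stated in Theorem~\ref{thm-4} (with the conversion from per-round error probability to the ``for all $t\ge\tau$'' unverifiable sample complexity handled once, in a separate appendix section, via the same geometric-tail-sum argument you outline), whereas you redo that conversion inline; both routes are the same in substance.
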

The proof is in Appendix \ref{app-sec-bucb_equal}.
In this instance, BUCB and BSH have nearly the same performance guarantees except that BUCB's $(\eps,\delta)$-unverifiable sample complexity is stated as an expected sample complexity.
Interestingly, \citet[Section E]{Katz20} remarks that their attempt to derive a high-probability bound resulted in the factor of $\ln^2(1/\dt)$, which we believe is due to the suboptimal bound (i.e., no accelerated rates) of their choice of the base algorithm. 

Scaling with $n/m$ instead of $n$ reveals the merit of the $(\eps,\delta)$-unverifiable sample complexity that was not achieved by the algorithms designed to achieve verifiable sample complexity such as Median Elimination~\citep{even2006action}.
\begin{corollary}
\label{cor-4}
Consider the Polynomial$(\alpha)$ instance; i.e., $\Delta_i=\rbr{\fr{i}{n}}^\alpha$ with $\alpha>0.5$. For any $\eps\in(0,1)$, let $\hat{\tau}_{\eps,\delta}$ be the upper bound of the expected $(\eps,\delta)$-unverifiable sample complexity reported in \citet[Theorem 7]{Katz20} for BUCB. 
Then, BUCB satisfies
\begin{align*} \ee{\tau_{\eps,\delta}}\le\hat{\tau}_{\eps,\delta}=\hTT{\eps^{-\fr{2\alpha-1}{\alpha}}n\log\rbr{\fr{1}{\delta}}}. 
\end{align*}
On the other hand, BSH satisfies, with probability $1-\delta$,
\begin{align*}
    \tau_{\eps,\delta}\le\hO{\eps^{-2}\log\rbr{\fr{1}{\delta}}}~.
\end{align*}
\end{corollary}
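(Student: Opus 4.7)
The plan is to specialize Theorem~\ref{thm-4} (for BSH) and \citet[Theorem 7]{Katz20} (for BUCB) to the polynomial instance $\Delta_i = (i/n)^\alpha$. The first step is to compute $m(\eps/2)$: arm $i$ is $(\eps/2)$-good iff $(i/n)^\alpha \le \eps/2$, so $M := m(\eps/2) = \Theta_\alpha(n\eps^{1/\alpha})$ and $((M+1)/n)^\alpha = \Theta(\eps)$. This identifies the ``good-arm'' boundary and will be the pivot for the later calculations.

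\textbf{BSH side.} Applying Theorem~\ref{thm-4} requires evaluating
\[
\max_{i\in[M]} \frac{i\,\Delta_{i,M+1}^2}{n} \;=\; \max_{i\in[M]} \frac{i}{n}\left(\left(\tfrac{M+1}{n}\right)^{\alpha}-\left(\tfrac{i}{n}\right)^{\alpha}\right)^{\!2}.
\]
Substituting $x = i/n$ reduces this to maximizing $f(x)=x(\eps/2-x^\alpha)^2$ on $[0,(\eps/2)^{1/\alpha}]$. Setting $f'(x) = (\eps/2-x^\alpha)(\eps/2-(1+2\alpha)x^\alpha) = 0$ yields the interior maximizer $(x^\star)^\alpha = \frac{\eps/2}{1+2\alpha}$, giving $f(x^\star) = \Theta_\alpha(\eps^{1/\alpha}\cdot \eps^2) = \Theta_\alpha(\eps^{2+1/\alpha})$. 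Since $\eps\le 1$ and $1/\alpha>0$, we have $\eps^{2+1/\alpha}\le \eps^2$, so $\min\{\max_i\frac{i\Delta_{i,M+1}^2}{n},\eps^2\} = \Theta_\alpha(\eps^{2+1/\alpha})$. Plugging this into Theorem~\ref{thm-4} gives an $\eps$-error probability bound of $\exp(-\tilde\Theta(\eps^{2+1/\alpha}\,t/\ln t))$ at each $t$. A standard union bound over a geometric grid of times (the conversion from $\eps$-error probability of an anytime algorithm to unverifiable sample complexity, discussed in the appendix) then yields, with probability $1-\delta$, $\tau_{\eps,\delta} \le \tilde\cO(\eps^{-(2\alpha+1)/\alpha}\log(1/\delta))$, as claimed.

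\textbf{BUCB side.} For BUCB, I would substitute $\Delta_i = (i/n)^\alpha$ directly into the expected-sample-complexity formula $\hat\tau_{\eps,\delta}$ of \citet[Theorem 7]{Katz20}, whose dominant term is (up to logs) $\sum_{i=1}^n \max(\Delta_i,\eps)^{-2}\log(1/\delta)$. Splitting the sum at $i = M$, the head contributes $M/\eps^2 = \Theta_\alpha(n\eps^{1/\alpha-2})$; the tail, by an integral comparison, satisfies $\sum_{i>M} (i/n)^{-2\alpha} \le n^{2\alpha}\int_M^n x^{-2\alpha}\,dx = \Theta_\alpha(n\eps^{1/\alpha-2})$, which follows by evaluating the antiderivative at both endpoints and noting that the $M$-endpoint dominates. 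Summing the head and tail gives $\hat\tau_{\eps,\delta} = \tilde\Theta(n\eps^{-(2\alpha-1)/\alpha}\log(1/\delta))$.

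\textbf{Main obstacle.} The only non-routine step is the one-variable calculus maximization $\max_x x(\eps/2-x^\alpha)^2$ for BSH; everything else is either direct substitution or a sum-versus-integral comparison. The subtlety to watch is tracking constants that depend on $\alpha$ and handling the boundary case $\alpha \approx 1/2$ in the BUCB tail integral, where the antiderivative transitions between power-law and logarithmic regimes. The conversion from Theorem~\ref{thm-4}'s pointwise error-probability bound to the high-probability unverifiable sample complexity is standard and poses no real difficulty.
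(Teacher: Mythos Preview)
Your BSH analysis is correct and essentially matches the paper's: you find the optimal index $i$ by calculus, while the paper simply plugs in $i=m(\eps/2)/2$, but both yield $\max_i \tfrac{i\Delta_{i,M+1}^2}{n}=\Theta_\alpha(\eps^{2+1/\alpha})$ and the rest follows.

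Your BUCB analysis, however, has a genuine gap. The quantity $\hat\tau_{\eps,\delta}$ is \emph{defined} as the specific bound of \citet[Theorem~7]{Katz20}, which (as quoted in the paper's proof) is
\[
\hat\tau_{\eps,\delta}=\min_{j\in[m(\eps)]}\frac{1}{j}\Bigl(\sum_{i=1}^{m(\eps)}\Delta_{i\vee j,\,m(\eps)+1}^{-2}\ln\tfrac{n}{j\delta}+\sum_{i=m(\eps)+1}^{n}\Delta_{j,i}^{-2}\ln\tfrac{1}{\delta}\Bigr).
\]
This is not $\sum_{i}\max(\Delta_i,\eps)^{-2}$. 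The gaps appearing are \emph{pairwise} gaps $\Delta_{i\vee j,\,m(\eps)+1}$ and $\Delta_{j,i}$, and for $i$ near $m(\eps)$ the term $\Delta_{i,m(\eps)+1}^{-2}$ is of order $n^{2\alpha}/m(\eps)^{2\alpha-2}$, not $\eps^{-2}$; there is no a priori reason the two expressions should agree even at the level of orders. Your head/tail split therefore computes the wrong object.

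Moreover, the claim is a two-sided $\tilde\Theta$, so you must both upper- and lower-bound $\hat\tau_{\eps,\delta}$. The paper obtains the lower bound by discarding the second sum and keeping only the single term $i=m(\eps)$, using $(m+1)^\alpha-m^\alpha\le\alpha(m+1)^{\alpha-1}$; the upper bound requires a non-obvious choice $j=m(\eps)-\sqrt{m(\eps)}$ to tame the near-boundary terms (a naive choice such as $j=1$ or $j=m(\eps)$ blows up). Your proposed integral comparison does not address this minimization over $j$, which is the actual work in the BUCB half of the corollary.
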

The proof is in Appendix \ref{app-sec-bucb_poly}.
The $(\eps,\delta)$-unverifiable sample complexity of BSH for the Polynomial$(\alpha)$ instance does not scale with the the number of arms $n$ polynomially, unlike BUCB. 
For large-scale instances, BSH provides a much stronger guarantee than BUCB. 
Even when $n$ is not too large, as long as $\Delta_2\le\eps$ (which implies $\eps^{\frac{1}{\alpha}}\le n$), BSH still has a better sample complexity bound than BUCB.

\section{Experiments}
\label{sec-expr}
We test Bracketing SH on a real-world dataset called The New Yorker Cartoon Caption Contest~\citep{jain2020new}.
For each cartoon, the editors of The New Yorker collect the evaluation score of $n$ captions from the participants.
Specifically, upon arrival of a participant, the algorithm sequentially shows a number of captions and receives the evaluation score of ``unfunny'', ``somewhat funny'', or ``funny''. 
Following \citet{Katz20}, we use the proportion of times the score was ``somewhat funny'' or ``funny'' as the ground truth mean reward for each caption.
We then set the reward distribution as the Bernoulli.
We choose contests 780, 781, and 782, which contain 6509 arms, 5969 arms, and 4389 arms, respectively. 
As we are especially interested in the data-poor regime, we report the performance of the algorithms up to time step 10,000, which is only about twice larger than the number of arms in our dataset.
We ran BSH and BUCB with various $\delta$ choices to see the best version of BUCB, which was repeated 500 times with a different random seed.
We used a desktop with AMD Ryzen 5 PRO 4650GE CPU and 16GB RAM to conduct the experiment, which took two hours to produce each plot. 
We summarize the results in Figure \ref{fig-1}, where BSH is a clear winner over BUCB.
Specifically, in contest 781, at time step $4,000$, Bracketing UCB achieves simple regret about $0.18$ for $\delta=0.2$, while BSH achieves simple regret about $0.11$, amounting to $39\%$ improvement.
We explain more on the implementation in Appendix~\ref{sec-discussion-implementation}.

\begin{figure}[t]
\centering
\minipage{0.32\textwidth}
  \includegraphics[width=\linewidth]{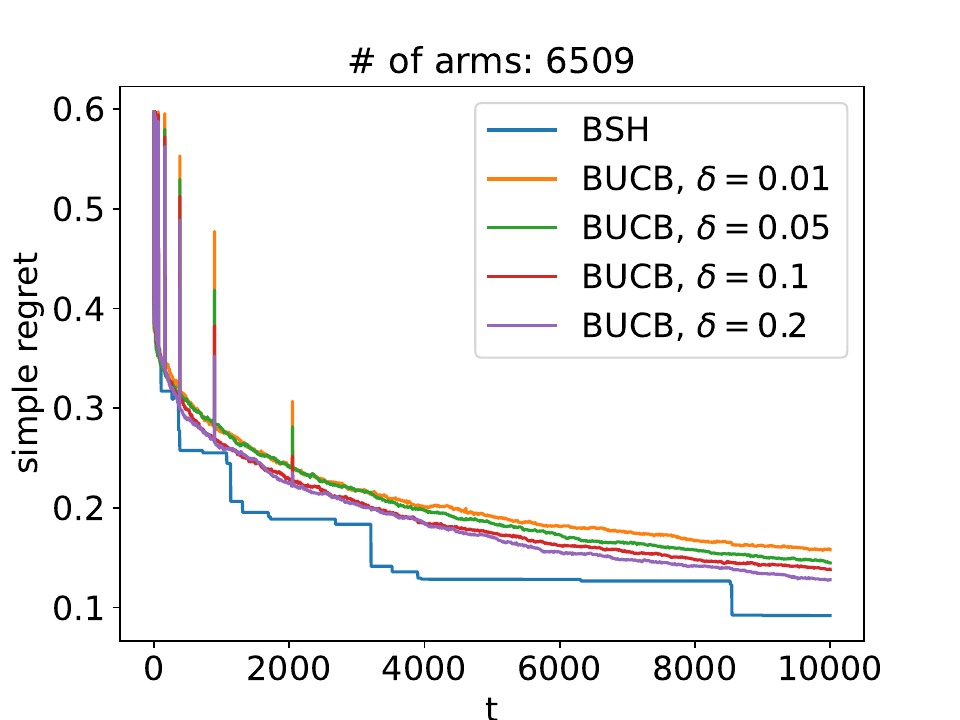}
\endminipage\hfill
\minipage{0.32\textwidth}
  \includegraphics[width=\linewidth]{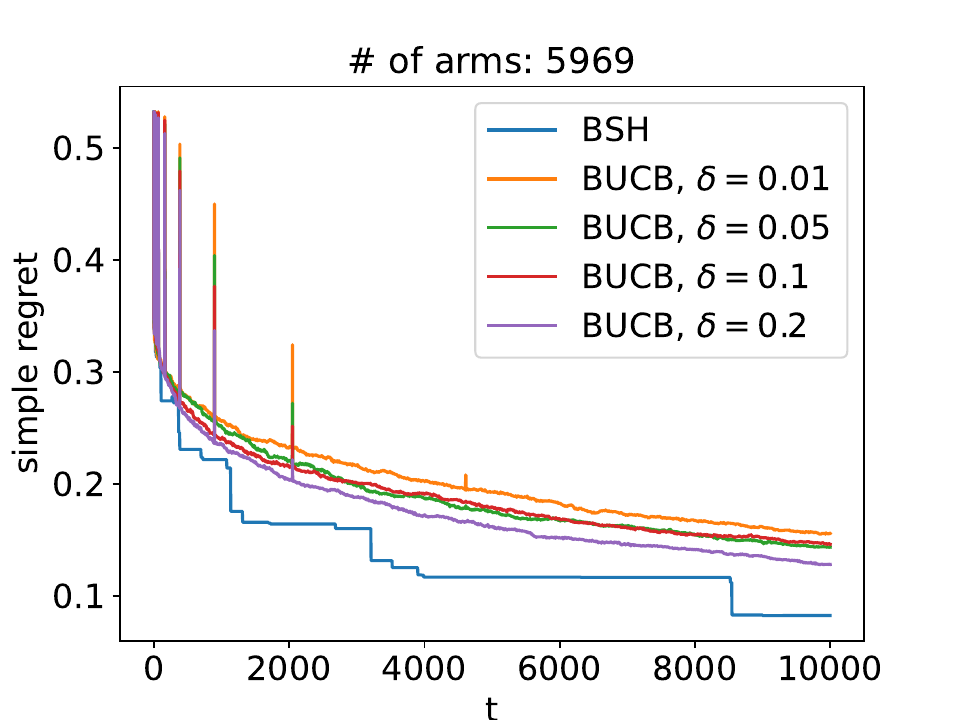}
\endminipage\hfill
\minipage{0.32\textwidth}%
  \includegraphics[width=\linewidth]{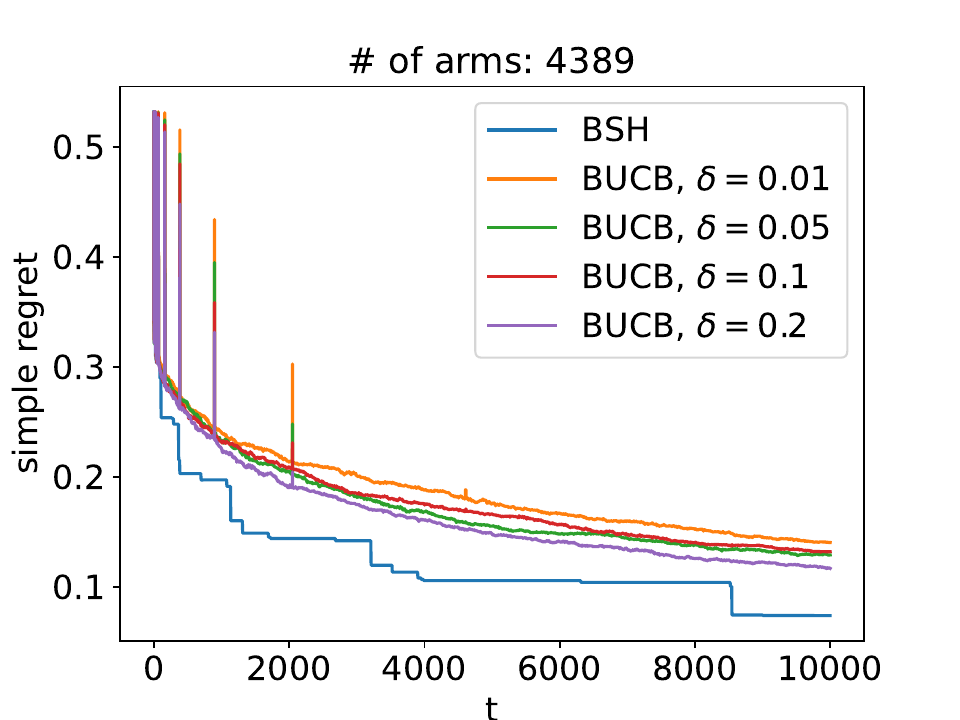}
\endminipage
\caption{The simple regret comparison between Bracketing UCB and Bracketing SH for the New Yorker Cartoon Caption contest 780, 781, 782. }
\label{fig-1}
\end{figure}

\section{Conclusion}
\label{sec-future}

Obtaining a uniform $\eps$-error probability bound is precisely a way to characterize the distribution function of $\Delta_{J_T}$ induced by a particular algorithm, based on which we believe that a uniform $\eps$-error probability bound is a fundamental quantity for any measures for returning an $\eps$-good arm.

As such, we believe our paper opens up numerous exciting open problems. 
First, SH does not achieve an optimal worst-case sample complexity of $\Theta(\frac{n}{\eps^2}\ln(1/\dt))$ (due to logarithmic factors).
While adjusting its sample allocation scheme does achieve the optimal rate as we report in the appendix, it does not seem to achieve the usual instance-dependent sample complexity of $\tilde\Theta(\sum_i (\eps \vee \Delta_i)^{-2})$ achieved by many existing algorithms.
We wonder if a modified SH or other algorithms can simultaneously be optimal for both sample complexity measures.
Second, another practical pure exploration algorithm is track-and-stop by~\citet{garivier16optimal} and their variants.
They are also parameter-free if we only take its sampling strategy and not the stopping strategy.
It would be interesting to investigate if track-and-stop achieves similar near-optimal guarantees as DSH.

\subsection*{Acknowledgements}

We thank Jaehyeok Shin for providing help in the initial development of the improved bound for SH. %

\putbib[library-shared,ref]
\end{bibunit}


%
%
%
%
%





\clearpage


%
  

\newpage
\appendix
\onecolumn

\part{Appendix}
\parttoc
\begin{bibunit}[plainnat]

\section{Related work}

\textbf{Pure exploration.} Pure exploration has a broad position in several closely related research directions, including multi-armed bandits \citep{audibert10,karnin2013almost,Carpentier2016,aziz2018pure,karnin2013almost,jamieson2014,garivier2016,chen2017towards,Simchowitz2017a,hassidim2020optimal} and reinforcement learning \citep{even2006action,azar2017minimax,kaufmann2021adaptive}. 
Even more, the Monte-Carlo tree search with tree depth $1$ is also a pure exploration problem \citep{kocsis2006bandit}. 
Even if we only consider multi-armed bandits, there are various types of problems that have been formulated. 
Our investigation focuses on the standard $K$-armed bandits model. But we note here pure exploration has also been studied in structured bandits like kernel bandits \citep{camilleri2021high} and linear bandits \citep{soare2014best,zhu2022near}.

Started from \citet{even2006action,mannor2004sample}, pure exploration in $K$-armed bandits is studied with the celebrated $(\eps,\delta)$-PAC framework. 
\citet{mannor2004sample} show a lower bound, which matches the upper bound of Median Elimination in \citet{even2006action}. 
The recent work by \citet{hassidim2020optimal} further shows
that it is possible to achieve $\fr{n}{2\eps^2}\log\fr{1}{\delta}$ asymptotically, where the constant factor of $1/2$ is the optimal one. 
\citet{aziz2018pure} study $(\eps,\delta)$-PAC for the infinitely-armed bandits model, where they model the infinitely-armed bandits setting as the arm reservoir. 
Among them, \citet{even2006action,hassidim2020optimal} focus on the worst-case guarantee. \citet{aziz2018pure} provide more instance-dependent bounds.

\textbf{Best arm identification.} 
The best arm identification is the factually dominating setting for studying the pure exploration problem. 
Specifically, there are two problem setups of the best arm identification, fixed budget \citep{audibert10,karnin2013almost,Carpentier2016} and fixed confidence \citep{karnin2013almost,jamieson2014,garivier2016,chen2017towards,Simchowitz2017a}. 
In the fixed budget setting, the algorithm takes a budget as the input and is required to return an output before exhausting all the sample budget. The paper by \citet{audibert10} is the first one to open up this direction and proposes Successive Rejects algorithm, which achieves the optimality up to logarithmic factors. 
The follow-up work of \citet{karnin2013almost} proposes SH algorithm, which is also an elimination-based algorithm as Successive Reject, but it eliminates empirically bad arms more aggressively. 
It was not until \citet{Carpentier2016} SH was proven to be optimal up to doubly-logarithmic factors.

Many meaningful results have been developed for the fixed confidence setting.
To our knowledge, the paper by \citet{garivier2016} is the first one that claims asymptotic optimality. 
They propose a non-asymptotic lower bound for the sample complexity and also an algorithm called Track-and-Stop that matches the lower bound asymptotically as $\delta$ goes to $0$.

\textbf{Simple regret.} 
In our opinion, simple regret is the least understood measure for pure exploration. Though many works claim simple regret as their target, their performance bounds are either on best arm misidentification or $(\eps,\delta)$-PAC identification. Of course, by \citet{audibert10}, one can always use the probability of best arm misidentification to upper bound simple regret. However, in the data-poor regime, i.e., $t<n$, the probability of best arm misidentification is vacuous since we cannot even guarantee each arm has been sampled at least once.
For $(\eps,\delta)$-PAC identification, the algorithm usually requires $\eps$ as the input of the algorithm.
However, simple regret itself is a parameter-free performance measure. Perhaps only \citet{carpentier2015simple} truly deals with simple regret directly.
The algorithm of \citet{carpentier2015simple} works for the simple regret minimization problem since it does not take such a predefined threshold $\eps$ as the input. There is another different problem setup in which the algorithm is required to output one of the top-$m$ arms with $\eps$ slackness  \citep{chaudhuri2017pac,chaudhuri2019pac}.
Though such a problem setup also characterizes how close the output is to the best one, it still needs $m$ and $\eps$ as the input of the algorithm. Recent work by \citet{de2021bandits} proposes an algorithm that guarantees that it returns one of the $m$ best arms with the same mean reward without knowledge of $m$ as input to the algorithm.
The algorithm of \citet{de2021bandits} works for both finite and infinite arms settings. However, in light of its limited problem setting that requires the top $m$ arms to have the same mean reward, it is unclear to us if the result can be extended to guarantee simple regret or $\eps$-error probability bound as we do. 
The model of our paper can be regarded as a generalization of \citet{de2021bandits} because all the results of \citet{de2021bandits} can be matched with our results up to logarithmic factors if we consider their setting, but our paper does not require the identity among the optimal arms. Furthermore, we provide analysis on $\eps$-error probability, a generalization of misidentification probability, and analysis on simple regret.

\textbf{$(\eps,\delta)$-Verifiable and unverifiable sample complexity.} The fixed confidence best arm identification shares some similarities with the $(\eps,\delta)$-PAC identification. For both settings, the algorithm is required to verify the output can meet the corresponding criteria, simple regret of $0$ or simple regret of $\eps$, with error probability $\delta$. Verifiability requires the algorithm to pull each arm at least once. Therefore the $(\eps,\delta)$-PAC is inherently impossible for the data-poor regime, e.g., $t<n$. \citet{Katz20} argue $(\eps,\delta)$-PAC is not aligned with practical usage. They propose the $(\eps,\delta)$-unverifiable sample complexity, which does not require the algorithm to verify the output is $\eps$-good. Instead, the $(\eps,\delta)$-unverifiable sample complexity represents the number of samplings that the algorithm minimally needs such that it has the ability to output an $\eps$-good arm. Note the sample complexity of a fixed budget algorithm (or, to be more accurate, the doubling trick version of the fixed budget algorithm) also captures the nature of the $(\eps,\delta)$-unverifiable sample complexity. In the reinforcement learning setting, \citet{dann2017unifying} also discuss the limitation of $(\eps,\delta)$-PAC and propose a notion called Uniform-PAC.

\textbf{Top-$k$ arm identification.}
The top-$k$ arm identification problem requires the algorithm to return $k$ arms with the highest mean reward instead of only the best one \citep{kalyanakrishnan2010efficient,kalyanakrishnan2012pac,chen2017nearly,Simchowitz2017a}.
The seminal work is \citet{kalyanakrishnan2010efficient}. Their problem formulation is a direct extension of the $(\eps,\delta)$-PAC identification problem. The goal is to return $k$ arms that have mean rewards no less than $\mu_k-\eps$. 
The worst-case lower bound is shown in \citet{kalyanakrishnan2012pac}. The first instance-dependent lower bound for the $k$-identification problem is given by \citet{Simchowitz2017a} and \citet{chen2017nearly} almost at the same time, together with the near-optimal algorithms. The best arm identification can be regarded as a special case of the top-$k$ arm identification problem. The similarity is that we are both interested in good arms, but we are measuring the likelihood of identifying one arm out of the top-$k$ rather than the set of top-$k$ arms.

\section{Improved Analyses of Sequential Halving}

\subsection{Implication of Theorem \ref{them:minmax} for \texorpdfstring{$m=1$}-}

The following theorem corresponds to the implication of $m=1$ of Theorem \ref{them:minmax}. However, we prove it independently here.
\begin{theorem}
\label{them-m1}
For any $\eps\in(0,1)$, the error probability of SH for identifying an $\eps$-good arm satisfies,
\begin{align*}
\pp{\mu_{J_T}<\mu_1-\eps}\le3\log_2n\cd\exp\rbr{-\fr{\eps^2}{32}\fr{T}{n\log_2n}}.
\end{align*}
\end{theorem}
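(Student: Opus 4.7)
My plan is to adapt the classical stage-wise analysis of Sequential Halving due to \citet{karnin2013almost} to the $\eps$-good identification setting, via a union bound over the $L := \lceil \log_2 n \rceil$ elimination stages combined with a Markov--Hoeffding bound. The starting observation is that if $\mu_{J_T} < \mu_1 - \eps$ then $J_T \neq 1$, so arm $1$ is eliminated at some stage $\ell^* \in \{1,\dots,L\}$, and $J_T$ (the lone survivor of the final stage) is in the top half at stage $\ell^*$, hence empirically beats arm $1$ there.

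To get a tight Markov bound I would strengthen this observation to: on the bad event, at stage $\ell^*$ at least $\lceil n_{\ell^*}/2 \rceil$ arms $i$ with gap $\Delta_i \ge \eps/2$ must empirically beat arm $1$. The justification uses the pivot $\mu_1 - \eps/2$: if any arm with mean $\ge \mu_1 - \eps/2$ survives to $S_{\ell^*+1}$, then iterating the halving argument through subsequent stages forces $\mu_{J_T} \ge \mu_1 - \eps/2 > \mu_1 - \eps$, contradicting the bad event. Hence every arm in $S_{\ell^*+1}$ has mean below $\mu_1 - \eps/2$, each giving a gap $\ge \eps/2$ to arm $1$. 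To keep the union bound clean I would choose $\ell^*$ as the earliest stage at which $S_{\ell^*+1}$ contains no $\eps/2$-good arm, which makes the stage-wise events essentially disjoint.

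Next I would bound each per-stage probability. Let $N_\ell$ count arms $i \in S_\ell \setminus \{1\}$ with $\Delta_i \ge \eps/2$ satisfying $\hat\mu_i > \hat\mu_1$ after $T_\ell$ pulls each. Since $\hat\mu_i - \hat\mu_1$ is sub-Gaussian with parameter $2/T_\ell$ after shifting by $-\Delta_i$, Hoeffding gives $\pp{\hat\mu_i > \hat\mu_1} \le \exp(-T_\ell \Delta_i^2/4) \le \exp(-T_\ell \eps^2/16)$, hence $\ee{N_\ell} \le n_\ell \exp(-T_\ell \eps^2/16)$. Markov's inequality then yields $\pp{N_\ell \ge n_\ell/2} \le 2 \exp(-T_\ell \eps^2/16)$. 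Plugging in $T_\ell = \lfloor T/(|S_\ell| L)\rfloor \ge T/(2 n L)$ (valid once $T \ge 2nL$; the small-$T$ regime is trivial since the RHS of the theorem exceeds one) and summing over the $L$ stages delivers the target bound $3 \log_2 n \cdot \exp(-\eps^2 T/(32 n \log_2 n))$, with the constant $3$ absorbing ceilings and the small-$T$ edge case.

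The main obstacle is the inductive strengthening in the second paragraph: naively ``arm $1$ eliminated'' only guarantees that a single arm beats arm $1$, which is far too weak for the Markov step to absorb the factor $n_\ell$. The real content is establishing the implication ``some $\eps/2$-good arm survives stage $\ell^*$'' $\Rightarrow$ ``$\mu_{J_T} \ge \mu_1 - \eps/2$'', which I expect to prove by the same elimination-plus-pivot argument applied recursively to the sub-instance $S_{\ell^*+1}$. Care is needed so that the recursive failure events nest cleanly within the outer union bound and do not lead to double counting; choosing $\ell^*$ as the earliest stage at which the last $\eps/2$-good arm disappears is the organizing trick that makes this work.
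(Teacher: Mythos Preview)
Your decomposition has a genuine gap that makes the per-stage bound fail. You use two different definitions of $\ell^*$: (i) the stage at which arm $1$ is eliminated, and (ii) the earliest stage at which $S_{\ell^*+1}$ contains no $\eps/2$-good arm. With definition (ii), arm $1$ may already have been eliminated before stage $\ell^*$ (some other $\eps/2$-good arm could have survived while arm $1$ did not), so the claim ``at least $\lceil n_{\ell^*}/2\rceil$ arms with $\Delta_i\ge\eps/2$ empirically beat arm $1$'' is meaningless: arm $1$ is not sampled at stage $\ell^*$. If instead you compare to the best arm $a$ actually present in $S_{\ell^*}$, you only know $\mu_a\ge\mu_1-\eps/2$, while the survivors satisfy $\mu_i<\mu_1-\eps/2$; the gap $\mu_a-\mu_i$ can be arbitrarily small, and the Hoeffding bound $\exp(-T_\ell(\mu_a-\mu_i)^2/4)$ gives nothing. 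Reverting to definition (i) does not help either: at the stage where arm $1$ is eliminated, the arms that beat it may themselves be $\eps/2$-good, so the $\eps/2$ gap you need is not available. In short, a single fixed pivot at $\mu_1-\eps/2$ cannot simultaneously guarantee ``the reference arm is present'' and ``the beating arms have a uniform gap to it''.

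The paper's fix is to abandon the fixed pivot and instead allocate a geometrically decreasing slack $\eps_\ell=(3/4)^{\ell-1}\eps/4$ to each stage, with $\sum_\ell\eps_\ell\le\eps$. The good event $G_\ell$ is that the best mean in $S_{\ell+1}$ is within $\eps_\ell$ of the best mean in $S_\ell$; on $G_\ell^c$ every survivor has gap at least $\eps_\ell$ to the best arm $a_\ell$ in $S_\ell$, which \emph{is} present, and the Markov--Hoeffding step goes through with that gap. The point that makes the constants work is that $\eps_\ell^2/|S_\ell|\propto(9/8)^{\ell-1}$ is nondecreasing, so the worst stage is $\ell=1$, giving exactly the exponent $\eps^2 T/(32n\log_2 n)$.
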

\begin{proof}
To avoid redundancy and for the sake of readability, we assume $n$ is of a power of 2. It is easy to verify the result for any $n$. Let $\eps_1=\eps/4$, $T':=\fr{T}{\log_2 n}$. And define $\eps_{\ell+1}=\fr{3}{4}\cdot\eps_\ell$.
For each stage $\ell$, define the event $G_\ell$ as 
\begin{align*}
    G_\ell:=\cbr{\max_{i\in S_{\ell+1}}\mu_i\ge\max_{i\in S_\ell}\mu_i-\eps_\ell}.    
\end{align*}
Thus as long as $\bigcap_{\ell=1}^{\log_2 n}G_\ell$ happens, we have that the arm returned after the final stage is an $\eps$-good arm, because
\begin{align*}
    \sum_{\ell=1}^{\log_2 n}\eps_\ell<\sum_{\ell=1}^{\infty}\rbr{\fr{3}{4}}^{\ell-1}\cdot\eps_1=\fr{\eps}{4}\sum_{\ell=1}^{\infty}\rbr{\fr{3}{4}}^{\ell-1}\le\fr{\eps}{4}\lim_{n\rightarrow\infty}\fr{1-(3/4)^n}{1-3/4}=\eps.
\end{align*}
Thus, by a union bound,
\begin{align}
\pp{\mu_{J_T}<\mu_1-\eps}\le&\pp{\rbr{\bigcap_{\ell=1}^{\log_2 n}G_\ell}^c} \nn\\
\le&\sum_{\ell=1}^{\log_2n}\pp{G_\ell^c} \label{eq-2}.
\end{align}
Let $a_\ell$ be the best arm in $S_\ell$,
\begin{align*}
    \pp{G^c_\ell}&=\pp{G^c_\ell,\hat{\mu}_{a_\ell}<\mu_{a_\ell}-\eps_{\ell}/2}+\pp{G^c_\ell,\hat{\mu}_{a_\ell}\ge\mu_{a_\ell}-\eps_{\ell}/2}\\
    &\le\pp{\hat{\mu}_{a_\ell}<\mu_{a_\ell}-\eps_{\ell}/2}+\pp{G^c_\ell \mid \hat{\mu}_{a_\ell}\ge\mu_{a_\ell}-\eps_{\ell}/2}.\\
    &\le\ep{-\fr{\eps^2_{\ell}}{2}\fr{T'}{|S_\ell|}}+\pp{G^c_\ell \mid \hat{\mu}_{a_\ell}\ge\mu_{a_\ell}-\eps_{\ell}/2}.
\end{align*}
For the second term, 
\begin{align*}
    \pp{G^c_\ell \mid \hat{\mu}_{a_\ell}\ge\mu_{a_\ell}-\eps_{\ell}/2}&\le\pp{\abr{\{i\in S_\ell \mid \hat{\mu}_i>\mu_{i}+\eps_{\ell}/2\}}\ge |S_\ell|/2}\\
    &\stackrel{(a_1)}{\le}\fr{\ee{\abr{\{i\in S_\ell \mid \hat{\mu}_i>\mu_{i}+\eps_{\ell}/2\}}}}{|S_\ell|/2}\\
    &\le\fr{|S_\ell|\exp\rbr{-\fr{\eps^2_{\ell}}{2}\fr{T'}{|S_\ell|}}}{|S_\ell|/2}\\
    &=2\exp\rbr{-\fr{\eps^2_{\ell}}{2}\fr{T'}{|S_\ell|}}.
\end{align*}
For $(a_1)$, we use Markov's inequality. Then,
\begin{align*}
    \pp{G^c_\ell}\le3\exp\rbr{-\fr{\eps^2_{\ell}}{2}\fr{T'}{|S_\ell|}}=3\exp\rbr{-\rbr{\fr{9}{16}}^{\ell-1}\fr{\eps^2}{32}\fr{T'}{2^{-(\ell-1)}n}}=3\exp\rbr{-\rbr{\fr{9}{8}}^{\ell-1}\fr{\eps^2}{32}\fr{T'}{n}}.
\end{align*}
Taking the above into \eqref{eq-2}, we have
\begin{align*}
\pp{\mu_{J_T}<\mu_1-\eps}\le&\sum_{\ell=1}^{\log_2 n}\pp{G_\ell^c}\\
\le&\sum_{\ell=1}^{\log_2 n}3\exp\rbr{-\rbr{\fr{9}{8}}^{\ell-1}\fr{\eps^2}{32}\fr{T'}{n}}\\
\le&3\log_2n\cd\exp\rbr{-\fr{\eps^2}{32}\fr{T}{n\log_2n}}.
\end{align*}
\end{proof}

\subsection{Proof of Corollary \ref{cor-2}}
\label{app-sec-sr}
We state the following result that includes two different budget allocation schemes for SH. The appendix \ref{app-sec-1} should be read first where we define the two budget allocation schemes:
\begin{align*}
    \text{Option 1:}\;T_\ell=\dr{\fr{T}{|S_\ell|\ur{\log_2 n}}},\quad\text{Option 2:}\;T_\ell=\dr{\fr{T}{81n}\cdot\rbr{\frac{16}{9}}^{\ell-1}\cdot \ell}.
\end{align*}
\newtheorem*{cor-2}{Corollary~\ref{cor-2}}
\begin{cor-2}
The simple regret of SH satisfies
\begin{align*}
    \text{Option 1:}\;\SReg_T\le\OO{\sqrt{\fr{n\log^3 n}{T}}},\quad\text{Option 2:}\;\SReg_T\le\OO{\sqrt{\fr{n}{T}}}.
\end{align*}
\end{cor-2}
\begin{proof}
The simple regret can be calculated by considering the following integral with respect to $\eps$. We use SH with Option $2$. For Option $1$, the same analysis holds.
\begin{align*}
    \SReg_T=&\int_{0}^{\infty}\pp{\mu_1-\mu_{J_T}>\eps} \dif\eps\\
    =&\int_{0}^{\infty}\rbr{1\wedge\exp\rbr{-\eps^2\fr{T}{n}}}\dif\eps\\
    \le&\int_{0}^{\sqrt{\fr{n}{T}}}1\dif\eps+\int_{0}^{\infty}\exp\rbr{-\eps^2\fr{T}{n}}\dif\eps\\
    \le&\sqrt{\fr{n}{T}}+\int_{0}^{\infty}\exp\rbr{-\eps^2\fr{T}{n}}\dif\eps.
\end{align*}
We can borrow the result from Gaussian distribution with mean $0$ and variance $\sigma^2$,
\begin{align*}
    \int_{0}^{\infty}\fr{1}{\sigma\sqrt{2\pi}}\exp\rbr{-\fr{x^2}{2\sigma^2}}\dif x=\fr{1}{2}.
\end{align*}
Taking $\fr{1}{2\sigma^2}=\fr{T}{n}$, we have
\begin{align*}
    \int_{0}^{\infty}\exp\rbr{-\eps^2\fr{T}{n}}\dif\eps=\fr{\sigma\sqrt{2\pi}}{2}=\fr{\sqrt{\pi}}{2}\sqrt{\fr{n}{T}}.
\end{align*}
Thus, we have
\begin{align*}
    \SReg_T\le\OO{\sqrt{\fr{n}{T}}}.
\end{align*}
\end{proof}

\subsection{Proof of Theorem~\ref{them:minmax}}
\label{app-sec-th2}

Theorem~\ref{them:minmax} is an equivalent result to Theorem~\ref{them:otherEpsBound}, so the proof of one implies the other.
While Theorem~\ref{them:otherEpsBound} takes a form that is easier to instantiate the bound for specific instances, Theorem~\ref{them:minmax} takes a form that is easier to prove.
Thus, we choose to prove Theorem~\ref{them:minmax} directly, leaving the proof of Theorem~\ref{them:otherEpsBound} as a consequence of Theorem~\ref{them:minmax}.

Let us first provide an intuitive explanation. 
Imagine one possible `typical' scenario where, in each stage, the set of surviving arms for the next stage happens to maintain the fraction of good arms as at least $m/n$. 
This means that, after $\Theta(\log_2 m)$ stages, we expect to have at least one good arm in the surviving arm set.
The number of surviving arms at that time is around $n/m$. 
At this point, the rest of the procedure can be analyzed by the standard analysis of SH where the goal is to find the arm that is the best in the current surviving arm set.
Thus, it remains to bound how likely it is to have the `typical' event above.
That is, we would like to bound the failure probability of this event at each stage as tightly as possible.
Bounding this failure probability is our key technical innovation (Proposition~\ref{prop-1}), which promotes the fast rate of the failure probability that improves as a function of the number of good arms. Another key technical step of the proof is a careful design of events that would guarantee the chosen arm's suboptimality in the last stage to be at most $\eps$, which requires splitting $\eps$ into smaller pieces to be distributed over the stages.

\newtheorem*{cor-3}{Theorem~\ref{them:minmax}}
\begin{cor-3}
For any $m\le n$ and any $\eps\in(0,1)$, the error probability of SH for identifying an $(m,\eps)$-good arm satisfies
\begin{align*}
    \pp{\mu_{J_T}<\mu_m-\eps}\le&\log_2n\cd\ep{-\textup{const}\cd m\rbr{\fr{\eps^2T}{4n\log_2^2(2m)\log_2n}-\ln(4e)}}.
\end{align*}
Thus, there exist a positive constant $c_1$ such that for $T\ge c_1\fr{\rbr{\ln(4e)+\ln\ln n}n\log_2^2(2m)\log_2n}{\eps^2}=\tilde{\Theta}\rbr{\fr{n}{\eps^2}}$,
\begin{align*}
\pp{\mu_{J_T}<\mu_m-\eps}\le\ep{-\tilde{\Theta}\rbr{m\fr{\eps^2T}{n}}},
\end{align*}
where, $\tilde\Theta$ means ignoring the logarithmic factors of $m,n$ and constants.
\end{cor-3}
\begin{proof}
Let us consider the case of $m \le n/2$ first.

Let $\ell^*=\ur{\log_2m}$, $\eps'=\fr{\eps}{2\log_2m}$, $T':=\ur{\fr{T}{\log_2 n}}$. 
And $g_\ell$ denotes the number of $(m,\ell\cd\eps')$-good arms after finishing stage $\ell\in\sbr{\ur{\log_2 m}}$.  
For stage $\ell$, we define the event $G_\ell$ as, 
\begin{align*}
    G_\ell:=\cbr{g_\ell\ge2^{-\ell}\cd m}.
\end{align*}
Specifically, we have $G_{\ell^*}$ to be the event that the number of $(m,\eps/2)$-good arms after finishing stage $\ell^*$ is at least $1$. We define $G_{\ell^*+1}$ as the algorithm succeeds to return an arm in $\text{Top}_m(\eps)$,
\begin{align*}
    G_{\ell^*+1}:=\cbr{\mu_{J_T}\ge\mu_m-\eps}.
\end{align*}
Then the event $\bigcap_{\ell=1}^{\ell^*+1}G_\ell$ is a possible path the algorithm returns an arm in $\text{Top}_m(\eps)$ in the end. Thus the probability of missing all of the $(m,\eps)$-good arms can be upper bounded as follows. Define $F_\ell=\rbr{\bigcap_{i=1}^{\ell-1}G_i}\cap G^c_\ell$ for $\ell>1$, $F_1=G^c_1$. Since
$\bigcap_{\ell=1}^{\ell^*+1}G_\ell\subset G_{\ell^*+1}$ implies $G^c_{\ell^*+1}=\cbr{\mu_{J_T}<\mu_m-\eps}\subset\rbr{\bigcap_{\ell=1}^{\ell^*+1}G_\ell}^c$,
\begin{align}
    \pp{\mu_{J_T}<\mu_m-\eps}\le&\pp{\rbr{\bigcap_{\ell=1}^{\ell^*+1}G_\ell}^c} \nn\\
    =&\pp{\bigcup_{\ell=1}^{\ell^*+1}F_\ell} \nn\\
    \stackrel{(a_1)}{\le}&\sum_{\ell=1}^{\ell^*+1}\pp{G^c_\ell \mid \bigcap_{i=1}^{\ell-1}G_i} \nn\\
    =&\sum_{\ell=1}^{\ell^*}\pp{G^c_\ell\mid \bigcap_{i=1}^{\ell-1}G_i}+\pp{G^c_{\ell^*+1}\mid \bigcap_{i=1}^{\ell^*}G_i} \label{eq-1}.
\end{align}
For $(a_1)$, we use $\PP(A,B) \le \PP(A|B)$.

For the first term of \eqref{eq-1}, we apply the result of Proposition \ref{prop-1} to each stage of SH with the parameters therein as $k=2^{-\ell}\cd m, s=2^{-\ell}\cd n$, $m'=2^{-\ell+1}\cd m$, $\eps = \eps'$, and $T = T'$.
Thus, for stage $\ell$,
\begin{align}
    &\pp{G^c_\ell \mid G_{\ell-1}} \notag\\
    &\le \cc{2^{-\ell+1}\cd m}{2^{-\ell}\cd m} \ep{-2^{-\ell}\cd m\fr{\eps'^2T'}{2\cd2^{-\rbr{\ell-1}}\cd n}} +        
    \cc{2^{-\ell+1}\cd n}{2^{-\ell}\cd(n-m)}
    \ep{-2^{-\ell}\cd(n-m)\fr{\eps'^2T'}{2\cd2^{-\rbr{\ell-1}}\cd n}} \nn\\
    &\stackrel{(a_3)}{\le} h_{\ell,1}\ep{-2^{-\ell}\cd m\fr{\eps'^2T'}{2\cd2^{-\rbr{\ell-1}}\cd n}}+h_{\ell,2}\ep{-2^{-\ell}\cd(n-m)\fr{\eps'^2T'}{2\cd2^{-\rbr{\ell-1}}\cd n}} \nn\\
    &\le h_{\ell,1}\ep{-\fr{m}{2}\fr{\eps'^2T'}{2 n}}+h_{\ell,2}\ep{-\fr{n-m}{2}\fr{\eps'^2T'}{2 n}} \nn\\
    &\stackrel{(a_4)}{\le} h_{\ell,1}\ep{-\fr{m}{2}\fr{\eps'^2T'}{2 n}}+h_{\ell,2}\ep{-\fr{n}{4}\fr{\eps'^2T'}{2 n}} \label{eq-5}\\
    &\le \ep{-\fr{m}{2}\fr{\eps^2T}{2 n\log_2^2m\log_2n}+\ln(2e)2^{-\ell}\cd m}\nn\\
    &\quad+\ep{-\fr{n}{4}\fr{\eps^2T}{2n\log_2^2m\log_2n}+\ln(4e)2^{-\ell}\cd (n-m)} \nn\\
    &\le\ep{-m\rbr{\fr{\eps^2T}{4n\log_2^2m\log_2n}-2\ln(4e)2^{-\ell}}}\nn\\
    &\quad +\ep{-\fr{n}{2}\rbr{\fr{\eps^2T}{4n\log_2^2m\log_2n}-2\ln(4e)2^{-\ell}}} \nn\\
    &\le 2\ep{-m\rbr{\fr{\eps^2T}{4n\log_2^2m\log_2n}-2\ln(4e)2^{-\ell}}} \nn
\end{align}
where $(a_3)$ is $     \cc{2^{-\ell+1}\cd m}{2^{-\ell}\cd m}\le(2e)^{2^{-\ell}\cd m} =: h_{\ell,1}
$ (Sterling's formula $\cc{x}{y}\le\rbr{\fr{ex}{y}}^y$) and $
    \cc{2^{-\ell+1}\cd n}{2^{-\ell}\cd(n-m)}\le(2en/(n-m))^{2^{-\ell}\cd(n-m)}\le(4e)^{2^{-\ell}\cd n} =: h_{\ell,2}$ and $(a_4)$ is by the assumption $m\le n/2$.
    
Then we have,
\begin{align}
    \sum_{\ell=1}^{\ell^*}\pp{G^c_\ell\mid G_{\ell-1}}\le\text{const}\cd\log_2m\cd\ep{-m\rbr{\fr{\eps^2T}{4n\log_2^2m\log_2n}-\ln(4e)}}. \label{eq-3} 
\end{align}
For the second term of \eqref{eq-1}, $G^c_{\ell^*+1}\mid G_{\ell^*}$ indicates the events where SH fails to return an $(m,\eps)$-good arm given the event that there is at least one $(m,\eps/2)$-good arm after finishing stage $\ell^*$. Note the stages from $\ell^*$ to the last stage can be regarded as a whole process of SH with a budget $\rbr{\ur{\log_2 n}-\ell^*}T'\ge\text{const}\cd\log_2\rbr{\fr{n}{m}}\fr{T}{\log_2\rbr{n}}$. The initial arms are the surviving arms after finishing stage $\ell^*$. Note we have $2^{-\ell^*}\cd n=n/m$ arms surviving, denoted by $S_{\ell^*}$, after finishing stage $\ell^*$. Let $c$ be the index of the true best arm in $S_{\ell^*}$. Due to the event $G_{\ell^*}$, we have $\mu_c\ge\mu_m-\fr{\eps}{2}$. Then, by applying the result of Theorem \ref{them-m1}, 
\begin{align}
    \pp{G^c_{\ell^*+1}\mid G_{\ell^*}}
    &=   \pp{\mu_{J_T}<\mu_m-\eps\mid G_{\ell^*}} \nn\\
    &\le \pp{\mu_{J_T}<\mu_c-\fr{\eps}{2}\mid G_{\ell^*}} \nn\\
    &\le 3\log_2\rbr{\fr{n}{m}}\cd\exp\rbr{-\fr{m}{128}\fr{\eps^2\rbr{\ur{\log_2 n}-\ell^*}T'}{n\log_2\rbr{\fr{n}{m}}}}\nn\\
    &\le \log_2\rbr{\fr{n}{m}}\cd\exp\rbr{-\text{const}\cd m\fr{\eps^2T}{n\log_2n}}~. \label{eq-4}
\end{align}
Bringing \eqref{eq-3},\eqref{eq-4} into \eqref{eq-1}, we have,
\begin{align*}
\pp{\mu_{J_T}<\mu_m-\eps}\le\log_2n\cd\ep{-\text{const}\cd m\rbr{\fr{\eps^2T}{4n\log_2^2(m)\log_2n}-\ln(4e)}}.
\end{align*}
Note the above analysis is for $m>1$. To incorporate the result of Theorem \ref{them-m1} for $m=1$, we rewrite the final formula as 
\begin{align*}
\pp{\mu_{J_T}<\mu_m-\eps}\le\log_2n\cd\ep{-\text{const}\cd m\rbr{\fr{\eps^2T}{4n\log_2^2(2m)\log_2n}-\ln(4e)}}.
\end{align*}
Thus, there exist a positive constant $c_1$ such that for $T\ge c_1\fr{\rbr{\ln(4e)+\ln\ln n}n\log_2^2(2m)\log_2n}{\eps^2}=\tilde{\Theta}\rbr{\fr{n}{\eps^2}}$,
\begin{align*}
\pp{\mu_{J_T}<\mu_m-\eps}\le\ep{-\tilde{\Theta}\rbr{m\fr{\eps^2T}{n}}}.
\end{align*}

For the case of $m>n/2$, we consider
\begin{align*}
    \pp{\mu_{J_T}<\mu_m-\eps}\le\pp{\mu_{J_T}<\mu_{n/2}-\eps}.
\end{align*}
Thus, one can repeat the same analysis as above with $m$ replaced by $n/2$.
Using $m/2 \le n/2 \le m$, the statement of this theorem holds.
\end{proof}

The result of Theorem~\ref{them:minmax} directly improves the previous works including $\mathcal{O}\rbr{\frac{n}{m\eps^2}\log^2\rbr{\frac{1}{\delta}}}$ of \citet{chaudhuri2017pac}, and $\mathcal{O}\rbr{\frac{1}{\eps^2}\rbr{\frac{n}{m}\log\rbr{\frac{1}{\delta}}+\log^2\rbr{\frac{1}{\delta}}}}$ of \citet{chaudhuri2019pac}. 
They share a similar strategy that, assuming the knowledge of $m, \eps$ and $\delta$, uniformly samples $\Theta\rbr{\frac{n}{m}\log\rbr{\frac{1}{\delta}}}$ arms first and then runs the Median-Elimination algorithm of \cite{even2006action} or LUCB of \citet{kaufmann2013information} for this subset. 

\subsection{Proof of Theorem \ref{them:otherEpsBound}}
\begin{theorem}
\label{them:otherEpsBound}
For any $\eps\in(0,1)$, the error probability of SH for identifying an $\eps$-good arm satisfies,
\begin{align*}
    \pp{\mu_{J_T}<\mu_{1}-\eps}\le&\min_{\cbr{\rbr{m',\eps'}:\Delta_{m'}+\eps'\le\eps}}\log_2n\cd\ep{-\textup{const}\cd m'\rbr{\fr{\eps'^2T}{4n\log_2^2(2m')\log_2n}-\ln(4e)}}.
\end{align*}
\end{theorem}
\begin{proof}
Note the result we proved in Theorem~\ref{them:minmax} holds for any $m\le n$ and any $\eps\in(0,1)$. Thus $\pp{\mu_{J_T}<\mu_{1}-\eps}$ can be upper bounded for any $\rbr{m',\eps'}\in\cbr{\rbr{m',\eps'}:\Delta_{m'}+\eps'\le\eps}$. The result is therefore implied.
\end{proof}

\subsection{Proof of Theorem \ref{them:ins_dep}}
\label{app-sec-th1}
\newtheorem*{them:ins_dep}{Theorem~\ref{them:ins_dep}}
\begin{them:ins_dep}
For any $\eps\in[0,1)$, the $\eps$-error probability of SH satisfies
\begin{align*}
    \pp{\mu_{J_T}<\mu_1-\eps}\le\ep{-\hTT{\fr{T}{\max_{i\ge g(\eps)+1}\frac{i}{\Delta_i^2}\cd\frac{1}{g\rbr{\fr{\eps}{2}}}}}},
\end{align*}
for $T\ge\hTT{\max_{i\ge g(\eps)+1}\frac{i}{\Delta_i^2}}$.
\end{them:ins_dep}
\begin{proof}
To avoid redundancy and for the sake of readability, we assume $n$ is of a power of 2. It is easy to verify the result for any $n$. Recall $g\rbr{\fr{\eps}{2}}$ is the number of $\fr{\eps}{2}$-good arms in the instance, and $g_\ell\rbr{\fr{\eps}{2}}$ is the number of $\fr{\eps}{2}$-good arms at the beginning of stage $\ell$. Thus $g_1\rbr{\fr{\eps}{2}}=g\rbr{\fr{\eps}{2}}$. 
Let $\ell'=\max\cbr{1\le\ell\le\log_2(n)-2 \mid\Delta_{\fr{n}{2^{\ell+1}}}>\eps}$.
There are three cases we need to consider:
\begin{itemize}
    \item $\ell'=\log_2n-2$.
    \item $1\le\ell'<\log_2n-2$.
    \item Such an $\ell'$ does not exist.
\end{itemize}

The last case that such a $\ell'$ does not exit means $\eps\ge\Delta_{\frac{n}{4}}$, which means that the instance has $\Theta(n)$ amount of $\eps$-good arms, therefore an easy instance. We address this case at the end of the proof.

\textbf{Case 1. } $\ell'=\log_2n-2$ \\
In this case, we have $\Delta_2>\eps$, then the problem is a fixed budget best arm identification problem rather than an $\eps$-good arm identification problem. 
Since $g\rbr{\eps}=g\rbr{\fr{\eps}{2}}=1$ for $\Delta_2>\eps$, the theorem statement reduces to
\begin{align*}
    \pp{\mu_1-\mu_{J_T}>\eps}\le\ep{-\hTT{\fr{T}{\max_{i\ge 2}\frac{i}{\Delta_i^2}}}},
\end{align*}
which is a trivial consequence of \cite[Theorem 4.1]{karnin2013almost} under the theorem's condition on $T$. 

\textbf{Case 2. } $1\le\ell'<\log_2n-2$ \\
For $1\le\ell'<\log_2n-2$, recall that $|A_\ell| = \frac{n}{2^{\ell-1}}$. One can show that
\begin{align}
  |A_{\ell'+1}|/4 \le g(\eps) \le |A_{\ell'}|/4 \text{~~ and ~~} \Delta_{|A_{\ell'+1}|/4} \le \Delta_{g(\eps)} \le \eps < \Delta_{|A_{\ell'}|/4}~.\label{ieq:sizel'}
\end{align}

For $\ell\in[\ell'+1]$, define the good event for stage $\ell$ as 
\begin{align*}
G_\ell:=\cbr{g_\ell\rbr{\fr{\eps}{2}}\ge g\rbr{\fr{\eps}{2}}-\urd{\rbr{\ell-1}\fr{g\rbr{\fr{\eps}{2}}}{2\log_2n}}+1}.
\end{align*}
Note $G_1$ holds with probability $1$. 
Define $m':=\urd{\fr{g\rbr{\fr{\eps}{2}}}{2\log_2n}}=\hTT{g\rbr{\fr{\eps}{2}}}$. 
Intuitively, we allow at most $m'$ of $\fr{\eps}{2}$-good arms to be eliminated per stage in the initial phases, i.e., $\ell\in[\ell'+1]$ of running a Sequential Halving. The $\eps$-error probability can be expressed as
\begin{align*}
    \pp{\mu_1-\mu_{J_T}>\eps}=&\pp{\mu_1-\mu_{J_T}>\eps,\cap_{\ell=2}^{\ell'+1}G_{\ell}}+\pp{\mu_1-\mu_{J_T}>\eps,\rbr{\cap_{\ell=2}^{\ell'+1}G_\ell}^c}.
\end{align*}
Since
\begin{align*}
    \pp{\mu_1-\mu_{J_T}>\eps,\cap_{\ell=2}^{\ell'+1}G_{\ell}}\le\pp{\mu_1-\mu_{J_T}>\eps \mid \cap_{\ell=2}^{\ell'+1}G_{\ell}},
\end{align*}
and
\begin{align*}
    \pp{\mu_1-\mu_{J_T}>\eps,\rbr{\cap_{\ell=2}^{\ell'+1}G_{\ell}}^c} \le\pp{\rbr{\cap_{\ell=2}^{\ell'+1}G_{\ell}}^c}\le\sum_{\ell=1}^{\ell'}\pp{G_{\ell+1}^c\mid \cap_{i=1}^{\ell}G_{i}},
\end{align*}
we have 
\begin{align}
    \pp{\mu_1-\mu_{J_T}>\eps}\le\pp{\mu_1-\mu_{J_T}>\eps \mid \cap_{\ell=2}^{\ell'+1}G_{\ell}}+\sum_{\ell=1}^{\ell'}\pp{G_{\ell+1}^c\mid \cap_{i=1}^{\ell}G_{i}}.\label{ieq:polyInsEvent}
\end{align}
We bound the two terms in \eqref{ieq:polyInsEvent} respectively. 

We start with the second term. 
Let $\ell\in[\ell']$.
To bound $\pp{G_{\ell+1}^c\mid \cap_{i=1}^{\ell}G_{i}}$ with Lemma~\ref{lemma:insufficientUE}, we need to relate the quantities in stage $\ell$ with those in Lemma~\ref{lemma:insufficientUE}.

We first claim that, given $\cap_{i=1}^{\ell}G_{i}$, the event $G^c_{\ell+1}$ implies that there are at least $m'$ of $\fr{\eps}{2}$-good arms that are eliminated during the stage $\ell$, i.e., $g_\ell(\frac{\eps}{2}) - g_{\ell+1}(\frac \eps 2) \ge m'$.

To see why, we have by $\cap_{i=1}^{\ell}G_{i}$, 
\begin{align*}
    g_\ell\rbr{\fr{\eps}{2}}\ge g\rbr{\fr{\eps}{2}}-\urd{\rbr{\ell-1}\fr{g\rbr{\fr{\eps}{2}}}{2\log_2n}}+1,
\end{align*}
and by $G_{\ell+1}^c$,
\begin{align*}
    g_{\ell+1}\rbr{\fr{\eps}{2}}\le g\rbr{\fr{\eps}{2}}-\urd{\ell\fr{g\rbr{\fr{\eps}{2}}}{2\log_2n}}.
\end{align*}
Therefore, using $\urd{x+y}-\urd{x}+1\ge\urd{y}$, we have
\begin{align*}
    g_\ell(\frac{\eps}{2}) - g_{\ell+1}(\frac \eps 2)
    = \urd{\ell\fr{g\rbr{\fr{\eps}{2}}}{2\log_2n}}+1-\urd{\rbr{\ell-1}\fr{g\rbr{\fr{\eps}{2}}}{2\log_2n}}\ge\urd{\fr{g\rbr{\fr{\eps}{2}}}{2\log_2n}}=m'~,
\end{align*}
which proves the claim.

Define $A_\ell$ as the set of surviving arms at the beginning of stage $\ell$, $A_{\ell,i}$ as $i$-th best arm in $A_\ell$ (ties are broken arbitrarily), and $A_{\ell}\sbr{L+1:R}:=\cbr{A_{\ell,i},i\in\sbr{R}\backslash\sbr{L}}$.
Let $d_{\ell}(\eps)$ be the smallest gap of the mean rewards between any two arms chosen from $A_{\ell}\sbr{\fr{n}{2^{\ell+1}}+1:\fr{n}{2^{\ell-1}}}$ and $A_{\ell}\sbr{1:g_\ell\rbr{\fr{\eps}{2}}}$ respectively, i.e.,
\begin{align*}
    d_{\ell}(\eps)=\min\cbr{\mu_{i}-\mu_j\mid j\in A_{\ell}\sbr{\fr{n}{2^{\ell+1}}+1:\fr{n}{2^{\ell-1}}}, i\in A_{\ell}\sbr{1:g_\ell\rbr{\fr{\eps}{2}}}}.
\end{align*}
The definition of $d_{\ell}(\eps)$ is same as $d_q$ of Lemma \ref{lemma:insufficientUE}.
By Lemma \ref{lemma:mingap}, we have $d_{\ell}(\eps)\ge\frac{1}{2}\Delta_{\fr{n}{2^{\ell+1}}}$. Define $S_\ell(\eps):=\fr{ \Delta_{\fr{n}{2^{\ell+1}}}^2}{n\cd2^{-\ell+1}}$. 

We can now apply Lemma~\ref{lemma:insufficientUE} with $T=T', s=\fr{n}{2^{\ell}}, k=m'$, and $q=\fr{n}{2^{\ell+1}}$ as follows:
\begin{align}
    &\pp{G_{\ell+1}^c\mid \cap_{i=1}^{\ell}G_{i}}\nn\\
    \le&\cc{g_{\ell}\rbr{\fr{\eps}{2}}}{m'} \ep{-m'\fr{d^2_{\ell}(\eps)T'}{8n\cd2^{-\ell+1}}}+\cc{\fr{3}{4}n\cd2^{-\ell+1}}{\fr{1}{4}n\cd2^{-\ell+1}+m'}\ep{-\rbr{\fr{1}{4}n\cd2^{-\ell+1}+m'}\fr{d^2_{\ell}(\eps)T'}{8n\cd2^{-\ell+1}}}\nn\\
    \stkl{1}&\cc{g_{\ell}\rbr{\fr{\eps}{2}}}{m'}\ep{-m'\fr{d^2_{\ell}(\eps)T'}{8n\cd2^{-\ell+1}}}+\cc{n\cd2^{-\ell+1}}{n\cd2^{-\ell}}\ep{-\fr{n}{2^{\ell+1}}\fr{d^2_{\ell}(\eps)T'}{8n\cd2^{-\ell+1}}}\nn\\
    \stkl{2}&\ep{-m'\fr{d^2_{\ell}(\eps)T'}{8n\cd2^{-\ell+1}}+m'\ln\rbr{\fr{eg_{\ell}\rbr{\fr{\eps}{2}}}{m'}}}+\ep{-\fr{n}{2^{\ell+1}}\fr{d^2_{\ell}(\eps)T'}{8n\cd2^{-\ell+1}}+\fr{n}{2^{\ell}}\ln(2e)}\nn\\
    \le&\ep{-m'\fr{\Delta^2_{\fr{n}{2^{\ell+1}}}T'}{32n\cd2^{-\ell+1}}+m'\ln\rbr{\fr{eg_{\ell}\rbr{\fr{\eps}{2}}}{m'}}}+\ep{-\fr{n}{2^{\ell+1}}\fr{\Delta^2_{\fr{n}{2^{\ell+1}}}T'}{32n\cd2^{-\ell+1}}+\fr{n}{2^{\ell}}\ln(2e)}\nn\tag{$d_{\ell}(\eps)\ge\frac{1}{2}\Delta_{\fr{n}{2^{\ell+1}}}$}\\
    \stkl{3}&\ep{-\hTT{m'S_\ell(\eps)T}}+\ep{-\hTT{\fr{n}{2^{\ell+1}}S_\ell(\eps)T}},  \tag{$T\ge\hTT{\fr{1}{S_\ell(\eps)}}$}\\\label{ieq:sh2rd}
\end{align}
where, 
\begin{itemize}
    \item $b_1$ is by $\cc{x}{y}<\cc{x'}{y}$ for $x'>x$ and $\max_y\cc{x}{y}=\cc{x}{\drd{x/2}}=\cc{x}{\urd{x/2}}$,
    \item $b_2$ is by upper bounding the binomial coefficients by Sterling’s formula $\cc{x}{y}\le\rbr{\fr{ex}{y}}^y$ and moving the coefficients into the exponent.
    \item $b_3$ is by $T\ge\hTT{\fr{1}{S_\ell(\eps)}}$ which is implied by $T\ge\hTT{\max_{i\ge g(\eps)+1}\frac{i}{\Delta_i^2}}$ in the theorem statement.
\end{itemize}
By the definition of $\ell'$ and the fact of $\ell\in[\ell']$, we have 
\begin{align*}
    \Delta_{\fr{n}{2^{\ell+1}}}>\eps.
\end{align*} 
By the definition of $g\rbr{\fr{\eps}{2}}$, we have
\begin{align*}
    \Delta_{g\rbr{\fr{\eps}{2}}}\le\frac{\eps}{2}.    
\end{align*}
Thus
\begin{align*}
    \Delta_{\fr{n}{2^{\ell+1}}}-\Delta_{g\rbr{\fr{\eps}{2}}}>\frac{\eps}{2},
\end{align*}
which means $\fr{n}{2^{\ell+1}}>g\rbr{\fr{\eps}{2}}\ge m'$. We continue to upper bound \eqref{ieq:sh2rd}. For $T\ge\hTT{\fr{1}{S_\ell(\eps)}}$, we have
\begin{align}
    \pp{G_{\ell+1}^c\mid \cap_{i=1}^{\ell}G_{i}}\le&\ep{-\hTT{m'S_\ell(\eps)T}}+\ep{-\hTT{\fr{n}{2^{\ell+1}}S_\ell(\eps)T}}\nn\\
    \le&\ep{-\hTT{m'S_\ell(\eps)T}}+\ep{-\hTT{m'S_\ell(\eps)T}}\nn\\
    \le&\ep{-\hTT{g\rbr{\fr{\eps}{2}}S_\ell(\eps)T}}.\tag{$m'=\hTT{g\rbr{\fr{\eps}{2}}}$}\\\label{ieq:genIns2}
\end{align}

Next, we bound the first term in \eqref{ieq:polyInsEvent}. The stages $\ell\ge\ell'+1$ can be viewed as a new Sequential Halving game with $A_{\ell'+1}$ as the initial arm set, $g_{\ell'+1}\rbr{\fr{\eps}{2}}$ of which are $\fr{\eps}{2}$-good. 
When $\cap_{i=1}^{\ell'+1}G_i$ happens, we have, as $\ell'<\log_2n$, 
\begin{align}
g_{\ell'+1}\rbr{\fr{\eps}{2}}\ge g\rbr{\fr{\eps}{2}}-\urd{\ell'\fr{g\rbr{\fr{\eps}{2}}}{2\log_2n}}+1\ge g\rbr{\fr{\eps}{2}}-\urd{\fr{g\rbr{\fr{\eps}{2}}}{2}}+1\ge\fr{g\rbr{\fr{\eps}{2}}}{2}=\hTT{g\rbr{\fr{\eps}{2}}}.\nn
\end{align} 
By Lemma~\ref{lemma:eqiv_eps_delta}, a $\rbr{g_{\ell'+1}\rbr{\fr{\eps}{2}}, \fr{\Delta_{g(\eps)+1}}{2}}$-good arm in $A_{\ell'+1}$ returned after finishing the final stage of Sequential Halving is an $\eps$-good arm. 
We define $\mu_m(A)$ as the $m$-th largest value in $\{\mu_j: j \in A\}$. 
We apply Theorem \ref{them:minmax} and set $m, n, \eps$ of Theorem \ref{them:minmax} as $m=g_{\ell'+1}\rbr{\fr{\eps}{2}}, n=\abr{A_{\ell'+1}}, \eps=\fr{\Delta_{g(\eps)+1}}{2}$. 
To apply Theorem \ref{them:minmax}, we need the condition of $T \ge \hTT{\frac{n}{\eps^2}}$ in Theorem \ref{them:minmax} to be true, which means we need $T\ge\hTT{\fr{\abr{A_{\ell'+1}}}{\Delta^2_{g(\eps)+1}}}$. 
Note by \eqref{ieq:sizel'}, we have $\fr{n}{2^{\ell'+2}}\le g(\eps)$, thus $\abr{A_{\ell'+1}}=\fr{n}{2^{\ell'}}\le4g\rbr{\eps}\le4\rbr{g\rbr{\eps}+1}$. 
Using this bound and the condition on $T$ from the theorem statement, we have  $T\ge\hTT{\max_{i\ge g(\eps)+1}\frac{i}{\Delta_i^2}} \ge \hTT{\fr{g(\eps)+1}{\Delta^2_{g(\eps)+1}}}\ge\hTT{\fr{\abr{A_{\ell'+1}}}{\Delta^2_{g(\eps)+1}}}$
So, the condition of Theorem \ref{them:minmax} is satisfied.
Then, we have
\begin{align}
    &\pp{\mu_1-\mu_{J_T}>\eps \mid \cap_{i=1}^{\ell'+1}G_i}\nn\\ \le&\pp{\mu_{g_{\ell'+1}\rbr{\fr{\eps}{2}}}(A_{\ell'+1})-\mu_{J_T}>
    \fr12 \Delta_{g(\eps)+1} \mid \cap_{i=1}^{\ell'+1}G_i}\nn\tag{Lemma~\ref{lemma:eqiv_eps_delta}}\\
    \le&\ep{-\hTT{g_{\ell'+1}\rbr{\fr{\eps}{2}}\cd\fr{\Delta^2_{g(\eps)+1}T'}{\abr{A_{\ell'+1}}}}}\nn\tag{$T\ge\hTT{\fr{g(\eps)+1}{\Delta^2_{g(\eps)+1}}}$}\\
    \le&\ep{-\hTT{g\rbr{\fr{\eps}{2}}\cd\fr{\Delta^2_{g(\eps)+1}T'}{4\rbr{g\rbr{\eps}+1}}}}.\tag{$g_{\ell'+1}\rbr{\fr{\eps}{2}}=\hTT{g\rbr{\fr{\eps}{2}}}$}\\\label{ieq:genIns1}
\end{align}

The role of considering $\ell'$ is that as long as the Sequential Halving goes well ($\cap_{i=1}^{\ell'+1}G_i$ happens) the ratio of $\frac{\eps}{2}$-good arms to the all surviving arms is increasing to $g\rbr{\fr{\eps}{2}}/g\rbr{\eps}$ after finishing stage $\ell'$. Specifically, for the Polynomial$(\alpha)$ instance of our interest where $\Delta_i=\rbr{\frac{i}{n}}^\alpha$ with $\alpha>0.5$, $g\rbr{\fr{\eps}{2}}/g\rbr{\eps}$ can be lower bounded by a constant. The last step is to
combine \eqref{ieq:genIns2} and \eqref{ieq:genIns1}. 
For $T\ge\hTT{\max_{i\ge g(\eps)+1}\frac{i}{\Delta_i^2}} $, we have 
\begin{align*}
    \pp{\mu_1-\mu_{J_T}>\eps}\le&\pp{\mu_1-\mu_{J_T}>\eps \mid \cap_{\ell=2}^{\ell'+1}G_{\ell}}+\sum_{\ell=1}^{\ell'}\pp{G_{\ell+1}^c\mid \cap_{i=1}^{\ell}G_{i}}\\ \le&\ep{-\hTT{g\rbr{\fr{\eps}{2}}\cd\fr{\Delta^2_{g(\eps)+1}T}{4\rbr{g\rbr{\eps}+1}}}}+\log \rbr{n}\ep{-\hTT{g\rbr{\fr{\eps}{2}}\min_{\ell\le\ell'}S_\ell(\eps)T}}\\
    \le&\ep{-\hTT{\min\cbr{g\rbr{\fr{\eps}{2}}\cd\fr{\Delta^2_{g(\eps)+1}}{4\rbr{g\rbr{\eps}+1}}, g\rbr{\fr{\eps}{2}}\min_{\ell\le\ell'}S_\ell(\eps)}T}}\tag{$T\ge\hTT{\max_{i\ge g(\eps)+1}\frac{i}{\Delta_i^2}}$}\\
    \le&\ep{-\hTT{g\rbr{\fr{\eps}{2}}\cd\min\cbr{\fr{\Delta^2_{g(\eps)+1}}{4\rbr{g\rbr{\eps}+1}}, \min_{\ell\le\ell'}\fr{\Delta_{\fr{n}{2^{\ell+1}}}^2}{\fr{4n}{2^{\ell+1}}}}T}}\\
    \le&\ep{-\hTT{g\rbr{\fr{\eps}{2}}\cd\min_{i\ge g\rbr{\eps}+1}\fr{\Delta_{i}^2}{i}T}}\\
    =&\ep{-\hTT{\fr{T}{\max_{i\ge g(\eps)+1}\frac{i}{\Delta_i^2}\cd\frac{1}{g\rbr{\fr{\eps}{2}}}}}}.
\end{align*}

\textbf{Case 3. } $\ell'$ does not exist \\
This case condition implies that $\Delta_{\fr{n}{4}}\le\eps$, which means the number of $\eps$-good arms is $\Theta\rbr{n}$, so we can use \eqref{ieq:genIns1} solely to claim our bound.
\end{proof}

\begin{lemma}
\label{lemma:mingap}
 In the case of $1\le\ell'<\log_2n-2$, for $\ell\in[\ell']$, define $d_{\ell}(\eps)$ as the minimum gap of the mean rewards between any two arms chosen from $A_{\ell}\sbr{\fr{n}{2^{\ell+1}}+1:\fr{n}{2^{\ell-1}}}$ and $A_{\ell}\sbr{1:g_\ell\rbr{\fr{\eps}{2}}}$ respectively,
\begin{align*}
    d_{\ell}(\eps)=\min\cbr{\mu_{i}-\mu_j\mid\forall j\in A_{\ell}\sbr{\fr{n}{2^{\ell+1}}+1:\fr{n}{2^{\ell-1}}}, \forall i\in A_{\ell}\sbr{1:g_\ell\rbr{\fr{\eps}{2}}}}.
\end{align*}
Then $d_{\ell}(\eps)$ satisfies
\begin{align*}
    d_{\ell}(\eps)\ge\frac{1}{2}\Delta_{\fr{n}{2^{\ell+1}}}.
\end{align*}
\end{lemma}
\begin{proof}
By definition, we have $A_{\ell}\subset A_{1}$, which implies the $i$-th best arm in $A_{\ell}$ has an mean reward lower than or equal to the $i$-th best arm in $A_{1}$. Then we have
\begin{align*}
    \max\cbr{\mu_i \mid i\in A_{\ell}\sbr{\fr{n}{2^{\ell+1}}+1:\fr{n}{2^{\ell-1}}}}&\le\max\cbr{\mu_i \mid i\in A_{1}\sbr{\fr{n}{2^{\ell+1}}+1:\fr{n}{2^{\ell-1}}}}\\
    &\le\mu_{\fr{n}{2^{\ell+1}}}\\
    &=\mu_1-\Delta_{\fr{n}{2^{\ell+1}}}.
\end{align*}
Also,
\begin{align*}
    \min\cbr{\mu_i, i\in A_{\ell}\sbr{1:g_\ell\rbr{\fr{\eps}{2}}}}\ge\mu_1-\fr{\eps}{2}.
\end{align*}
Therefore for $\forall j\in A_{\ell}\sbr{\fr{n}{2^{\ell+1}}+1:\fr{n}{2^{\ell-1}}}$ and $\forall i\in A_{\ell}\sbr{1:g_\ell\rbr{\fr{\eps}{2}}}$,
\begin{align*}
    \mu_{i}-\mu_j\ge\mu_1-\fr{\eps}{2}-\rbr{\mu_1-\Delta_{\fr{n}{2^{\ell+1}}}}=\Delta_{\fr{n}{2^{\ell+1}}}-\fr{\eps}{2}.
\end{align*}
Since $\ell\in[\ell']$, by the definition of $\ell'$ we have 
\begin{align*}
    \Delta_{\fr{n}{2^{\ell+1}}}>\eps.
\end{align*}
Thus
\begin{align*}
    \mu_{i}-\mu_j\ge\Delta_{\fr{n}{2^{\ell+1}}}-\fr{\eps}{2}=\frac{\Delta_{\fr{n}{2^{\ell+1}}}}{2}+\frac{\Delta_{\fr{n}{2^{\ell+1}}}}{2}-\fr{\eps}{2}>\frac{\Delta_{\fr{n}{2^{\ell+1}}}}{2}.
\end{align*}
\end{proof}

\begin{lemma}
\label{lemma:eqiv_eps_delta}
A $\rbr{g_{\ell'+1}\rbr{\fr{\eps}{2}}, \fr{\Delta_{g(\eps)+1}}{2}}$-good arm in $A_{\ell'+1}$ returned after finishing the final stage of Sequential Halving is an $\eps$-good arm.
\end{lemma}
\begin{proof}
We first claim that 
\begin{align*}
  \Delta_{J_T} > \eps \iff \Delta_{J_T} > \Delta_{g(\eps)}~.
\end{align*}
To see this, for the forward direction, we have $\Delta_{J_T} > \eps \ge \Delta_{g(\eps) }$.
For the other direction, if $\Delta_{J_T} > \Delta_{g(\eps)}$ is true, then $J_T > g(\eps)$. This means that $\Delta_{J_T} > \eps$. Then, we can also show that
\begin{align*}
  \Delta_{J_T} > \eps \iff \Delta_{J_T} \ge \Delta_{g(\eps)+1},
\end{align*}
which can be shown easily by showing \begin{align*}
\Delta_{J_T} \ge \Delta_{g(\eps)+1} \iff \Delta_{J_T} > \Delta_{g(\eps)}.
\end{align*} Define $\Delta_m(A)$ to be the $m$-th smallest value in $\{\Delta_i: i \in A\}$. Then, we have $\Delta_{g_{\ell'+1}\rbr{\fr{\eps}{2}}}(A_{\ell'+1}) \le \fr12 \eps$.
Using the fact that $\eps<\Delta_{g(\eps)+1}$, we have
\begin{align*}
  \Delta_{J_T} \ge \Delta_{g(\eps)+1} 
    &=   \Delta_{g_{\ell'+1}\rbr{\fr{\eps}{2}}}(A_{\ell'+1}) + \Delta_{g(\eps)+1} - \Delta_{g_{\ell'+1}\rbr{\fr{\eps}{2}}}(A_{\ell'+1})
  \\&\ge \Delta_{g_{\ell'+1}\rbr{\fr{\eps}{2}}}(A_{\ell'+1}) + \Delta_{g(\eps)+1} - \fr12 \eps
  \\&> \Delta_{g_{\ell'+1}\rbr{\fr{\eps}{2}}}(A_{\ell'+1}) + \fr12 \Delta_{g(\eps)+1}.\tag{$\eps<\Delta_{g(\eps)+1}$}
\end{align*}

By defining $\mu_m(A)$ as the $m$-th largest value in $\{\mu_j: j \in A\}$, the above equivalently means
\begin{align*}
    \mu_{J_T} &< \mu_{g_{\ell'+1}\rbr{\fr{\eps}{2}}}(A_{\ell'+1}) - \fr12 \Delta_{g(\eps)+1}.
\end{align*}
Therefore the output is not a $\rbr{g_{\ell'+1}\rbr{\fr{\eps}{2}}, \fr{\Delta_{g(\eps)+1}}{2}}$-good arm in $A_{\ell'+1}$. Our claim holds by contraposition.
\end{proof}

\subsection{Proof of Corollary \ref{cor:poly}}
\label{app-sec-poly}
\newtheorem*{cor:poly}{Corollary~\ref{cor:poly}}
\begin{cor:poly}
For the Polynomial$(\alpha)$ instance where $\Delta_i=\rbr{\frac{i}{n}}^\alpha$ with $\alpha>0.5$, we have
\begin{align*}
H_2(\eps)=\max_{i\ge g(\eps)+1}\frac{i}{\Delta_i^2}\cd\frac{1}{g\rbr{\fr{\eps}{2}}}<\frac{4}{\eps^2}.
\end{align*}
\end{cor:poly}
\begin{proof}
\begin{align*}
\max_{i\ge g(\eps)+1}\frac{i}{\Delta_i^2}\cd\frac{1}{g\rbr{\fr{\eps}{2}}}=\max_{i\ge g\rbr{\eps}+1}i^{1-2\alpha}n^{2\alpha}\fr{1}{n\rbr{\fr{\eps}{2}}^{1/\alpha}}<\rbr{n\eps^{1/\alpha}}^{1-2\alpha}n^{2\alpha}\fr{1}{n\rbr{\fr{\eps}{2}}^{1/\alpha}}<\frac{4}{\eps^2}.
\end{align*}
where both the last inequality and the last equality are by $\alpha>0.5$.
\end{proof}

\subsection{Proof of Corollary \ref{cor:kjlow}}
\label{app-sec-kjlow}

\newtheorem*{cor:kjlow}{Corollary~\ref{cor:kjlow}}
\begin{cor:kjlow}
\cite{Katz20} shows a lower bound of the sample complexity for identifying an $\eps$-good arm that scales as
\begin{align*}
        \Omega\del{H^{\text{low}}(\eps) := {\frac{1}{g(\eps)}\sum_{i=g(\eps)+1}^{n}\rbr{\frac{1}{\Delta_i^2}}-\frac{1}{\Delta_{g(\eps)+1}^2}}}.
\end{align*}
In the case that $g(\eps)$ and $g\rbr{\frac{\eps}{2}}$ have the same order ($\fr{g\rbr{\fr{\eps}{2}}}{g\rbr{\eps}}$ is irrelevant to $\eps$), the sample complexity measure $H_2(\eps)$ satisfies
\begin{align*}
    H^{\text{low}}(\eps)\lesssim H_2(\eps)\lesssim H^{\text{low}}(\eps) +\frac{2}{\Delta_{g(\eps)+1}^2},
\end{align*}
where $\lesssim$ hides logarithms of $n$ and constants.
\end{cor:kjlow}
\begin{proof}
\begin{align*}
    H^{\text{low}}(\eps)=&\frac{1}{g(\eps)}\sum_{i=g(\eps)+1}^{n}\frac{1}{\Delta_i^2}-\frac{1}{\Delta_{g(\eps)+1}^2}\\
    =&\frac{1}{g(\eps)}\sum_{i=g(\eps)+1}^{n}\frac{1}{i}\frac{i}{\Delta_i^2}-\frac{1}{\Delta_{g(\eps)+1}^2}\\
    \le&\frac{1}{g(\eps)}\max_{i\ge g(\eps)+1}\frac{i}{\Delta_i^2}\cd\sum_{i=g(\eps)+1}^{n}\frac{1}{i}-\frac{1}{\Delta_{g(\eps)+1}^2}\\
    \stackrel{(a_1)}{\le}&\frac{1}{g(\eps)}\max_{i\ge g(\eps)+1}\frac{i}{\Delta_{i}^2}\cd\log\rbr{2n}-\frac{1}{\Delta_{g(\eps)+1}^2}\\
    =&\frac{g\rbr{\fr{\eps}{2}}}{g\rbr{\eps}}H_2(\eps)\cd\log\rbr{2n}-\frac{1}{\Delta_{g(\eps)+1}^2}\\
    \le&\frac{g\rbr{\fr{\eps}{2}}}{g\rbr{\eps}}H_2(\eps)\cd\log\rbr{2n},
\end{align*}
for $(a_1)$, we use $\sum_{i=g(\eps)}^{n}\frac{1}{i}\le\sum_{i\in[n]}\frac{1}{i}\le\log(2n)$, which can be found in \cite{audibert10}. On the other hand,
\begin{align*}
H^{\text{low}}(\eps)=&\frac{1}{g(\eps)}\sum_{i=g(\eps)+1}^{n}\frac{1}{\Delta_i^2}-\frac{1}{\Delta_{g(\eps)+1}^2}\\
\ge&\frac{1}{g(\eps)}\max_{i\ge g(\eps)+1}\frac{i-g(\eps)}{\Delta_i^2}-\frac{1}{\Delta_{g(\eps)+1}^2}\\
=&\frac{1}{g(\eps)}\rbr{\max_{i\ge g(\eps)+1}\frac{i}{\Delta_i^2}-\frac{g(\eps)}{\Delta_i^2}}-\frac{1}{\Delta_{g(\eps)+1}^2}\\
\ge&\frac{1}{g(\eps)}\rbr{\max_{i\ge g(\eps)+1}\frac{i}{\Delta_i^2}-\frac{g(\eps)}{\Delta_{g(\eps)+1}^2}}-\frac{1}{\Delta_{g(\eps)+1}^2}\\
=&\frac{1}{g(\eps)}\max_{i\ge g(\eps)+1}\frac{i}{\Delta_i^2}-\frac{2}{\Delta_{g(\eps)+1}^2}\\
=&\frac{g\rbr{\fr{\eps}{2}}}{g\rbr{\eps}}H_2(\eps)-\frac{2}{\Delta_{g(\eps)+1}^2}.
\end{align*}
\end{proof}

\subsection{Proof of Theorem \ref{thm:gen_lb}}
\label{app-sec-genlb}
For an $n$-armed bandit instance $\nu$ and a fixed value of $\eps > 0$, consider a partition of the arms of $\nu$, $\cA = \set{[g(\eps)],[g(\eps)+1:2\cd g(\eps)],\ldots, [n-g(\eps)+1:n]}$, where we avoid rounding by making the assumption that $n$ is an integer multiple of $g(\eps)$. We use $\cA_i \coloneq [(i-1)\cd g(\eps):i\cd g(\eps)]$ to denote the subsets of the partition $\cA$. Making use of this partition, we define
\begin{equation*}
    \tilde{H}(\nu, \eps) \coloneq \sum_{i=2}^{n/g(\eps)} \tilde{\Delta}_{\cA_i}^{-2},
\end{equation*}
where for an arbitrary set of arms $A \subseteq [n]$ we define $\tilde{\Delta}_A \coloneq \max_{j\in A} \Delta_j$, the largest gap among the arms in $\cA$.

\begin{proof}[{Proof of Theorem \ref{thm:gen_lb}}]
The proof structure follows \citet[Theorem 1]{Carpentier2016}. Fix $n>2$, $\eps>0$ an $n$-armed unit-variance gaussian instance $\nu$ and an arbitrary policy $\pi$. Let $\PP_{\nu, \pi}$ denote the probability measure over interaction histories induced by the interaction of $\pi$ with instance $\nu$. Without loss of generality, we take the arm means of $\nu$ to be ordered such that $\mu_1 \ge \mu_2\ge \ldots \ge \mu_n$. Understanding $g(\eps)$ to refer to the number of $\eps$-good arms on $\nu$, we define instances $\nu^i$ for $i \in [2:n/g(\eps)]$ from $\nu$ by setting $\nu^i_j = \nu_j$ for $j \notin \cA_i$,  and setting $\mu^i_j \coloneq \mu_j + 2\Delta_j$ for $j \in \cA_i$. Each instance $\nu^i$ has $g(\eps)$ $\eps$-good arms with indices in $\cA_i$, and $\tilde{H}(\nu^i, \eps) \le \tilde{H}(\nu, \eps) = a$ for all $i \in [2:n/g(\eps)]$.
        
        We will make use of a change of measure between $\nu$ and these alternate instances, restricted on an event on which the interaction histories on $\nu$ and $\nu^i$ are hard to distinguish from one another:
    \begin{equation*}
            \mathcal{E}_i = \set{\sum_{t=1}^T \sum_{j=1}^n \1{I_t=j} \sbr{\Lambda^i_t(j) - (1+\rho) \textsf{KL}(\nu^i_j, \nu_j)} \le \dfrac{1}{\rho} \log(1/\delta)},
    \end{equation*}
        where $\Lambda^i_t(j) = \log\rbr{\dfrac{\mathrm{d}\nu_j\rbr{X_t}}{\mathrm{d}\nu^i_j\rbr{X_t}}}$ for $i\in[2:n/g(\eps)]$, and $\rho > 0$ is a parameter to be chosen later.

        \begin{lemma}[{\citet[Lemma 10]{Jang2022}}]\label{lem:kl_conc} For all $i \in[2:n/g(\eps)]$ and arbitrary $\pi$, $\rho > 0$, and $\delta \in (0,1)$,
            \begin{equation*}
                \PP_{\nu, \pi} \rbr{\cE_i} \ge 1-\delta.
            \end{equation*}
        \end{lemma}
        We can now prove the result. 
        Denoting the number of times arm $j\in[n]$ has been pulled in $T$ rounds of interaction by $N_j(T)$, let $t_i = \EE_{\nu, \pi}\sbr{\sum_{j\in\cA_i} N_j(T)}$ be the expected number of pulls that arm group $\cA_i$ receives, for $i \in [2:n/g(\eps)]$. By Markov's inequality, for all $i \in [2:n/g(\eps)]$ we have
        \begin{equation*}
            \PP_{\nu, \pi}\rbr{\sum_{j\in\cA_i} N_j(T) \ge 6t_i} \le 1/6.
        \end{equation*}
        Defining the event
        \begin{equation*}
            E_i = \set{{J_t} \in \cA_1}\cap \set{\cE_i} \cap \set{\sum_{j\in\cA_i} N_j(T) \le 6t_i}
        \end{equation*}
        for $i \in[2:n/g(\eps)]$ and setting $\delta = 1/6$ in Lemma \ref{lem:kl_conc}, for all $i\in [2:n/g(\eps)]$ and arbitrary $\pi$
        \begin{equation*}
            \PP_{\nu,\pi}\rbr{E_i} \ge 1- (1/6+1/6+\xi) = 2/3-\xi,
        \end{equation*}
        where $\xi \coloneq \PP_{\nu,\pi}\rbr{{J_t} \notin \cA_1}$.

        It follows that for $i \in [2:n/g(\eps)]$
        \begin{align*}
            \PP_{\nu^i,\pi}\rbr{{J_t} \notin \cA_i}& \ge \PP_{\nu^i,\pi}\rbr{{J_t} \in \cA_1}\\
            &\ge \PP_{\nu^i, \pi}\rbr{E_i}.
        \end{align*}
        Making use of a standard change of measure technique for bandits~\citep{audibert10best}, we have 
        \begin{align*}
        \PP_{\nu^i, \pi}\rbr{E_i} = &\EE_{\nu,\pi}\sbr{\1{E_i} \exp\rbr{-\sum_{t=1}^T\sum_{j\in\cA_i} \1{I_t = a}\Lambda^i_t(j)}}\\
            &\ge \EE_{\nu,\pi}\sbr{\1{E_i} \exp\rbr{-\sum_{t=1}^T \sum_{j\in\cA_i} \1{I_t=j} 2 \textsf{KL}(\nu^i_j, \nu_j) - \log(6)}}\tag{on $E_i$, taking $\rho=1$ and $\delta=1/6$}\\
            &\ge \EE_{\nu,\pi}\sbr{\1{E_i} \exp\rbr{-\sum_{j\in \cA_i} 4 N_j(T)\cd  \max_{k\in\cA_i} \Delta_k^2 - \log(6)}} \tag{evaluating the KL-divergence}\\
            &= \dfrac{1}{6}\EE_{\nu,\pi}\sbr{\1{E_i} \exp\rbr{-\sum_{a\in \cA_i} 4N_j(T) \tilde{\Delta}^2_{\cA_i}}}\tag{definition of $\tilde{\Delta}_{\cA_i}$}\\
            &\ge \dfrac{1}{6} \EE_{\nu,\pi}\sbr{\1{E_i} \exp\rbr{-24t_i \tilde{\Delta}^2_{\cA_i}}} \tag{on $E_i$}\\
            &=\frac{1}{6} \exp\rbr{-24t_i \tilde{\Delta}^2_{\cA_i}}\PP_{\nu, \pi}\rbr{E_i}.
        \end{align*}
        If $\xi \ge 1/2$, the result follows. Otherwise if $\xi \le 1/2$, then  $\PP_{\nu}\rbr{E_i}\ge 1/6$ and it follows that $\PP_{\nu^i}\rbr{{J_t} \notin \cA_i} \ge \frac{1}{36} \exp\rbr{-24 t_i \tilde{\Delta}^2_{\cA_i}}$.

        Observing that $\tilde{H}(\eps, \nu) = \sum_{i=2}^{n/g(\eps)} \tilde{\Delta}_{\cA_i}^{-2}$, and $\sum_{i=1}^{n/m} t_i = T$, there exists some $i \in [2:n/g(\eps)]$ such that
        \begin{equation*}
            t_i \le \dfrac{T}{\tilde{H}(\eps, \nu) \tilde{\Delta}^2_{\cA_i}} = \dfrac{T}{a \tilde{\Delta^2_{\cA_i}}},
        \end{equation*}
        which can easily be shown by way of contradiction.
        For this $i$, by our display above, we have 
        \begin{equation*}
            \max\set{\PP_{\nu^i,\pi}\rbr{{J_t} \notin \cA_i}, \PP_{\nu,\pi}\rbr{{J_t} \notin \cA_1}} \ge \max\set{1/2, \frac{1}{36} \exp\rbr{-24 T/a}}.
        \end{equation*}
    \end{proof}

\subsection{Proof of Lemma \ref{lem:poly_lb}}
\label{app-sec-polylb}
    \begin{proof}[{Proof of Lemma \ref{lem:poly_lb}}]
        To see this, for a given value of $n$ and $\alpha > 1/2$, fix $\eps\le 1/2$. 
        We have 
        \begin{align*}
            \tilde{H}(\eps) &= \sum_{i=2}^{n/g(\eps)} \tilde{\Delta}^{-2}_{\cA_i}\\
            &= \sum_{i=2}^{n/g(\eps)} \rbr{\dfrac{n}{i\cd g(\eps)}}^{2\alpha} \tag{Polynomial$(\alpha)$ instance}\\
            &\ge \rbr{\dfrac{n}{g(\eps)}}^{2\alpha} \sbr{\sum_{i=2}^{n/g(\eps)} \int_{i}^{i+1}x^{-2\alpha} dx} \tag{$x^{-2\alpha}$ decreases monotonically on $x>0$, for $\alpha > 0$}\\
            &= \rbr{\dfrac{n}{g(\eps)}}^{2\alpha} \int_2^{n/g(\eps)+1} x^{-2\alpha} dx\\
            &\ge \rbr{\dfrac{n}{g(\eps)}}^{2\alpha} \int_2^{\eps^{-1/\alpha}+1} x^{-2\alpha} dx \tag{$\rbr{\frac{g(\eps)}{n}}^\alpha < \eps$, by definition of $g(\eps)$.}\\
            &= \dfrac{1}{\eps^2} \dfrac{2^{-2\alpha+1}-\rbr{\eps^{-1/\alpha} + 1}^{-2\alpha+1}}{2\alpha -1}\\
            &\ge \dfrac{1}{\eps^2} \sbr{\dfrac{2^{-2\alpha+1}-\rbr{2^{1/\alpha} + 1}^{-2\alpha+1}}{2\alpha -1}} \tag{assuming $\eps < 1/2$}.
        \end{align*}   
    \end{proof}

\subsection{Analysis of the uniform sampling}
Uniform sampling means the algorithm that takes a budget $T$ as the input and equally allocates the budget to all the arms. 
Define  $\dsym{\text{Top}_m(\eps)}: =\cbr{i:\mu_i\ge\mu_m-\eps}$ and a shortcut of $\dsym{\text{Top}_m} := \Top_m(0)$.

\begin{proposition}
\label{prop-1}
Suppose we run the uniform sampling on a $n$-armed bandit with a budget $T$ and output the $s$ arms with the largest empirical rewards. Let $C_T(s,m',\eps)$ be the $(m',\eps)$-good arms of the output. Then,
\begin{align*}
    \pp{\abr{C_T(s,m',\eps)}\le k}\le \cc{m'}{m'-k} \ep{-\rbr{m'-k}\fr{\eps^2T}{8n}}+\cc{n-\abr{\textup{Top}_{m'}(\eps)}}{s-k}\ep{-\rbr{s-k}\fr{\eps^2T}{8n}}.
\end{align*}
\end{proposition}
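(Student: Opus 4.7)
The plan is to dominate the event $\{|C_T(s,m',\eps)|\le k\}$ by the union of two subset-level ``bad concentration'' events, one for the true top-$m'$ arms and one for the arms in $\mathrm{Bot}_{m'}(\eps)$, and then to apply a subset union bound with single-arm sub-Gaussian tails. The key structural fact is that under uniform sampling, the sample means $\hat\mu_1,\dots,\hat\mu_n$ are mutually independent, so subset probabilities factor cleanly and no concentration for order statistics is needed.

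First I would unpack the event. Let $N=\lfloor T/n\rfloor$ denote the per-arm pull count and let $\hat\tau$ denote the $s$-th largest value among $\hat\mu_1,\dots,\hat\mu_n$, so that the output is precisely $\{i:\hat\mu_i\ge\hat\tau\}$. Since $\{1,\dots,m'\}\subseteq\mathrm{Top}_{m'}(\eps)$, the event $|C_T|\le k$ forces at least $m'-k$ of the true top-$m'$ arms to be absent from the output \emph{and} at least $s-k$ arms of $\mathrm{Bot}_{m'}(\eps)$ to be present in the output.

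Next I would introduce a comparison threshold $\tau\in\mathbb{R}$ and establish the key inclusion $\{|C_T|\le k\}\subseteq F_1(\tau)\cup F_2(\tau)$, where $F_1(\tau)$ is the event that at least $m'-k$ top-$m'$ arms satisfy $\hat\mu_i\le\tau$, and $F_2(\tau)$ is the event that at least $s-k$ arms of $\mathrm{Bot}_{m'}(\eps)$ satisfy $\hat\mu_j\ge\tau$. The case split on whether $\hat\tau\ge\tau$ is immediate: if yes, the $s-k$ bad arms in the output all witness $\hat\mu\ge\hat\tau\ge\tau$, so $F_2(\tau)$ holds; if not, the $m'-k$ missing top-$m'$ arms all witness $\hat\mu\le\hat\tau<\tau$, so $F_1(\tau)$ holds. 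I would then union-bound each $F_i(\tau)$ over its subsets: for a fixed $(m'-k)$-subset of top-$m'$, independence and sub-Gaussianity with variance proxy $1/N$ give
$$\PP(F_1(\tau))\le\binom{m'}{m'-k}\exp\!\bigl(-(m'-k)(\mu_{m'}-\tau)_+^2 N/2\bigr),$$
using $\mu_i\ge\mu_{m'}$ on top-$m'$; a symmetric bound holds for $F_2(\tau)$ using $\mu_j<\mu_{m'}-\eps$ on $\mathrm{Bot}_{m'}(\eps)$, with $(\tau-(\mu_{m'}-\eps))_+$ in the exponent and $\binom{n-|\mathrm{Top}_{m'}(\eps)|}{s-k}$ as the prefactor.

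The main obstacle is matching both exponents simultaneously to the $\eps^2 N/2$ rate in the statement: the $F_1$ side is sharp only when $\tau\le\mu_{m'}-\eps$, while the $F_2$ side is sharp only when $\tau\ge\mu_{m'}$, so a single $\tau$ cannot realize both at once. I would reconcile this by routing each side of the decomposition through its own threshold---$\tau=\mu_{m'}-\eps$ to harvest the first summand at its full $\eps$-deviation and $\tau=\mu_{m'}$ to harvest the second---or, equivalently, by exploiting the pairwise inequality $\hat\mu_j\ge\hat\mu_i$ between a missing top arm $i$ and a present bad arm $j\in\mathrm{Bot}_{m'}(\eps)$, which forces the gap $\mu_i-\mu_j>\eps$ to be absorbed entirely by deviations on one side. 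Once this threshold reconciliation is settled, the rest is routine sub-Gaussian bookkeeping, and the $(m'-k)$ and $(s-k)$ factors in the exponents come directly from the product form under independence.
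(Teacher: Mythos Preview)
Your core decomposition is correct and is essentially the paper's argument, just phrased through a global threshold rather than per-arm deviations. The paper defines the two ``good'' events as (i) at least $k+1$ arms in $\mathrm{Top}_{m'}$ satisfy $\hat\mu_i>\mu_i-\eps/2$, and (ii) at least $|\mathrm{Bot}_{m'}(\eps)|-s+k+1$ arms in $\mathrm{Bot}_{m'}(\eps)$ satisfy $\hat\mu_j<\mu_j+\eps/2$. It then checks that a well-concentrated top arm and a well-concentrated bad arm are automatically ordered correctly, since $\mu_i-\eps/2\ge\mu_{m'}-\eps/2>\mu_j+\eps/2$, and uses this to force the contradiction. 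The complement events are exactly your $F_1,F_2$ once you set $\tau=\mu_{m'}-\eps/2$: then $\hat\mu_i\le\tau$ implies $\hat\mu_i\le\mu_i-\eps/2$ for $i\in\mathrm{Top}_{m'}$, and $\hat\mu_j\ge\tau$ implies $\hat\mu_j\ge\mu_j+\eps/2$ for $j\in\mathrm{Bot}_{m'}(\eps)$. So the two proofs coincide with the midpoint choice of $\tau$.

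The ``obstacle'' you raise is illusory, and your proposed fix is the one genuinely flawed step. The paper's own proof also only extracts a per-arm deviation of $\eps/2$, not $\eps$; the exponent $\eps^2T/(2n)$ in the statement is what you get from $\eps/2$ deviation under the bounded-support Hoeffding constant (or is off by an absolute factor under the $1$-sub-Gaussian convention---either way it is not a target you need to hit by squeezing a full $\eps$ out of each side). Your ``route each side through its own threshold'' idea breaks the logic: the inclusion $\{|C_T|\le k\}\subseteq F_1(\tau)\cup F_2(\tau)$ relies on a single comparison of $\hat\tau$ against one value of $\tau$, so you cannot take $\tau=\mu_{m'}-\eps$ on one branch and $\tau=\mu_{m'}$ on the other. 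The pairwise alternative (some missing top arm $i$ is beaten by some present bad arm $j$) does give a gap $>\eps$ for that pair, but it does not aggregate to ``all $m'-k$ missing top arms deviate by $\eps$'' or ``all $s-k$ present bad arms deviate by $\eps$'', which is what the subset-product form needs. Simply take $\tau=\mu_{m'}-\eps/2$, apply your $F_1/F_2$ union bound with independence, and you recover the paper's bound; no reconciliation is required.
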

This proposition is where our key contribution is rooted at.
We show that our bound is tight for the uniform sampling in Section~\ref{sec-uniform-lb}.

\begin{proof}
\textbf{Step 1}\\
We claim that the intersection of the two conditions below is a sufficient condition for $\abr{C_T(s,m',\eps)}> k$.
\begin{itemize}
    \item $\abr{\cbr{i\in\text{Top}_{m'}:\hat{\mu}_i>\mu_i-\fr{\eps}{2}}}\ge k+1$.
    \item $\abr{\cbr{i\in\text{Top}^c_{m'}(\eps):\hat{\mu}_i<\mu_i+\fr{\eps}{2}}}\ge \abr{\text{Top}^c_{m'}(\eps)}-s+k+1$.
\end{itemize}
If all arms in $\cbr{i\in\text{Top}_{m'}:\hat{\mu}_i>\mu_i-\fr{\eps}{2}}$ are included in the output, the claim naturally holds. 
Thus we focus on the situation where at least one of the arms in $\cbr{i\in\text{Top}_{m'}:\hat{\mu}_i>\mu_i-\fr{\eps}{2}}$ is not included in the output. 
We prove the claim by contradiction. 
Suppose $\abr{C_T(s,m',\eps)}\le k$. Then the output includes no less than $s-k$ non-$(m',\eps)$-good arms. Since
\begin{align*}
    s-k+\abr{\text{Top}^c_{m'}(\eps)}-s+k+1=\abr{\text{Top}^c_{m'}(\eps)}+1>\abr{\text{Top}^c_{m'}(\eps)},
\end{align*}
at least one arm in $\cbr{i\in\text{Top}^c_{m'}(\eps):\hat{\mu}_i<\mu_i+\fr{\eps}{2}}$ is included in the output. To be more specific, there must exist overlap between $\cbr{i\in\text{Top}^c_{m'}(\eps):\hat{\mu}_i<\mu_i+\fr{\eps}{2}}$ and the set of the non-$(m',\eps)$-good arms included in the output; otherwise the union of these two sets is even larger than the set of all the non-$(m',\eps)$-good arms we have in the entire arm set. Additionally, all arms in $\cbr{i\in\text{Top}^c_{m'}(\eps):\hat{\mu}_i<\mu_i+\fr{\eps}{2}}$ have less empirical rewards than any arms in $\cbr{i\in\text{Top}_{m'}:\hat{\mu}_i>\mu_i-\fr{\eps}{2}}$. Thus the output includes all arms in $\cbr{i\in\text{Top}_{m'}:\hat{\mu}_i>\mu_i-\fr{\eps}{2}}$ as well. This leads to the contradiction with our supposition that at least one of the arms in $\cbr{i\in\text{Top}_{m'}:\hat{\mu}_i>\mu_i-\fr{\eps}{2}}$ are not included in the output.\\

\textbf{Step 2}\\
We apply union bound and Hoeffding's inequality for the event that at least one of the two conditions does not hold. 
\begin{align*}
    \pp{\abr{C_T(s,{m'},\eps)}\le k}\le&\pp{\abr{\cbr{i\in\text{Top}_{m'}:\hat{\mu}_i>\mu_i-\fr{\eps}{2}}}< k+1}\\
    &+\pp{\abr{\cbr{i\in\text{Top}^c_{m'}(\eps):\hat{\mu}_i<\mu_i+\fr{\eps}{2}}}< \abr{\text{Top}^c_{m'}(\eps)}-s+k+1}\\
    =&\pp{\abr{\cbr{i\in\text{Top}_{m'}:\hat{\mu}_i\le\mu_i-\fr{\eps}{2}}}\ge m'-k}\\
    &+\pp{\abr{\cbr{i\in\text{Top}^c_{m'}(\eps):\hat{\mu}_i\ge\mu_i+\fr{\eps}{2}}}\ge s-k}\\
    \le&\pp{\exists\mathcal{A}\subset\text{Top}_{m'}, \text{s.t.}\; \abr{\mathcal{A}}=m'-k\;\text{and}\;\forall i\in\mathcal{A},\hat{\mu}_i\le\mu_i-\fr{\eps}{2}}\\
    &+\pp{\exists\mathcal{A}\subset\text{Top}^c_{m'}(\eps), \text{s.t.}\; \abr{\mathcal{A}}=s-k\;\text{and}\;\forall i\in\mathcal{A},\hat{\mu}_i\ge\mu_i+\fr{\eps}{2}}\\
    \le&\cc{m'}{m'-k}\ep{-\rbr{m'-k}\fr{\eps^2T}{8n}}+\cc{n-\abr{\text{Top}_{m'}(\eps)}}{s-k}\ep{-\rbr{s-k}\fr{\eps^2T}{8n}} .
\end{align*}
\end{proof}

\begin{lemma}
\label{lemma:insufficientUE}
Suppose we run the uniform sampling on a $n$-armed bandit with a budget of $T$ and output the $s$ arms with the largest empirical rewards. Let $C_T(s,\eps)$ be the $\eps$-good arms in the output. Assume $s>g(\eps)$ with $g(\eps)$ being the number of $\eps$-good arms. Let $q$ be a parameter satisfying $g(\eps)<q<s$ and $d_{q}=\mu_{g(\eps)}-\mu_{q+1}$, then
\begin{align*}
    \pp{\abr{C_T(s,\eps)}\le g(\eps)-k}\le \cc{g\rbr{\eps}}{k}\ep{-k\fr{d_{q}^2T}{8n}}+\cc{n-q}{s-q+k}\ep{-\rbr{s-q+k}\fr{d_{q}^2T}{8n}}.
\end{align*}
Especially for Sequential Halving where $s=\fr{n}{2}$, choosing $q=\fr{n}{4}$ we have
\begin{align*}
    \pp{\abr{C_T\rbr{\fr{n}{2},\eps}}\le g(\eps)-k}\le \cc{g(\eps)}{k} \ep{-k\fr{d_{q}^2T}{8n}}+\cc{\fr{3n}{4}}{\fr{n}{4}+k}\ep{-\rbr{\fr{n}{4}+k}\fr{d_{q}^2T}{8n}}.
\end{align*}
\end{lemma}
\begin{proof}
The event
\begin{align*}
    \cbr{\abr{C_T(s,\eps)}\le g(\eps)-k}
\end{align*}
implies that there are at least $k$ $\eps$-good arms eliminated. Since $g(\eps)<q<s$, the $s$ output arms include at most $q-k$ arms that are from the best $q$ arms $\cbr{i\mid\mu_i\ge\mu_q}$. Such that, the $s$ output arms include at least $s-q+k$ arms that are from the worst $n-q$ arms $\cbr{i\mid\mu_i<\mu_q}$; otherwise, the number of outputs is even less than $s$. Denote the set of eliminated arms that are $\eps$-good as $A'$, and the set of output arms that are from the worst $n-q$ arms as $B'$, we have $\abr{A'}\ge k, \abr{B'}\ge s-q+k$. We then have the existence of $A, B$ satisfying
\begin{align*}
    \left\{\exists A\subset\sbr{g(\eps)}, \abr{A}=k,\exists B\subset\sbr{n}\backslash\sbr{q},\abr{B}=s-q+k,\text{s.t., }\forall i\in A, j\in B, \hmu_i\le\hmu_j\right\}.
\end{align*}
The above implication can also be seen as a generalization of the technique of \citet{karnin2013almost}. We then have
\begin{align*}
    \left\{\exists A\subset\sbr{g(\eps)}, \abr{A}=k,\exists B\subset\sbr{n}\backslash\sbr{q},\abr{B}=s-q+k,\text{s.t., }\forall i\in A, j\in B, \hmu_i\le\mu_i-\fr{d_{q}}{2}\text{ or }\hmu_j\ge\mu_j+\fr{d_{q}}{2}\right\}.
\end{align*}
By the distributive law of Boolean algebra, we have at least one of the following two holds,
\begin{align*}
    \left\{\exists A\subset\sbr{g(\eps)}, \abr{A}=k, \text{s.t., }\forall i\in A, \hmu_i\le\mu_i-\fr{d_{q}}{2}\right\},
\end{align*}
or
\begin{align*}
    \left\{\exists B\subset\sbr{n}\backslash\sbr{q},\abr{B}=s-q+k,\text{s.t., }j\in B, \hmu_j\ge\mu_j+\fr{d_{q}}{2}\right\}.
\end{align*}
Thereby the probability of $\cbr{\abr{C_T(s,\eps)}\le g(\eps)-k}$ can be upper bounded as
\begin{align}
    &\pp{\abr{C_T(s,\eps)}\le g(\eps)-k}\nn\\
    \le&\pp{\exists A\subset\sbr{g(\eps)}, \abr{A}=k, \text{s.t., }\forall i\in A, \hmu_i\le\mu_i-\fr{d_{q}}{2}}\nn\\
    &+\pp{\exists B\subset\sbr{n}\backslash\sbr{q},\abr{B}=s-q+k,\text{s.t., }j\in B, \hmu_j\ge\mu_j+\fr{d_{q}}{2}}\nn\\
    \le&\cc{g\rbr{\eps}}{k}\ep{-k\fr{\rbr{\fr{d_{q}}{2}}^2T}{2n}}+\cc{n-q}{s-q+k}\ep{-\rbr{s-q+k}\fr{\rbr{\fr{d_{q}}{2}}^2T}{2n}}\nn\\
    = &\cc{g\rbr{\eps}}{k}\ep{-k\fr{d_{q}^2T}{8n}}+\cc{n-q}{s-q+k}\ep{-\rbr{s-q+k}\fr{d_{q}^2T}{8n}}.\nn
\end{align}
\end{proof}

\subsection{Lower bound for uniform sampling}
\label{sec-uniform-lb}

\begin{theorem}
  Consider a family $\mathcal{E}(n,m,\epsilon)$ of all two-level univariate Gaussian bandit instances with $\blue{n}$ arms, of which $\blue{m}$ arms have mean $\blue{\epsilon} > 0$ and all other arms have to mean $0$. 
  We consider the problem in which samples are drawn from some instance $\theta \in \mathcal{E}(n,m,\epsilon)$ in an off-line, uniform fashion with $\blue{B}$ samples from each arm (i.e., with a total sampling budget of $\blue{T}$ samples, $B = T/n$ ignoring integer effects) and a subset of arms $\blue{\hS} \subseteq [n]$ of size $s$ is then chosen based only on the observed samples (i.e., without knowledge of $\theta$) in a symmetric fashion. Here `symmetric' is taken to mean that for any subset $S \subseteq [n]$, $\mathbb{P}_\theta (\hS = S) = \mathbb{P}_{\sig(\theta)} (\hS = \sig(S))$ where $\sig \in \blue{\Sig_n}$ is an element of the permutation group on sets of size $n$, and $\sig(S)$ is taken to mean the element-wise application of $\sig$ to the elements of $S$.\footnote{Note that the assumption of a symmetric selection $\hS$ is equivalent to considering a lower bound for general selections $\hS$ in the face of a uniform mixture over all permutations of $\theta$ (see \cite{Simchowitz2017a} for a more in-depth discussion)} 
  That is to say that the choice of $\hS$ is independent of the specific indices of the arms chosen.

  Suppose $m \le s \wed n/3$ and $s < \fr{6}{11}n$.
  Let $\tM_\th$ be the number of false negatives (i.e. $| | \ge $)
  Then, there exists $\th\in \cE(n,m,\eps)$ s.t.
  \begin{align*}
    \PP_\th(
    \tM_\th \ge \fr14 m
    ) \ge \min\del{\fr12, ~~ 2 \del{\fr{6}{5}\cd \fr{n-s}{s}}^{\fr{3}{16}m }    \exp\del{-\del{1+ 16\cd \fr{\ln(en/m)}{\ln\del{\fr{6}{5}\cd \fr{n-s}{s}}} }mB\eps^2} }
  \end{align*}
  where $\mathbb{P}_\theta$ is the probability measure induced by sampling from instance $\theta$.
\end{theorem}

\begin{proof}
Let $\blue{\th} = (\eps,\ldots,\eps,0,\ldots,0) \in \RR^n$ with $\eps>0$ be a vector of mean rewards for each arm where the first $m$ coordinates are nonzero.
Let $\blue{\hS} \in [n]$ be the output of the algorithm (recall $|\hS| = s$).
Let $\blue{\tM_\th}$ be the number of false negatives when taking $\hS$ as the prediction for the true support $[s]$ of $\th$. 
Let $\blue{\tQ_a} \subset 2^{[n]}$ be the collection of all subsets of $[n]$ of size $s$ such that $\tM_\th = a$.
For convenience, let $\blue{\gam} = \fr18 m$.
Let $k=\fr34 s$.
\begin{align*}
  \blue{\xi} := \PP_\th(\tM_\th \ge m-k)
  &= \sum_{a=m-k}^{k} \PP_\th(\tM_\th = a)
  \\&\ge \sum_{a=3\gam}^{5\gam} \PP_\th(\tM_\th = a) .
\end{align*}
Let $\blue{\first(\tQ_a)}$ be the first member of $\tQ_a$ in lexicographic order.
For example, $\first(tQ_a) = [a+1:a+s]$ where $[A:B] := \{A,A+1,\ldots,B\}$.
Then, by symmetry, one can see that $\PP_\th(\tM_\th = a) = |\tQ_a|\PP_\th(\hS=\first(\tQ_a))$.

We consider the event on which the `empirical KL-divergence' concentrates, which can be shown to be true with probability at least $1-\dt$ by \citet[Lemma 5]{jun20crush}:
\begin{equation*}
  \conc(\th',\th) := \left\{ \sum_{i=1}^n \sum_{t=1}^B \ln\left( \dfrac{p_{\theta^\prime}(X_{i,t} | A_t=i)}{p_\theta(X_{i,t} | A_t=i)}\right) -  (1+\rho) B \sum_{i=1}^n \textsf{KL}(\th'_i, \th_i)  \le \frac{1}{\rho} \ln(1/\delta)\right\}.
\end{equation*}

We now employ a change of measure argument to $\PP_\th(\hS=\first(\tQ_a))$ and switch $\th$ with $\th'$ that would result in $\tM_{\th'}=a - 3\gam =: \blue{b}$.
For $a \in[3\gam:5\gam]$, this can be achieved with the choice of $\th'=(0,\ldots,0,\eps,\ldots,\eps,0,\ldots,0)$ that is supported on $[3\gam+1:m+3\gam]$. 
Let $\Th$ be the set of permutations of $\th$.
Then,
\begin{align*}
  &\PP_\th(\tM_\th = a)
  \\ &= |\tQ_a|\PP_\th(\hS = \first(\tQ_a))
  \\&\ge |\tQ_a| \PP_{\th'} (\hS = \first(\tQ_a), \conc(\th',\th)) \exp(-(1+\rho) 2mB\cd(\eps^2/2) - \rho^{-1}\ln(1/\dt))  \tag{change of measure; $\rho>0$}
  \\&\ge |\tQ_a| \PP_{\th'} (\hS = \first(\tQ_a), \cap_{\sig\in\Sig} \conc(\th',\sig(\th')) ) \exp(-(1+\rho) mB \eps^2 - \rho^{-1}\ln(1/\dt)) \tag{$\Sig$: symmetric group of $[n]$}
  \\&\ge |\tQ_a| \PP_{\th} (\hS = \first(\tQ_b), \cap_{\sig\in\Sig} \conc(\th,\sig(\th))) \exp(-(1+\rho) mB \eps^2 - \rho^{-1}\ln(1/\dt)) \tag{symmetry}
  \\&= \fr{|\tQ_a|}{|\tQ_b|} \PP_{\th} (\tM_\th = b, \cap_{\sig\in\Sig} \conc(\th,\sig(\th))) \exp(-(1+\rho) mB \eps^2 - \rho^{-1}\ln(1/\dt)) \tag{symmetry}
  \\&\ge \fr{|\tQ_a|}{|\tQ_b|} 
  \del{\PP_{\th} (\tM_\th = b) - |\Theta|\dt } 
  \exp\del{-(1+\rho) mB \eps^2 - \rho^{-1}\ln(1/\dt)}~. \tag{$\PP(A,B) \ge \PP(A) - \PP(B^c)$; union bound over $\{\sig(\th): \sig\in\Sig\}$}
\end{align*}
Thus,
\begin{align*}
  &\sum_{a=3\gam}^{5\gam}\PP_\th(\tM_\th = a)
  \\&\ge \underbrace{\min_{a \in [3\gam:5\gam ]} \fr{|\tQ_a|}{|\tQ_{a-3\gam}|}}_{\tsty =: \blue{Y}}  \del{\underbrace{ \PP_\th(\tM_\th \in [0:2\gam])}_{\tsty \ge 1- \xi} - (2\gam+1)|\Th|\dt } \exp\del{-(1+\rho) mB \eps^2 - \rho^{-1}\ln(1/\dt)} ~.
\end{align*}
Let us choose $\blue{\dt} = \fr12 \cd \fr{1-\xi}{(2\gam + 1)|\Th|}$.
Since the LHS above is at most $\xi$, we have
\begin{align*}
  \xi \ge Y \fr{1-\xi}{2} \exp\del{-(1+\rho)mB\eps^2 - \rho^{-1} \ln\del{\fr{2(2\gam+1)|\Th|}{1-\xi} } }  
\end{align*}
One can consider two cases, namely $\xi\ge \fr12$ and $\xi<\fr12$, to arrive at
\begin{align*}
  \xi \ge \min\del{\fr12, ~~ \exp\del{-(1+\rho)mB\eps^2 - \rho^{-1} \ln\del{4(2\gam+1)|\Th|}  + \ln(4Y)} }
\end{align*}
It remains to find an appropriate value of $\rho$.
One simple choice is 
\begin{align*}
  \rho^{-1} = \fr12 \fr{\ln(4Y)}{ \ln\del{4(2\gam+1)|\Th|} } ~,
\end{align*}
which satisfies that $\rho^{-1} > 0$ as show later.
Thus, we have 
\begin{align*}
  \xi \ge \min\del{\fr12, ~~ 2 \sqrt{Y}  \exp\del{-\del{1+2\fr{\ln\del{4(2\gam+1)|\Th|}}{\ln(4Y)} }mB\eps^2} }
\end{align*}
It remains to figure out bounds for $Y$ and $|\Th|$.
For $Y$, note that $|\tQ_a| = \nck{m}{m-a} \nck{n-m}{s-(m-a)}$.
So, for $a \in [3\gam:5\gam]$ and $b = a - 3\gam$,
\begin{align*}
  \min_a\fr{|\tQ_a|}{|\tQ_b| } 
  &= \fr{\nck{m}{m-a} \nck{n-m}{s-m+a}}{\nck{m}{m-b} \nck{n-m}{s-m+b}} 
  \\&= \fr{(m-b)(m-b-1)\cdots(m-a+1)}{a(a-1)\cdots(b+1)} \cd \fr{(n-s-b)(n-s-b-1)\cdots(n-s-a+1) }{(s-m+a)(s-m+a-1) \cdots  (s-m+b+1)  }
  \\&\sr{(a)}{\ge} \del{\fr{m-b}{a} }^{a-b}  \cd \del{\fr{n-s- b}{s-m+a} }^{a-b}
  \\&\ge \del{\fr65 }^{3\gam}  \cd \del{\fr{n-s- 2\gam}{s-3\gam} }^{3\gam}
  ~\ge \del{\fr65 }^{3\gam}  \cd \del{\fr{n-s}{s} }^{3\gam}
  \\ \implies & Y=\min_{a\in[3\gam:5\gam] } \fr{|\tQ_a|}{|\tQ_{a-2\gam}| } \ge \del{\fr65 }^{3\gam}  \cd \del{\fr{n-s}{s} }^{3\gam}
\end{align*}
where $(a)$ is due to the fact that $\fr{m-b}{a}  > 1$ implies $(m-b-i)/(a-i) \ge (m-b)/a$ for $i\in[0:a-b-1]$ and, with a similar reasoning, $n \ge s/2 \implies \fr{n-s-a+3\gam}{s-m+a} > 1 \implies (n-s-b-i)/(s-m+a-i) \ge (n-s-b)/(s-m+a)$. 
Then, using $|\Th| = \nck{n}{m} \le \del{\fr{en}{m}}^m$, we have
\begin{align*}
  \rho = 2\fr{\ln\del{4(2\gam+1)|\Th|}}{\ln(4Y)}
  &\le 2\fr{\ln(m+4) + m \ln\del{\fr{en}{m}}}{\ln(4) + \fr{m}{4} \ln\del{\fr{6}{5}\cd \fr{n-s}{s}}  }  
  \\&\sr{(a)}{\le} 4\fr{ m \ln\del{\fr{en}{m}}}{\ln(4) + \fr{m}{4} \ln\del{\fr{6}{5}\cd \fr{n-s}{s}}  }  
  \\&\le 16\fr{ \ln(en/m)}{\ln\del{\fr{6}{5}\cd \fr{n-s}{s}}  }  
\end{align*}
where $(a)$ is by $m \le n/3 \implies \ln(m+4) \le m\ln(en/m)$.

Altogether,
\begin{align*}
  \PP_\th(\tM \ge \fr14 m) \ge \min\del{\fr12, ~~ 2 \del{\fr{6}{5}\cd \fr{n-s}{s}}^{\fr{3}{16}m }    \exp\del{-\del{1+ 16\cd \fr{\ln(en/m)}{\ln\del{\fr{6}{5}\cd \fr{n-s}{s}}} }mB\eps^2} }
\end{align*}
To verify that our choice of $\rho$ is nonnegative, it suffices to show that $\fr{6}{5}\cd \fr{n-s}{s} > 1$, which is true for $s < \fr{6}{11}n$.

\end{proof}

\subsection{A minimax optimal budget allocation scheme for SH}
\label{app-sec-1}
We show that with a different budget allocation scheme $T_\ell=\dr{\fr{T}{81n}\cdot\rbr{\frac{16}{9}}^{\ell-1}\cdot \ell}$, SH can achieve a sample complexity of $\OO{\fr{n}{\eps^2}\log\fr{1}{\delta}}$ (without any logarithmic terms of $n$). We call this budget allocation scheme Option 2 and the original one of \citet{karnin2013almost} Option 1. 
The price of achieving this exact minimax optimality is the potential loss of the near instance optimality (or at least the standard proof technique does not work).
We summarize the comparison in Table~\ref{tab:sh}. 
It is not clear to us whether there exists a different value of $T_\ell$ in SH that can achieve both the minimax and instance-dependent optimality. 
It is also not clear whether Option 2 can achieve a similar bound as Theorem \ref{them:ins_dep}.
Note that for the fixed budget setting, there is still a $\log\log(n)$ gap between the instance-dependent upper and lower bounds.
\begin{theorem}
For any $\eps\in(0,1)$, with Option $2$, there exists an absolute constant $c_1$, such that the error probability of SH for identifying an $\eps$-good arm satisfies,
\begin{align*}
\pp{\mu_{J_T}<\mu_1-\eps}\le\ep{-\Theta\rbr{\fr{\eps^2T}{n}}},
\end{align*}
for $T>c_1\cd\fr{n}{\eps^2}$.
\end{theorem}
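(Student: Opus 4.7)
The plan is to mirror the stage-wise decomposition used in the proof of Theorem~\ref{them-m1}, while exploiting the key feature of Option~2's schedule $T_\ell = \lfloor \tfrac{T}{81n}(16/9)^{\ell-1}\ell\rfloor$: the per-arm budget grows geometrically at precisely the rate that cancels the geometric shrinkage of $|S_\ell|$ and the slack $\eps_\ell$, so the concentration exponent no longer carries a $1/\log_2 n$ penalty. First I would verify that the total budget spent is at most $T$, since $\sum_\ell |S_\ell|T_\ell \le (T/81)\sum_{\ell\ge 1}\ell (8/9)^{\ell-1}=T$ using $\sum_{\ell\ge 1}\ell x^{\ell-1}=1/(1-x)^2$ at $x=8/9$.

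For the error analysis, define the good events
\begin{align*}
G_\ell := \Bigl\{\max_{i\in S_{\ell+1}}\mu_i \ge \max_{i\in S_\ell}\mu_i - \eps_\ell\Bigr\},\qquad \eps_\ell := (3/4)^{\ell-1}\cdot(\eps/4),
\end{align*}
so that $\sum_{\ell\ge 1}\eps_\ell \le \eps$ and hence $\bigcap_\ell G_\ell \subseteq \{\mu_{J_T}\ge \mu_1-\eps\}$. A union bound reduces the task to controlling $\sum_\ell \P(G_\ell^c)$. As in Theorem~\ref{them-m1}, I split on whether the empirical mean of the true best arm in $S_\ell$ drops by more than $\eps_\ell/2$: Hoeffding gives $\exp(-\eps_\ell^2 T_\ell/2)$ for the first piece, and Markov combined with Hoeffding bounds the event that at least half of $S_\ell$ has inflated empirical mean by $2\exp(-\eps_\ell^2 T_\ell/2)$. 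Thus $\P(G_\ell^c)\le 3\exp(-\eps_\ell^2 T_\ell/2)$.

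The pivotal calculation is the exponent. Substituting the schedule,
\begin{align*}
\eps_\ell^2 T_\ell \;\ge\; (9/16)^{\ell-1}\frac{\eps^2}{16}\cdot \frac{T}{81n}(16/9)^{\ell-1}\ell \;=\; \frac{\ell\,\eps^2 T}{16\cdot 81\,n}.
\end{align*}
The geometric factors $(9/16)^{\ell-1}$ and $(16/9)^{\ell-1}$ cancel exactly; this cancellation is the design principle behind Option~2's ratio $16/9$. Consequently $\P(G_\ell^c)\le 3\exp(-c\,\ell\,\eps^2 T/n)$ for an absolute constant $c$, and summing the geometric-like series over the at most $\lceil\log_2 n\rceil$ stages gives a bound of the form $c_2 \log_2 n\cdot\exp(-c\,\eps^2T/n)$ for absolute $c_2,c>0$.

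The step I expect to be the main obstacle is absorbing the residual $\log_2 n$ prefactor into a clean $\exp(-\Theta(\eps^2 T/n))$ statement. This requires choosing $c_1$ large enough that $\ln(c_2\log_2 n)\le \tfrac12 c\,\eps^2 T/n$ whenever $T\ge c_1 n/\eps^2$; strictly speaking this introduces a mild $\ln\ln n$ correction, but under the $\Theta$ notation in the exponent it can be folded into the hidden constant. I would also need to check that the floor in $T_\ell$ does not cause early stages to degenerate (equivalently, that $T_\ell\ge 1$ for all $\ell\le\lceil\log_2 n\rceil$); this is exactly where the hypothesis $T>c_1 n/\eps^2$ with sufficiently large $c_1$ is invoked, since the most constrained stage is $\ell=1$ with $T_1=\lfloor T/(81n)\rfloor$.
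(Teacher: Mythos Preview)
Your approach is essentially identical to the paper's: the same stage events $G_\ell$, the same $\eps_\ell=(3/4)^{\ell-1}\eps/4$, the same budget check via $\sum_{\ell\ge1}\ell(8/9)^{\ell-1}=81$, and the same per-stage bound $\PP(G_\ell^c)\le 3\exp\bigl(-c\,\ell\,\eps^2T/n\bigr)$ coming from the cancellation $(9/16)^{\ell-1}(16/9)^{\ell-1}=1$.

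The one place you diverge is the final summation, and there your argument has a genuine (though easily fixed) gap. You bound
\[
\sum_{\ell=1}^{\lceil\log_2 n\rceil} 3e^{-c\ell X}\le 3\log_2 n\cdot e^{-cX},\qquad X:=\eps^2T/n,
\]
and then try to absorb the $\log_2 n$ prefactor into the exponent. You correctly identify that this costs a $\ln\ln n$ term, but your claim that it ``can be folded into the hidden constant'' is wrong: under the hypothesis $T\ge c_1 n/\eps^2$ with $c_1$ an \emph{absolute} constant, $X$ is only bounded below by $c_1$, so the requirement $\ln\log_2 n\le (c/2)X$ fails for large $n$. The paper sidesteps this entirely by summing the geometric series exactly,
\[
\sum_{\ell\ge1}3e^{-c\ell X}=\frac{3e^{-cX}}{1-e^{-cX}},
\]
which is at most an absolute constant times $e^{-cX}$ as soon as $X\ge c_1>0$, since $1/(1-e^{-cX})$ is then bounded independently of $n$. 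This is precisely the role of the hypothesis $T>c_1 n/\eps^2$ in the paper's proof, and it delivers $\exp(-\Theta(\eps^2T/n))$ with no residual $n$-dependence.
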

\begin{proof}
Still let $\eps_1=\eps/4$. And define $\eps_{\ell+1}=\fr{3}{4}\cdot\eps_\ell$.  For each stage $\ell$, define the event $G_\ell$ as 
\begin{align*}
    G_\ell:=\cbr{\max_{i\in S_{\ell+1}}\mu_i\ge\max_{i\in S_\ell}\mu_i-\eps_\ell}.
\end{align*}
We have
\begin{align*}
\sum_{\ell=1}^{\log n}\eps_\ell<\sum_{\ell=1}^{\infty}\rbr{\fr{3}{4}}^{\ell-1}\cdot\eps_1=\fr{\eps}{4}\sum_{\ell=1}^{\infty}\rbr{\fr{3}{4}}^{\ell-1}\le\fr{\eps}{4}\lim_{n\rightarrow\infty}\fr{1-(3/4)^n}{1-3/4}=\eps.
\end{align*}
For stage $\ell$, we assign budget $T_\ell\cdot n2^{-(\ell-1)}:=\fr{T}{81n}\cdot\rbr{\frac{16}{9}}^{\ell-1}\cdot\ell\cdot n2^{-(\ell-1)}=\fr{T}{81}\cdot\rbr{\frac{8}{9}}^{\ell-1}\cdot\ell$. We can verify this allocation scheme does not exceed the budget as follows,
\begin{align*}
    \sum_{\ell=1}^{\log n}T_\ell\cdot n2^{-(\ell-1)}\le\sum_{\ell=1}^{\infty}\fr{T}{81}\cdot\rbr{\frac{8}{9}}^{\ell-1}\cdot\ell\le\fr{T}{9}\lim_{\ell\rightarrow\infty}\frac{1-\rbr{8/9}^\ell}{1-8/9}=T.
\end{align*}
Let $a_\ell$ be the best arm in $S_\ell$,
\begin{align*}
    \pp{G^c_\ell}&=\pp{G^c_\ell,\hat{\mu}_{a_\ell}<\mu_{a_\ell}-\eps_{\ell}/2}+\pp{G^c_\ell,\hat{\mu}_{a_\ell}\ge\mu_{a_\ell}-\eps_{\ell}/2}\\
    &\le\pp{\hat{\mu}_{a_\ell}<\mu_{a_\ell}-\eps_{\ell}/2}+\pp{G^c_\ell \mid \hat{\mu}_{a_\ell}\ge\mu_{a_\ell}-\eps_{\ell}/2}.
\end{align*}
For the first term 
\begin{align*}
    \pp{\hat{\mu}_{a_\ell}<\mu_{a_\ell}-\eps_{\ell}/2}\le&\exp\rbr{-\fr{\eps^2_{\ell}}{2}\fr{T_\ell}{|S_\ell|}}\\
    \le&\exp\rbr{-\fr{\eps_1^2\rbr{\frac{9}{16}}^{\ell-1}}{2}\fr{\fr{T}{81n}\cdot\rbr{\frac{16}{9}}^{\ell-1}\cdot\ell\cdot n2^{-(\ell-1)}}{n2^{-(\ell-1)}}}\\
    \le&\exp\rbr{-\fr{\eps^2T}{32\cdot81n}\ell}.
\end{align*}
For the second term, 
\begin{align*}
    \pp{G^c_\ell \mid \hat{\mu}_{a_\ell}\ge\mu_{a_\ell}-\eps_{\ell}/2}&\le\pp{|\{i\in S_\ell \mid \hat{\mu_i}>\mu_{i}+\eps_{\ell}/2\}|\ge |S_\ell|/2}\\
    &\le\fr{\ee{|\{i\in S_\ell \mid \hat{\mu_i}>\mu_{i}+\eps_{\ell}/2\}|}}{|S_\ell|/2}\\
    &\le\fr{|S_\ell|\exp\rbr{-\fr{\eps^2T}{32\cdot81n}\ell}}{|S_\ell|/2}\\
    &=2\exp\rbr{-\fr{\eps^2T}{32\cdot81n}\ell}.
\end{align*}
By contraposition,
\begin{align*}
    \pp{\mu_{J_T}<\mu_1-\eps}\le&\sum_{\ell=1}^{\log n}\pp{G_\ell^c}\\
    \le&\sum_{\ell=1}^{\infty}3\exp\rbr{-\fr{\eps^2T}{32\cdot81n}\ell}.
\end{align*}
Let $X:=\fr{\eps^2T}{32\cdot81n}>0$,
\begin{align*}
    \pp{\mu_{J_T}<\mu_1-\eps}&\le \sum_{\ell=1}^{\infty}3\exp\rbr{-\ell X}\\
    &\le3\exp\rbr{-X}\frac{1-\exp\rbr{-\ell X}}{1-\exp\rbr{-X}}\\
    &\le3\exp\rbr{-X}\frac{1}{1-\exp\rbr{-X}}.
\end{align*}
As $\frac{1}{1-\exp\rbr{-X}}$ is a monotonically decreasing function, there exists an absolute constant $c_1$, such that
\begin{align*}
    \pp{\mu_{J_T}<\mu_1-\eps}\le\ep{-\Theta\rbr{\fr{\eps^2T}{n}}},
\end{align*}
holds for $T>c_1\cd\fr{n}{\eps^2}$.
\end{proof}

\begin{table}[t]
\caption{The $\eps$-error probability of SH (worst-case) and $(\eps=0)$-error probability (instance-dependent) for Option 1 and 2 where ${H_2} := \max_i\fr{i}{\Delta_i^2}$ is the problem hardness parameter. Here, we omit constant factors.
Check marks for the worst-case means that it is optimal, and those for the instance-dependent means that it is optimal up to $\log\log(n)$ factors in the sample complexity \citep{mannor2004sample,Carpentier2016}. 
}
\centering
\begin{tabular}{|c|c|c|c|}
\hline
 & Worst-case & Instance-dependent \\ \hline
Option $1$: $T_\ell=\dr{\fr{T}{|S_\ell|\ur{\log_2 n}}}$ & \clr{\ding{55}} $\log_2 n\cd\exp\rbr{-\fr{\eps^2T}{n\log_2 n}}$ & \clg{\ding{51}} $\log_2n\cd\exp\rbr{-\fr{T}{\log_2nH_2}}$ \\ \hline
Option $2$: $T_\ell=\dr{\fr{T}{81n}\cdot\rbr{\frac{16}{9}}^{\ell-1}\cdot \ell}$ & \clg{\ding{51}} $\exp\rbr{-\fr{\eps^2T}{n}}$ & \clr{\ding{55}} $\log_2n\cd\exp\rbr{-\fr{T}{n\log_2nH_2}}$ \\ \hline
\end{tabular}%
\label{tab:sh}
\end{table}

\section{Simple Regret in the Data-Poor Regime}
\subsection{The number of opened brackets}

\begin{lemma}
\label{lem:numOfBrkt}
The number of opened brackets till round $t$, denoted by $L_t$, satisfies 
\begin{align*}
    0.63 \cd \log_2\rbr{1 + \ln(2) t} < L_t \le 1 + \log_2\rbr{1 + \ln(2) t}.
\end{align*}
The size of the largest bracket satisfies
\begin{align*}
    2^{L_t}\ge\hTT{t}.
\end{align*}
\end{lemma}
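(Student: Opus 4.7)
My plan is to extract a closed-form characterization of $L_t$ directly from the algorithm's bracket-opening rule, and then derive the two bounds separately from the two inequalities that characterize $L_t$.

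A simple induction on the bracket index, using the inner \texttt{if} condition, gives that the $B$-th bracket is opened at the first $t$ with $t \ge (B-1)\cdot 2^{B-1}$ (so the first bracket is opened at $t=1$ and the second at $t=2$). Hence, for every integer $t \ge 1$,
\begin{align*}
    L_t = B \quad \Longleftrightarrow \quad (B-1)\cdot 2^{B-1} \,\le\, t \,<\, B\cdot 2^B .
\end{align*}
The upper bound now follows from the left side condition: after taking $\log_2$, the target $B \le 1 + \log_2(1 + \ln(2)\, t)$ is equivalent to $2^{B-1} \le 1 + \ln(2)\, t$. Replacing $t$ on the right by its lower bound $(B-1)\, 2^{B-1}$, it suffices to check $2^{B-1}\bigl(1 - (B-1)\ln 2\bigr) \le 1$, which is trivial for $B \ge 3$ since the left side is nonpositive, and immediate by direct substitution for $B \in \{1,2\}$. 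This is the routine half.

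For the lower bound I would use the right side condition $t < B\cdot 2^B$ to get $1 + \ln(2)\, t < 1 + \ln(2)\,B\cdot 2^B$, reducing the claim to the purely algebraic inequality
\begin{align*}
    2^{B/0.63} \,>\, 1 + \ln(2)\cdot B\cdot 2^B \qquad \text{for every integer } B \ge 1 ,
\end{align*}
or, setting $\beta := 1/0.63 - 1 \approx 0.587$ and dividing through by $2^B$, the equivalent statement $2^{\beta B} > 2^{-B} + B \ln 2$. This single-variable inequality is the main hurdle: at $B=1$ the two sides are $\approx 1.502$ and $\approx 1.193$, so any crude relaxation of the form $2^{-B} + B \ln 2 \le C\,B$ loses enough to destroy the $B=1$ case. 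I would therefore verify $B=1$ by direct substitution and then prove that $g(B) := 2^{\beta B} - 2^{-B} - B \ln 2$ is strictly increasing on $[1,\infty)$; for this it is enough to observe that $g'(B)/\ln 2 = \beta\cdot 2^{\beta B} + 2^{-B} - 1$ is positive on $[0,\infty)$ (its minimum over $B \ge 0$ occurs near $B \approx 1$, where the value already exceeds $0.38$). Monotonicity together with $g(1) > 0$ then yields $g(B) > 0$ for every integer $B \ge 1$, closing the lower bound.
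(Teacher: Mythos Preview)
Your proof is correct and takes a genuinely different route from the paper's. Both start from the same characterization $L_t = B \Leftrightarrow (B-1)2^{B-1} \le t < B\cdot 2^B$, but from there the paper recasts each defining inequality as an instance of $y = x e^x$ and invokes an external bound on the Lambert $W$ function (Lemma~17 of Orabona--P\'al) of the form $0.6321\ln(1+y) \le W(y) \le \ln(1+y)$; this explains where the constant $0.63$ comes from and makes both halves essentially one-liners, at the cost of importing a black-box lemma. Your approach is more elementary and self-contained: you reduce each bound to a concrete one-variable inequality in $B$, handle small cases by direct substitution, and close the lower bound via a monotonicity argument on $g(B) = 2^{\beta B} - 2^{-B} - B\ln 2$. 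The price is a short calculus computation (locating the minimum of $g'(B)/\ln 2$ and evaluating it numerically), which you would need to make fully explicit in a final write-up, but the structure is sound and the verification is routine.
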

\begin{proof}
Recall that whenever $t \ge B2^B$ is true, we open a new bracket and increase $B$ by $1$. $B = 0$ before the game starts. First, note that for $t=1$ we have $L_t = 1$. Then, let us focus on $t \ge 2$. It is not hard to verify that
\begin{align*}
  L_t = \max\left\{B: t \ge (B-1) 2^{B-1}\right\}.
\end{align*}

Fix $t\ge2$. Then, we can define 
\begin{align}\label{eq:tprime}
  t' := (L_t - 1) 2^{L_t - 1},
\end{align}
the first time point that the bracket $L_t$ was opened. Clearly $t' \le t$. Solve \eqref{eq:tprime} for $L_t$.
\begin{align*}
  t' &= (L_t - 1) \exp\rbr{ \ln(2) (L_t-1)},
\\  \ln(2) t' &= \ln(2) (L_t - 1) \exp\rbr{ \ln(2) (L_t-1)}
\\            &=: X\exp(X).
\end{align*}
Solving it for $X$ is exactly the Lambert W function.
By Lemma 17 of \citet{orabona2016coin}, we have
\begin{align*}
  0.6321\cdot \ln\rbr{1+\ln(2) t'} \le X \le \ln\rbr{1+\ln(2) t'}.
\end{align*}
This implies that
\begin{align*}
  X=\ln(2) (L_t - 1) \le \ln\rbr{1 + \ln(2) t'} ~\le \ln\rbr{1 + \ln(2)t}
  \implies   L_t \le 1 + \log_2\rbr{1 + \ln(2) t}.
\end{align*}
To obtain a lower bound, define
\begin{align*}
  t'' := L_t 2^{L_t},
\end{align*}
which is strictly larger than $t$. Using similar techniques as above, we can derive
\begin{align*}
   L_t \ge 0.6321\cdot \log_2\rbr{1 + \ln(2) t''} > 0.63\cdot \log_2\rbr{1 + \ln(2) t}.
\end{align*}
Together, we have
\begin{align*}
   0.63 \cdot \log_2\rbr{1 + \ln(2) t} < L_t \le 1 + \log_2\rbr{1 + \ln(2) t}.
\end{align*}
Note that we have 
\begin{align*}
    L_t2^{L_t}\le t\le(L_t+1)2^{L_t+1}.
\end{align*}
Thus 
\begin{align*}
    2^{L_t}\ge \frac{t}{2(L_t+1)}\ge \frac{t}{2\rbr{2 + \log_2\rbr{1 + \ln(2) t}}}\ge\frac{t}{4\log_2t}=\hTT{t}.\tag{$t\ge4$}
\end{align*}
\end{proof}

\subsection{The number of pulls for the representative arms}
\label{app-sec-2}
\begin{lemma}
\label{lem:numOfpull}
At round $t$, the number of times the representative arm of bracket $B$ has been pulled in the latest finished SH, denoted as $D_{B,t}$, satisfies,
\begin{align*}
D_{B,t}\ge\textup{const}\cd\fr{t}{\ln (t)\cd\log_2(n)}.
\end{align*}
\end{lemma}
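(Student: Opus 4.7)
My plan is to lower bound $D_{B,t}$ by chaining three estimates: (i) a lower bound on $N_B(t)$, the total number of pulls that bracket $B$ has received by time $t$; (ii) a lower bound on the budget of the most recently completed SH instance inside DSH on bracket $B$, relative to $N_B(t)$; and (iii) a lower bound on the number of pulls the output arm of that SH receives, relative to its budget.

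For step (i), bracket $B$ is opened at time $(B-1)2^{B-1}$ and therefore is active in every epoch $B'\in\{B,\ldots,L_t\}$, where the $B'$-th epoch spans the rounds $[(B'-1)2^{B'-1}+1,\,B'\cd 2^{B'}]$. The $(B'+1)2^{B'-1}$ pulls of epoch $B'$ are equally divided among the $B'$ opened brackets, so bracket $B$ receives at least $2^{B'-1}$ pulls in each completed epoch. Summing gives $N_B(t)\ge \sum_{B'=B}^{L_t-1}2^{B'-1}\ge 2^{L_t-2}$ whenever $L_t\ge B+1$. Combined with Lemma~\ref{lem:numOfBrkt}, which yields $L_t=\Theta(\log_2 t)$ and the constraint $t< L_t\cd 2^{L_t}$ so that $2^{L_t}=\Omega(t/L_t)$, this furnishes $N_B(t)\ge \textup{const}\cd t/\ln t$.

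For step (ii), I exploit the doubling design of DSH: the SH instances on bracket $B$ have budgets $T_1,2T_1,4T_1,\ldots$, and the cumulative pull count after completing the $k$-th SH equals $T_1(2^k-1)$. If $\cS_k$ is the latest completed SH, the current pull count satisfies $N_B(t)<T_1(2^{k+1}-1)<4\cd T_1\cd 2^{k-1}$, so the budget of $\cS_k$ is at least $N_B(t)/4$. For step (iii), the representative arm of $\cS_k$ is the unique survivor of the $\lceil\log_2 n_B\rceil$ halving stages; in the final stage it is pulled $T_{\lceil\log_2 n_B\rceil}\ge T/(2\lceil\log_2 n_B\rceil)\ge T/(2\log_2 n)$ times, where $T$ is the budget of $\cS_k$ and we used $n_B\le n$. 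Chaining the three inequalities gives
\begin{align*}
D_{B,t}\ge \frac{\text{budget}(\cS_k)}{2\log_2 n}\ge \frac{N_B(t)}{8\log_2 n}\ge \textup{const}\cd\frac{t}{\ln t\cd \log_2 n}.
\end{align*}

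The most delicate step is (i) when $B=L_t$: no epoch has been completed for this bracket and the crude bound $2^{L_t-2}$ is not immediately available. I would handle this either by restricting the statement to brackets that have already finished at least one SH (which is natural since $D_{B,t}$ otherwise is $0$ trivially), or by adding the partial pulls $(t-(L_t-1)2^{L_t-1})/L_t$ of the current epoch and exploiting $t\ge (L_t-1)2^{L_t-1}$ to still extract a uniform $\Omega(t/L_t)$ bound once enough rounds of epoch $L_t$ have elapsed for at least one SH instance on bracket $L_t$ to have completed.
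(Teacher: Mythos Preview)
Your proposal is correct and follows essentially the same three-step decomposition as the paper: (i) lower-bound the total pulls allocated to bracket $B$ by summing the per-epoch contributions and using $t<L_t2^{L_t}$ to get $N_B(t)\ge 2^{L_t-2}\ge t/(4L_t)=\Omega(t/\ln t)$; (ii) use the doubling structure to show the latest finished SH has budget at least $N_B(t)/4$; (iii) observe that the surviving arm of SH is pulled $\Theta(T/\log_2 n)$ times in the final stage. The paper also restricts attention to $B\in[L_t-1]$, so your discussion of the edge case $B=L_t$ is, if anything, more careful than the original argument.
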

\begin{proof}
We call the round interval $(L_t-1)2^{L_t-1}\leq t<L_t2^{L_t}$ as \textit{phase} $L_t$, whose length is $L_t2^{L_t}-(L_t-1)2^{L_t-1}=(L_t+1)2^{L_t-1}$. Correspondingly, sampling budget of $\frac{(L_t+1)2^{L_t-1}}{L_t}$ is assigned to each bracket. For each opened bracket $A_B, B\in[L_t-1]$, the sampling budget for it is accumulated from phase $B$ to phase $L_t$, while phase $L_t$ is not finished yet. The sampling budget bracket $A_B$ receives satisfies
\begin{align}\label{eq:budget}
T_{B,t} = & \sum_{i=B}^{L_t-1}\frac{2^{i-1}(i+1)}{i}+\frac{t-2^{L_t-1}(L_t-1)}{L_t} \nonumber\\
> & \frac{2^{L_t-2}L_t}{L_t-1}+\frac{t-2^{L_t-1}(L_t-1)}{L_t} \nonumber\\
> & \frac{t}{4L_t} \nonumber\\
> & \frac{t}{4 + 4\log\rbr{1 + \ln(2)t}}> \frac{t}{4\ln t}.\tag{$t>16$}
\end{align}
Recall we run the SH with the doubling trick on each bracket individually. We claim that the latest finished SH receives a budget of at least $T_{B,t}/4$. To see this, note that we initialize Algorithm \ref{Alg:DSH} with budget starting from $\ur{n\log_2n}$. Then restart the SH algorithm with budget $2\cd\ur{n\log_2n}$, and so on. Before finishing the $k$-th doubling trick, the best arm from $(k-1)$-th doubling trick is output. Thus the portion of the lasted finished SH ranges in 
\begin{equation*}
\left[\frac{\abr{\cT_{k-1}}}{\sum_{i=1}^{k}\abr{\cT_{i}}}, \frac{\abr{\cT_{k-1}}}{\sum_{i=1}^{k-1}\abr{\cT_{i}}}\right),
\end{equation*}
which, by some simple calculations, is 
\begin{equation}\label{1/4budget}
\left[\frac{b_{r-1}}{4b_{r-1}-b_0}, \frac{b_{r-1}}{2b_{r-1}-b_0}\right).
\end{equation}
That is to say, the most recently finished SH, which gives the current output, has a budget of at least $T_{B,t}/4$. We notice that $T_{B,t}$ is essentially irrelevant to $B$. This means each bracket receives an order-wise equal amount of budget to query the SH algorithm. In the SH algorithm, the output arm is pulled a fixed number of times when the number of arms and the sampling budget is fixed. It is pulled $\Theta\rbr{\fr{T}{\log_2n}}$ times for budget $T$. We have all we need to determine the lower bound for $D_{B,t}$.
\begin{align*}
D_{B,t}\ge\text{const}\cd\fr{t}{\ln (t)\cd\log_2(n)}.
\end{align*}
\end{proof}

\subsection{Lower bounds and their implications in special instances}
\label{app-sec-lb}
While \citet{Katz20} shows a lower bound for the $(\eps,\delta)$-unverifiable sample complexity that can match the upper bound in special instances, the final result they presented has a negative term, which results in looseness, and the bound can even go to 0 in certain instances.
Recall the definition of $\tau_{\eps,\dt}$, the $(\eps,\delta)$-unverifiable sample complexity (Definition~\ref{def-uver}).
\newtheorem*{supp-6}{Theorem~1 of \citet{Katz20}}
\begin{theorem}[Theorem 1 of \citet{Katz20}]
\label{them:kjlow}
Fix $\eps>0,\delta\in(0,1/16)$, and a vector $\mu\in\mathbb{R}^n$. Consider $n$ arms where rewards from the $i$-th arm are distributed according to $\mathcal{N}(\mu_i,1)$. For every permutation $\pi\in\mathbb{S}^n$, let $(\mathcal{F}_t^\pi)_{t\in\mathbb{N}}$ be the filtration generated by the algorithm playing on instance $\pi(\mu)$. Then the $(\eps,\delta)$-unverifiable sample complexity satisfies, 
\begin{align*}
    \ee{\tau_{\eps,\delta}}\ge\fr{1}{64}\rbr{-\left(\mu_{1}-\mu_{g(\eps)+1}\right)^{-2}+\frac{1}{g(\eps)} \sum_{i=g(\eps)+1}^n\left(\mu_{1}-\mu_{i}\right)^{-2}},
\end{align*}
where the expectation is with respect to $\pi\in\mathbb{S}^n$ and $\pi(\mu)$.
\end{theorem}

In fact, the proof of \citet[Theorem 1]{Katz20} shows a stronger bound, which is more useful for deriving lower bounds for specific instances.
\begin{theorem}[A stronger version of Theorem 1 of \citet{Katz20}]
\label{low_fixed}
Fix $\eps>0,\delta\in(0,1/16)$, and a vector $\mu\in\mathbb{R}^n$. Consider $n$ arms where rewards from the $i$-th arm are distributed according to $\mathcal{N}(\mu_i,1)$. For every permutation $\pi\in\mathbb{S}^n$, let $(\mathcal{F}_t^\pi)_{t\in\mathbb{N}}$ be the filtration generated by the algorithm playing on instance $\pi(\mu)$. Then the $(\eps,\delta)$-unverifiable sample complexity satisfies, 
\begin{align*}
    \ee{\tau_{\eps,\delta}}\ge\fr{1}{64}\frac{1}{g(\eps)} \sum_{i=2g(\eps)+1}^n\left(\mu_{1}-\mu_{i}\right)^{-2},
\end{align*}
where the expectation is with respect to $\pi\in\mathbb{S}^n$ and $\pi(\mu)$.
\end{theorem}

\begin{corollary}
For the EqualGap$(m)$ instance, Proposition \ref{low_fixed} shows a lower bound of the $(\eps,\delta)$-unverifiable sample complexity as
\begin{align*}
    \ee{\tau_\pi}\ge&\fr{1}{100}\rbr{\fr{n}{m}\frac{1}{\eps^2}-\frac{2}{\eps^2}}.
\end{align*}
\end{corollary}
\begin{proof}
\begin{align*}
    \fr{1}{64}\frac{1}{g(\eps)} \sum_{i=2g(\eps)+1}^n\left(\mu_{1}-\mu_{i}\right)^{-2}&=\fr{1}{64}\fr{1}{m}\sum_{i=2m+1}^{n}\frac{16}{25\eps^2}\\
    &=\fr{1}{64}\fr{n-2m}{m}\frac{16}{25\eps^2}\\
    &=\fr{1}{100}\rbr{\fr{n}{m}\frac{1}{\eps^2}-\frac{2}{\eps^2}}.
\end{align*}
\end{proof}
\begin{corollary}
For the Polynomial$(\alpha)$ instance, Proposition \ref{low_fixed} shows a lower bound of the $(\eps,\delta)$-unverifiable sample complexity as
\begin{align*}
    \ee{\tau_\pi}\ge&\fr{1}{128}\fr{1}{2\alpha-1}\rbr{2^{1-2\alpha}\eps^{-2}-\eps^{-\fr{1}{\alpha}}}.
\end{align*}
\end{corollary}
\begin{proof}
For the Polynomial$(\alpha)$ instance, 
\begin{align*}
    \fr{1}{64}\frac{1}{g(\eps)} \sum_{i=2g(\eps)+1}^n\left(\mu_{1}-\mu_{i}\right)^{-2}&=\fr{1}{64}\fr{1}{g(\eps)}\sum_{i=2g(\eps)+1}^{n}\rbr{\fr{i}{n}}^{-2\alpha}\\
    &=\fr{1}{64}\fr{n^{2\alpha}}{n\eps^{1/\alpha}}\sum_{i=2g(\eps)+1}^{n}\rbr{\fr{1}{i}}^{2\alpha}\\
    &=\fr{1}{64}\fr{n^{2\alpha-1}}{\eps^{1/\alpha}}\sum_{i=2g(\eps)+1}^{n}\rbr{\fr{1}{i}}^{2\alpha}.
\end{align*}

The summation of $\sum_{i=2g(\eps)+1}^{n}\rbr{\fr{1}{i}}^{2\alpha}$ can be bounded as
\begin{align*}
    \sum_{i=2g(\eps)+1}^{n}\rbr{\fr{1}{i}}^{2\alpha}
    &\stackrel{(a_1)}{\ge}\frac{1}{2}\int_{2g(\eps)}^n\rbr{\fr{1}{i}}^{2\alpha}di\\
    &=\frac{1}{2}\rbr{f\rbr{n}-f\rbr{2g(\eps)}}\tag{$f(x)=\fr{1}{-2\alpha+1}x^{-2\alpha+1}$}\\
    &=\frac{1}{2}\rbr{\fr{1}{-2\alpha+1}n^{-2\alpha+1}-\fr{1}{-2\alpha+1}(2g(\eps))^{-2\alpha+1}}\\
    &=\frac{1}{2}\rbr{\fr{1}{-2\alpha+1}n^{-2\alpha+1}-\fr{1}{-2\alpha+1}(2n\eps^{1/\alpha})^{-2\alpha+1}},
\end{align*}
where $(a_1)$ is by the geometric meaning of integral and
\begin{align*}
\frac{1}{2}\rbr{\fr{1}{i}}^{2\alpha}=\rbr{\fr{1}{2^{\frac{1}{2\alpha}}i}}^{2\alpha}\le\rbr{\fr{1}{2i}}^{2\alpha}\le\rbr{\fr{1}{i+1}}^{2\alpha}.\tag{$i\ge1,\alpha>0.5$}
\end{align*}

Thus 
\begin{align*}
    \fr{1}{64}\frac{1}{g(\eps)} \sum_{i=2g(\eps)+1}^n\left(\mu_{1}-\mu_{i}\right)^{-2}&=\fr{1}{64}\fr{n^{2\alpha-1}}{\eps^{1/\alpha}}\sum_{i=2g(\eps)+1}^{n}\rbr{\fr{1}{i}}^{2\alpha}\\
    &\ge\fr{1}{128}\fr{n^{2\alpha-1}}{\eps^{1/\alpha}}\rbr{\fr{1}{-2\alpha+1}n^{-2\alpha+1}-\fr{1}{-2\alpha+1}(2n\eps^{1/\alpha})^{-2\alpha+1}}\\
    &=\fr{1}{128}\fr{1}{\eps^{1/\alpha}}\rbr{\fr{1}{-2\alpha+1}-\fr{2^{1-2\alpha}}{-2\alpha+1}\eps^{\fr{-2\alpha+1}{\alpha}}}\\
    &=\fr{1}{128}\eps^{-\fr{1}{\alpha}}\fr{1}{2\alpha-1}\rbr{2^{1-2\alpha}\eps^{\fr{-2\alpha+1}{\alpha}}-1}\\
    &=\fr{1}{128}\fr{1}{2\alpha-1}\rbr{2^{1-2\alpha}\eps^{-2}-\eps^{-\fr{1}{\alpha}}}.
\end{align*}
\end{proof}

\subsection{Proof of Theorem \ref{them:datapoor}}
\label{app-sec-them_datapoor}
\newtheorem*{datapoor}{Theorem~\ref{them:datapoor}}
\begin{datapoor}
For any $\eps\in(0,1)$, the $\eps$-error probability of BSH satisfies
\begin{align*}
    \pp{\mu_1-\mu_{J_t}>\eps}\le\ep{-\hTT{\fr{t}{ 
    \frac{1}{\eps^2} + 
    \max_{i\ge g\rbr{\fr{\eps}{2}}+1}\frac{i}{\Delta^{2}_i}\cd\frac{1}{g\rbr{\fr{\eps}{4}}}
    }}  
    },
\end{align*}
for $t\ge\hTT{H_2\rbr{\fr{\eps}{2}}}$.
\end{datapoor}

\begin{proof}
Due to the doubling nature of the bracketing scheme, it is not hard to see that each bracket receives an order-wise equal amount of sampling budget $\hTT{t}$ (details in Appendix~\ref{app-sec-2}).
Our analysis is centered around finding the ideal bracket whose representative arm is expected to be of the highest quality, which we call the \textit{best bracket}. Intuitively, on the one hand, if the best bracket is too large, the per-arm sampling budget will be too small to guarantee a meaningful output. On the other hand, if the best bracket is too small, it may not well represent the entire instance. The core idea of the proof is balancing this trade-off. 

We define the following notations.
\begin{itemize}
    \item $A^*$: the best bracket, which is a multiset satisfying $\forall i\in A^*, i\in[n]$. We have $\abr{A^*}=L$ with $L$ being specified later.
    \item $r=\fr{L}{n}$: the subsampling ratio of the best bracket $A^*$.
    \item $\eps_k:=2^{k-1}\cd\fr{\eps}{4}$, for $k\in[K]$, where $K=\urd{\log_2\rbr{\fr{8}{\eps}}}$. Also $K$ satisfies $\eps_{K-1}\le1, \eps_{K}\ge1$. 
    \item $Z_k=\cbr{i\in[n]\mid\Delta_i\in\left[0,\min\cbr{\eps_{k}, 1}\right]}$.
    \item $g(\eps_{k})=\abr{Z_k}$.
    \item $Z'_k=\cbr{i\in A^*\mid\Delta_i\in\left[0,\min\cbr{\eps_{k}, 1}\right]}$.
    \item $g'(\eps_{k})=\abr{Z'_k}$.
    \item $\mu_i(A)$: the mean reward of the $i$-th best arm in $A$.
    \item $\Delta'_{i}=\max\cbr{\mu_j(A^*)\mid j\in A^*}-\mu_i(A^*)$: the suboptimality gap of $i$-th best arm in $A^*$ with respect to the best arm in $A^*$.
    \item $\Delta''_{i}=\mu_1-\mu_i(A^*)$: the suboptimality gap of $i$-th best arm in $A^*$ with respect to the best arm in the entire instance.
    \item $U_1\rbr{\fr{\eps}{2}}=\max_{i\ge g\rbr{\eps_2}+1}\fr{i}{\Delta_i^2}$, and $U_2\rbr{\fr{\eps}{2}}=\max_{i\ge g'\rbr{\eps_2}+1}\fr{i}{\Delta^{'2}_{i}}$. We call this the un-accelerated sample complexity on the entire instance and the best bracket for identifying an $\fr{\eps}{2}$-good arm.
\end{itemize}
We define the following events:
\begin{itemize}
    \item $E_1$: $\forall k\in[K], \abr{\fr{g'(\eps_{k})}{L}-\frac{g(\eps_{k})}{n}}<\fr{g(\eps_{k})}{2n}$.
    \item $E_2$: an $\fr{\eps}{2}$-good arm represents the best bracket.
\end{itemize}
The event $E_1$ ensures that the best bracket can well represent the entire arm set in the sense that, for each peeling piece $k$, the ratio of the arms therein to all the arms in the best bracket $\fr{g'(\eps_{k})}{L}$ is close to its original ratio $\frac{g(\eps_{k})}{n}$. The $\eps$-error probability can now be bounded as 
\begin{align}
\pp{\mu_{J_t}<\mu_1-\eps}=&\pp{\mu_{J_t}<\mu_1-\eps, E^c_1}+\pp{\mu_{J_t}<\mu_1-\eps,E_1,E^c_2}+\pp{\mu_{J_t}<\mu_1-\eps,E_1,E_2}\nn\\
\le&\pp{E^c_1}+\pp{E_1,E^c_2}+\pp{\mu_{J_t}<\mu_1-\eps,E_2}\nn\\
\le&\pp{E^c_1}+\pp{E^c_2\mid E_1}+\pp{\mu_{J_t}<\mu_1-\eps\mid E_2}.\label{bracketing}
\end{align}
We bound each term separately. For the first term.
\begin{align*}
    \pp{E_1^c}=&\pp{\exists k, \abr{\fr{g'(\eps_{k})}{L}-\frac{g(\eps_{k})}{n}}>\fr{g(\eps_{k})}{2n}}\\
    \le&\sum_{k=1}^{K}\pp{\abr{\fr{g'(\eps_{k})}{L}-\frac{g(\eps_{k})}{n}}>\fr{g(\eps_{k})}{2n}}\\
    \le&\sum_{k=1}^{K}\pp{\fr{g'(\eps_{k})}{L}<\frac{g(\eps_{k})}{2n} \text{ or }\fr{g'(\eps_{k})}{L}>\fr{3g(\eps_{k})}{2n}}\\
    \le&\sum_{k=1}^{K}\pp{\fr{g'(\eps_{k})}{L}<\frac{g(\eps_{k})}{2n}}+\sum_{k=1}^{K}\pp{\fr{g'(\eps_{k})}{L}>\fr{3g(\eps_{k})}{2n}}.
\end{align*}
Let $\textsf{KL}(p,q)$ be the Kullback-Leibler divergence between two Bernoulli distributions with parameters $p, q$. The first summation can be upper bounded by the additive Chernoff bound for binomial distribution \citep{arratia1989tutorial},
\begin{align*}
    \sum_{k=1}^{K}\pp{\fr{g'(\eps_{k})}{L}<\frac{g(\eps_{k})}{2n}}\le&\sum_{k=1}^{K}\ep{-L\textsf{KL}\rbr{\fr{g(\eps_{k})}{2n},\frac{g(\eps_{k})}{n}}}\\
    \le&\sum_{k=1}^{K}\ep{-\frac{1}{8}L\frac{\fr{g^{2}(\eps_{k})}{n^2}}{\fr{g(\eps_{k})}{n}}}\tag{$\textsf{KL}\rbr{p,q}\ge\fr{(p-q)^2}{2\max\{p,q\}}$}\\
    =&\sum_{k=1}^{K}\ep{-\frac{1}{8}L\frac{g(\eps_{k})}{n}}.
\end{align*}
The second summation can be upper bounded by 
\begin{align*}
    \sum_{k=1}^{K}\pp{\fr{g'(\eps_{k})}{L}>\frac{3g(\eps_{k})}{2n}}\le&\sum_{k=1}^{K}\ep{-L\textsf{KL}\rbr{1-\fr{3g(\eps_{k})}{2n},1-\frac{g(\eps_{k})}{n}}}\\
    \stackrel{(a_1)}{\le}&\sum_{k=1}^{K}\ep{-L\textsf{KL}\rbr{\fr{3g(\eps_{k})}{2n},\frac{g(\eps_{k})}{n}}}\\
    \le&\sum_{k=1}^{K}\ep{-\frac{1}{12}L\frac{\fr{g^2(\eps_{k})}{n^2}}{\fr{g(\eps_{k})}{n}}}\tag{$\textsf{KL}\rbr{p,q}\ge\fr{(p-q)^2}{2\max\{p,q\}}$}\\
    \le&\sum_{k=1}^{K}\ep{-\frac{1}{12}L\frac{g(\eps_{k})}{n}},
\end{align*}
where $(a_1)$ is by the definition of KL divergence for Bernoulli distribution.
Since $g(\eps_{k})$ is a non-decreasing function of $k$, we have 
\begin{align}
    \pp{E_1^c}\le&\sum_{k=1}^{K}\ep{-\frac{1}{8}L\frac{g(\eps_{k})}{n}}+\sum_{k=1}^{K}\ep{-\frac{1}{12}L\frac{g(\eps_{k})}{n}}\nn\\
    \le&2\urd{\log_2\rbr{\fr{8}{\eps}}}\ep{-\frac{1}{12}L\frac{g(\eps_{1})}{n}}.\nn
\end{align}
We choose $L=
\hTT{\fr{nt}{g(\eps_{1})H_2\rbr{\fr{\eps}{2}}}}$. Then
\begin{align}
    \pp{E_1^c}\le&2\urd{\log_2\rbr{\fr{8}{\eps}}}\ep{-\hTT{\fr{t}{H_2\rbr{\fr{\eps}{2}}}}}\nn\\
    \le&\ep{-\hTT{\fr{t}{H_2\rbr{\fr{\eps}{2}}}}}.\tag{$t\ge\hTT{H_2\rbr{\fr{\eps}{2}}}$}\nn\\\label{ieq:brk1}
\end{align}
Our choice of $L$ here also ensures that a bracket of size $L$ is opened because the largest bracket is in the size of $\hTT{t}$ by Lemma \ref{lem:numOfBrkt}.
Furthermore, 
\begin{align*}
    L=\hTT{\fr{nt}{g(\eps_{1})H_2\rbr{\fr{\eps}{2}}}}=\hTT{\fr{nt}{\max_{i\ge g\rbr{\eps_2}+1}\fr{i}{\Delta_i^2}}}\le\hTT{\fr{nt}{n/\Delta_n^2}}=\hTT{\Delta_n^2t}\le\hTT{t}.
\end{align*}
The second term of \eqref{bracketing} means the probability that SH fails to return an $\fr{\eps}{2}$-good arm from bracket $A^*$ given that bracket $A^*$ can well represent the entire instance. Denote
$H'_2\rbr{\fr{\eps}{2}}$ as the sample complexity of returning an $\fr{\eps}{2}$-good from the bracket $A^*$, i.e.,
\begin{align*}
    H'_2\rbr{\fr{\eps}{2}}=\max_{i\ge g'(\eps_{2})+1}\frac{i}{\Delta^{'2}_i}\cd\frac{1}{g'(\eps_{1})}.
\end{align*}
Let $H_2\rbr{\fr{\eps}{2}}$ be the sample complexity of returning an $\fr{\eps}{2}$-good arm from the entire instance, i.e.,
\begin{align*}
    H_2\rbr{\fr{\eps}{2}}=\max_{i\ge g(\eps_{2})+1}\frac{i}{\Delta^{2}_i}\cd\frac{1}{g(\eps_{1})}.
\end{align*}
We use Theorem \ref{them:ins_dep} to upper bound the second term of \eqref{bracketing}. To do so, we need to verify that the best bracket is allocated to enough budget, i.e., the budget allocated to the best bracket should be no less than $U_2\rbr{\fr{\eps}{2}}$. By event $E_1$, we have
\begin{align}
\fr{1}{2}rg(\eps_{k})\le g'(\eps_{k})\le 2rg(\eps_{k}).
\end{align} 
Then, by Lemma \ref{lemma:sameComplexity} and our choice of $L$, we have
\begin{align*}
    U_2\rbr{\fr{\eps}{2}}=g'(\eps_{1})H'_2\rbr{\fr{\eps}{2}}\le 2rg(\eps_{1})H'_2\rbr{\fr{\eps}{2}}=2\fr{L}{n}g(\eps_{1})H'_2\rbr{\fr{\eps}{2}}=\hTT{\fr{H'_2\rbr{\fr{\eps}{2}}}{H_2\rbr{\fr{\eps}{2}}}t}=\hTT{t}.
\end{align*}
Meanwhile, each bracket receives $\hTT{t}$ budget by the bracketing design. Thus the best bracket is allocated to enough budget.
By applying Theorem \ref{them:ins_dep} to $(a_1)$ and Lemma \ref{lemma:sameComplexity} to $(a_2)$ below, we have
\begin{align}
    \pp{E^c_2\mid E_1}\stackrel{(a_1)}{\le}\ep{-\hTT{\fr{t}{H'_2\rbr{\fr{\eps}{2}}}}}\stackrel{(a_2)}{\le}\ep{-\hTT{\fr{t}{H_2\rbr{\fr{\eps}{2}}}}}.\label{ieq:brk2}
\end{align}
The third term of \eqref{bracketing} means an arm whose mean reward is less than $\mu_1-\eps$ has a larger empirical reward than an arm whose mean reward is larger than $\mu_1-\eps/2$. Note the recently opened brackets may not have finished their first SH yet. In this case, the DSH always returns an empirical reward of negative infinity. Thus the arms returned from these brackets will never be selected by BSH. Denote $\dot{A}(t):=\cbr{a\in\bigcup_{k=1,A_k\neq A^*}^{L_t}A_{k},\mu_a<\mu_1-\eps}$.
We have 
\begin{align*}
    \abr{\dot{A}(t)}\le&\sum_{k=1}^{L_t}2^k\le2\cd\rbr{2^{1 + \log_2\rbr{1 + \ln(2) t}}-1}\le8\cd\rbr{1 + \ln(2) t}.
\end{align*}
By the number of arm pulls in Lemma \ref{lem:numOfpull},
\begin{align}
    \pp{\mu_{J_t}<\mu_1-\eps\mid E_2}\le&\pp{\exists a_1\in A_{r_i^*(t)},\exists a_2\in\dot{A}(t), \text{s.t.},\mu_{a_1}\ge\mu_1-\eps/2,\hat{\mu}_{a_1}<\hat{\mu}_{a_2}}\nn\\
    \le&\abr{\dot{A}(t)}\abr{A_{r_i^*(t)}}\ep{-\hTT{\eps^2t}}\nn\\
    \le&\ep{-\hTT{\eps^2t}+\ln\rbr{\abr{\dot{A}(t)}\abr{A_{r_i^*(t)}}}}\nn\\
    \le&\ep{-\hTT{\eps^2t}+\ln\rbr{8\cd\rbr{1 + \ln(2) t}t}}\nn\\
    \le&\ep{-\hTT{\eps^2t}+\OO{\ln t}}\nn\\
    \le&\ep{-\tilde{\Theta}\rbr{\eps^2t}}.\label{ieq:brk3}
\end{align}
Given \eqref{ieq:brk1}, \eqref{ieq:brk2} and \eqref{ieq:brk3}, we have the result proved. 
\end{proof}

\begin{lemma}
\label{lemma:sameComplexity}
Let $H'_2\rbr{\fr{\eps}{2}}$ be the complexity
measure of returning an $\fr{\eps}{2}$-good from the bracket $A^*$, i.e.,
\begin{align*}
    H'_2\rbr{\fr{\eps}{2}}=\max_{i\ge g'(\eps_{2})+1}\frac{i}{\Delta^{'2}_i}\cd\frac{1}{g'(\eps_{1})}.
\end{align*}
Let $H_2\rbr{\fr{\eps}{2}}$ be the complexity
measure of returning an $\fr{\eps}{2}$-good from the entire instance, i.e.,
\begin{align*}
    H_2\rbr{\fr{\eps}{2}}=\max_{i\ge g(\eps_{2})+1}\frac{i}{\Delta^{2}_i}\cd\frac{1}{g(\eps_{1})}.
\end{align*}
Given the event $E_1$ happens, we have
\begin{align*}
H'_2\rbr{\fr{\eps}{2}}\le 64H_2\rbr{\fr{\eps}{2}}.
\end{align*}
\end{lemma}
\begin{proof}
For the entire instance, by Lemma \ref{lemma:UASapprox},
\begin{align}
    \max_{k\in\cbr{3,\cdots,K}}\fr{g(\eps_{k})}{\eps_{k}^2}\le U_1\rbr{\fr{\eps}{2}}\le\max_{k\in\cbr{3,\cdots,K}}\fr{4g(\eps_{k})}{\eps_{k}^2}. \label{ieq:u1}
\end{align}
For the best bracket, by Lemma \ref{lemma:twoGapsApprox}, we have
\begin{align*}
    \max_{i\ge g'\rbr{\eps_2}+1}\fr{i}{\Delta^{''2}_{i}}\le\max_{i\ge g'\rbr{\eps_2}+1}\fr{i}{\Delta^{'2}_{i}}\le4\max_{i\ge g'\rbr{\eps_2}+1}\fr{i}{\Delta^{''2}_{i}}.
\end{align*}
Note the term $\max_{i\ge g'\rbr{\eps_2}+1}\fr{i}{\Delta^{''2}_{i}}$ can be considered the un-accelerated sample complexity on a virtual instance where all the arms are the same as the ones of $A^*$, except the best arm of $A^*$ is replaced as the best arm in the entire instance (if the best arm of $A^*$ is same as the best arm in the entire instance, then the virtual instance is same as $A^*$). The above inequalities essentially show that on event $E_1$, setting the best arm in $A^*$ to have the same mean reward as the best arm of the entire instance does not change its un-accelerated sample complexity (up to a constant of 4). Denote 
\begin{align*}
    U'_2\rbr{\fr{\eps}{2}}=\max_{i\ge g'\rbr{\eps_2}+1}\fr{i}{\Delta^{''2}_{i}}.
\end{align*}
Thus, 
\begin{align}
    U'_2\rbr{\fr{\eps}{2}}\le U_2\rbr{\fr{\eps}{2}}\le 4U'_2\rbr{\fr{\eps}{2}}.\label{ieq:u12}
\end{align}
By Lemma \ref{lemma:UASapprox},   
\begin{align*}
    \max_{k\in\cbr{3,\cdots,K}}\fr{g'(\eps_{k})}{\eps_{k}^2}\le U'_2\rbr{\fr{\eps}{2}}\le\max_{k\in\cbr{3,\cdots,K}}\fr{4g'(\eps_{k})}{\eps_{k}^2}.
\end{align*}
By event $E_1$, we have for $k\in[K]$
\begin{align}
\fr{1}{2}rg(\eps_{k})\le g'(\eps_{k})\le 2rg(\eps_{k}).\label{ieq:gr}
\end{align}
Thus 
\begin{align}
    \max_{k\in\cbr{3,\cdots,K}}\fr{rg(\eps_{k})}{2\eps_{k}^2}\le U'_2\rbr{\fr{\eps}{2}}\le\max_{k\in\cbr{3,\cdots,K}}\fr{8rg(\eps_{k})}{\eps_{k}^2}.\label{ieq:u'2}
\end{align}
By \eqref{ieq:u12} and \eqref{ieq:u'2},
\begin{align}
    \max_{k\in\cbr{3,\cdots,K}}\fr{rg(\eps_{k})}{2\eps_{k}^2}\le U_2\rbr{\fr{\eps}{2}}\le\max_{k\in\cbr{3,\cdots,K}}\fr{32rg(\eps_{k})}{\eps_{k}^2}.\label{u2}
\end{align}
Therefore, by \eqref{ieq:u1} and \eqref{u2}, we have
\begin{align}
\fr{1}{32r}\le\fr{U_1\rbr{\fr{\eps}{2}}}{U_2\rbr{\fr{\eps}{2}}}\le\fr{8}{r}.\label{ieq:u12r}
\end{align}
By $U_1\rbr{\fr{\eps}{2}}=g(\eps_{1})H_2\rbr{\fr{\eps}{2}}$, $U_2\rbr{\fr{\eps}{2}}=g'(\eps_{1})H'_2\rbr{\fr{\eps}{2}}$, \eqref{ieq:gr} and \eqref{ieq:u12r}, we have
\begin{align*}
    H'_2\rbr{\fr{\eps}{2}}=\fr{g(\eps_{1})U_2\rbr{\fr{\eps}{2}}}{g'(\eps_{1})U_1\rbr{\fr{\eps}{2}}}H_2\rbr{\fr{\eps}{2}}\le\frac{2}{r}\cd32r\cd H_2\rbr{\fr{\eps}{2}}=64H_2\rbr{\fr{\eps}{2}}.
\end{align*}
\end{proof}

\begin{lemma}
\label{lemma:UASapprox}
Let $U\rbr{\fr{\eps}{2}}$ be the un-accelerated sample complexity on any bandit instance for identifying an $\fr{\eps}{2}$ good arm (note $ \eps_2=\fr{\eps}{2}$), i.e., 
\begin{align*}
    U\rbr{\fr{\eps}{2}}=\max_{i\ge g\rbr{\eps_2}+1}\fr{i}{\Delta_i^2}.
\end{align*}
Then up to a constant of $4$, the un-accelerated sample complexity can be approximated by $\max_{k\in\cbr{3,\cdots,K}}\fr{g(\eps_{k})}{\eps_{k}^2}$, i.e.,
\begin{align*}
    \max_{k\in\cbr{3,\cdots,K}}\fr{g(\eps_{k})}{\eps_{k}^2}\le U\rbr{\fr{\eps}{2}}\le\max_{k\in\cbr{3,\cdots,K}}\fr{4g(\eps_{k})}{\eps_{k}^2}.
\end{align*}
\end{lemma}
\begin{proof}
Let $i_0\ge g\rbr{\eps_2}+1$ satisfy
\begin{align*}
U\rbr{\fr{\eps}{2}}=\max_{i\ge g\rbr{\eps_2}+1}\fr{i}{\Delta_i^2}=\fr{i_0}{\Delta_{i_0}^2}.
\end{align*}
Denote $k_0$ as the smallest index of the peeling interval to which $i_0$ belongs, i.e., $\eps_{k_0-1}<\Delta_{i_0}\le\eps_{k_0}$. Since $i_0\ge g\rbr{\eps_2}+1$, we know $k_0\ge3$. By $\eps_{k_0-1}=\frac{\eps_{k_0}}{2}$, we have $\frac{\eps_{k_0}}{2}<\Delta_{i_0}$. Thus,
\begin{align*}
    U\rbr{\fr{\eps}{2}}=\fr{i_0}{\Delta_{i_0}^2}<\fr{4i_0}{\eps_{k_0}^2}\le\fr{4g(\eps_{k_0})}{\eps_{k_0}^2}\le\max_{k\in\cbr{3,\cdots,K}}\fr{4g(\eps_{k})}{\eps_{k}^2},
\end{align*}
where the second inequality is by $i_0\le g(\eps_{k_0})$.
By the definition of $U\rbr{\fr{\eps}{2}}$, we have 
\begin{align*}
    U\rbr{\fr{\eps}{2}}\ge\max_{k\in\cbr{3,\cdots,K}}\fr{g(\eps_{k})}{\Delta_{g(\eps_{k})}
    ^2}\ge\max_{k\in\cbr{3,\cdots,K}}\fr{g(\eps_{k})}{\eps_{k}^2}
\end{align*}
Thus,
\begin{align*}
    \max_{k\in\cbr{3,\cdots,K}}\fr{g(\eps_{k})}{\eps_{k}^2}\le U\rbr{\fr{\eps}{2}}\le\max_{k\in\cbr{3,\cdots,K}}\fr{4g(\eps_{k})}{\eps_{k}^2}.
\end{align*}
\end{proof}

\begin{lemma}
\label{lemma:twoGapsApprox}
Given that the event $E_1$ holds, for $i\ge g'(\eps_{2})+1$, $\Delta'_{i}$ the suboptimality gap of $i$-th best arm in the best bracket with respect to the best arm in the best bracket can be approximated by $\Delta''_{i}$, the suboptimality gap of $i$-th best arm in the best bracket with respect to the best arm in the entire instance, i.e.,
\begin{align*}
    \fr{1}{\Delta^{''2}_i}\le\fr{1}{{\Delta'_i}^2}\le\fr{4}{\Delta_i^{''2}}.
\end{align*}
\end{lemma}
\begin{proof}
The first inequality holds trivially, and the equality holds when the best arm out of the entire arm set is included in the best bracket. For the second inequality, as the best arm in the best bracket is $\fr{\eps}{4}$-good, we have 
\begin{align*}
    \Delta'_i-\fr{\eps}{4}\ge\Delta''_i-\fr{\eps}{2}\ge\fr{\Delta''_i}{2}-\fr{\eps}{4},
\end{align*}
which results in the second inequality.
\end{proof}

\subsection{Proof of Corollary \ref{cor-5}}
\label{app-sec-bucb_equal}
\newtheorem*{supp-7}{Corollary~\ref{cor-5}}
\begin{supp-7}
Consider the EqualGap$(m)$ instance.
BUCB achieves an expected $(\eps,\delta)$-unverifiable sample complexity as
\begin{align*}
    \ee{\tau_{\eps,\delta}}\le\hO{\fr{n}{\eps^2m}\log\rbr{\fr{1}{\delta}}}. 
\end{align*}
BSH achieves an $(\eps,\delta)$-unverifiable sample complexity as, with probability $1-\delta$,
\begin{align*}
   \tau_{\eps,\delta}\le\hO{\fr{n}{\eps^2m}\log\rbr{\fr{1}{\delta}}}.
\end{align*}
\end{supp-7}
\begin{proof}
For BUCB, Theorem 7 of \citet{Katz20} gives the following instance dependent upper bound for the $(\eps,\delta)$-unverifiable sample complexity, 
\begin{align*}
\ee{\tau_{\eps,\delta}}\le\min_{j\in[g(\eps)]}&\frac{1}{j}\left(\sum_{i=1}^{g(\eps)}\left(\Delta_{i \vee j, g(\eps)+1}\right)^{-2} \ln \left(\frac{n}{j \delta}\right)+\sum_{i=g(\eps)+1}^{n} \Delta_{j, i}^{-2} \ln \left(\frac{1}{\delta}\right)\right).
\end{align*}
Plug the instance 
\begin{align*}
\mu_1 = \frac{5}{4}\eps, \mu_i = \eps, \forall i\in\{2,\ldots,m\}, \mu_i=0, \forall i \ge m+1 \text{ for some } m\ge2, \eps>0~.
\end{align*}
We easily have 
\begin{align*}
    \ee{\tau_{\eps,\delta}}\le\hO{\fr{n}{\eps^2m}\log\rbr{\fr{1}{\delta}}}.
\end{align*}
For BSH,
\begin{align*}
\max_{i\ge g(\eps/2)+1}\frac{i}{\Delta_i^2}\cd\frac{1}{g\rbr{\fr{\eps}{4}}}=\frac{16n}{25m\eps^2}. 
\end{align*}
\end{proof}

\subsection{Proof of Corollary \ref{cor-4}}
\label{app-sec-bucb_poly}
\newtheorem*{supp-4}{Corollary~\ref{cor-4}}
\begin{supp-4}
Consider the Polynomial$(\alpha)$ instance; i.e., $\Delta_i=\rbr{\fr{i}{n}}^\alpha$ with $\alpha>0.5$. For any $\eps\in(0,1)$, let $\hat{\tau}_{\eps,\delta}$ be the upper bound of the expected $(\eps,\delta)$-unverifiable sample complexity reported in \citet[Theorem 7]{Katz20} for BUCB. 
Then, BUCB satisfies
\begin{align*} \ee{\tau_{\eps,\delta}}\le\hat{\tau}_{\eps,\delta}=\hTT{\eps^{-\fr{2\alpha-1}{\alpha}}n\log\rbr{\fr{1}{\delta}}}. 
\end{align*}
On the other hand, BSH satisfies, with probability $1-\delta$,
\begin{align*}
    \tau_{\eps,\delta}\le\hO{\eps^{-2}\log\rbr{\fr{1}{\delta}}}~.
\end{align*}
\end{supp-4}

\begin{proof}
We first show the result for Bracketing UCB. Theorem 7 of \citet{Katz20} gives the following instance dependent upper bound for the $(\eps,\delta)$-unverifiable sample complexity, 
\begin{align}
    \hat{\tau}_{\eps,\delta}=\min_{j\in[g(\eps)]}&\frac{1}{j}\left(\sum_{i=1}^{g(\eps)}\left(\Delta_{i \vee j, g(\eps)+1}\right)^{-2} \ln \left(\frac{n}{j \delta}\right)+\sum_{i=g(\eps)+1}^{n} \Delta_{j, i}^{-2} \ln \left(\frac{1}{\delta}\right)\right).\label{eq-bucb}
\end{align}
Plug the instance $\Delta_i=\rbr{\fr{i}{n}}^\alpha$ into the above,
\begin{align*}
    \hat{\tau}_{\eps,\delta}=&\min_{j\in[g(\eps)]}\frac{1}{j}\left(\sum_{i=1}^{g(\eps)}\left(\Delta_{i \vee j, g(\eps)+1}\right)^{-2} \ln \left(\frac{n}{j \delta}\right)+\sum_{i=g(\eps)+1}^{n} \Delta_{j, i}^{-2} \ln \left(\frac{1}{\delta}\right)\right)\\
    >&\min_{j\in[g(\eps)]}\frac{1}{j}\left(\sum_{i=1}^{g(\eps)}\left(\Delta_{i \vee j, g(\eps)+1}\right)^{-2} \ln \left(\frac{n}{j \delta}\right)\right)\\
    >&\min_{j\in[g(\eps)]}\frac{1}{j}\ln \left(\frac{1}{\delta}\right)\left(\sum_{i=1}^{g(\eps)}\left(\Delta_{i \vee j, g(\eps)+1}\right)^{-2}\right)\\
    >&\min_{j\in[g(\eps)]}\frac{1}{j}\ln \left(\frac{1}{\delta}\right)\left(\left(\Delta_{g(\eps), g(\eps)+1}\right)^{-2}\right)\\
    >&\min_{j\in[g(\eps)]}\frac{n^{2\alpha}}{j}\ln \left(\frac{1}{\delta}\right)\left(\fr{1}{\rbr{g(\eps)+1)^\alpha-(g(\eps))^\alpha}^2}\right)\\
    \stackrel{(a_1)}{>}&\min_{j\in[g(\eps)]}\frac{n^{2\alpha}}{j}\ln \left(\frac{1}{\delta}\right)\left(\fr{1}{\rbr{\alpha\rbr{g(\eps)+1}^{\alpha-1}}^2}\right)\\
    >&\min_{j\in[g(\eps)]}\frac{n^{2\alpha}}{j}\ln \left(\frac{1}{\delta}\right)\left(\fr{1}{\alpha^{2}\rbr{2g(\eps)}^{2\alpha-2}}\right)\\
    =&\frac{2^{-2\alpha+2}\alpha^{-2}n^{2\alpha}}{\rbr{g(\eps)}^{2\alpha-1}}\ln \left(\frac{1}{\delta}\right)\\
    \stackrel{(a_2)}{=}&\frac{2^{-2\alpha+2}\alpha^{-2}n}{\eps^{\fr{2\alpha-1}{\alpha}}}\ln \left(\frac{1}{\delta}\right).
\end{align*}
Note here we lower bound the upper bound in Theorem 7 of \citet{Katz20}. The inequality $(a_1)$ results by 
\begin{align*}
    (g+1)^\alpha - g^\alpha
    = \int_g^{g+1} \alpha x^{\alpha-1} \dif x 
    \le \alpha (g+1)^{\alpha - 1}.
\end{align*}
The equation $(a_2)$ is because of $g(\eps)=n\eps^{1/\alpha}$ by the definition of the instance.

We also upper bound \eqref{eq-bucb}.
Note that for $i>j$, we have 
\begin{align*}
  i^\alpha - j^\alpha = \int_j^i \alpha x^{\alpha - 1} \dif x \ge \alpha (i-j)\cd j^{\alpha - 1} ~.
\end{align*}
With this, we have
\begin{align*}
  \hat{\tau}_{\eps,\delta}
  &\le n^{2\alpha}  \del{ \min_{j\le g(\eps)}  \fr{1}{(g(\eps)+1-j)^2 j^{2\alpha-2}} + \fr{1}{j}\sum_{i=j+1}^{g(\eps)} \fr{1}{(g(\eps)+1-i)^2 i^{2\alpha - 2}} + \fr{1}{j}\sum_{i=g(\eps)+1}^{n}\fr{1}{(i-j)^2 j^{2\alpha-2}    }  }
\end{align*}
We can bound the third sum by
\begin{align*}
  \fr{1}{j}\sum_{i=g(\eps)+1}^{n}\fr{1}{(i-j)^2 j^{2\alpha-2} } \lsim \fr{1}{j^{2\alpha-1}} \cd \fr{1}{g(\eps)+1-j}  
\end{align*}
Take $j=g(\eps) - \sqrt{g(\eps)}$. This means that $j = \Theta(g(\eps))$ and $g(\eps)-j = \sqrt{g(\eps)}$. With this choice,
\begin{align*}
  \hat{\tau}_{\eps,\delta}&\lsim n^{2\alpha} \del{  \fr{1}{g(\eps) \cd g(\eps)^{2\alpha - 2}} 
    + \fr{1}{g(\eps)} \cd \del{\sum_{i=g(\eps)-\sqrt{g(\eps)}+1}^{g(\eps)} \fr{1}{(g(\eps)+1-i)^2} \cd \fr{1}{g(\eps)^{2\alpha - 2}}   } 
    + \fr{1}{g(\eps)^{2\alpha - 1}}\cd \fr{1}{1 + \sqrt{g(\eps)}} 
  } 
  \\&\le n^{2\alpha} \del{  \fr{1}{g(\eps)^{2\alpha - 1}} 
    + \fr{1}{g(\eps)^{2\alpha - 1}} \cd \fr{\pi^2}6 
    + \fr{1}{g(\eps)^{2\alpha - 1}}\cd \fr{1}{1 + \sqrt{g(\eps)}} 
  } 
  \\&\lsim n^{2\alpha} \cd \fr{1}{g(\eps)^{2\alpha-1}} 
         = n^{2\alpha} \fr{1}{(n \eps^{1/\alpha})^{2\alpha - 1}} 
         = \fr{n}{\eps^{\fr{2\alpha - 1}{\alpha} }}. 
\end{align*}

For Bracketing SH,
\begin{align*}
\max_{i\ge g(\eps/2)+1}\frac{i}{\Delta_i^2}\cd\frac{1}{g\rbr{\fr{\eps}{4}}}=\max_{i\ge g\rbr{\eps/2}+1}i^{1-2\alpha}n^{2\alpha}\fr{1}{n\rbr{\fr{\eps}{4}}^{1/\alpha}}<\rbr{n\rbr{\frac{\eps}{2}}^{1/\alpha}}^{1-2\alpha}n^{2\alpha}\fr{1}{n\rbr{\fr{\eps}{4}}^{1/\alpha}}<\frac{16}{\eps^2}.
\end{align*}
where both the last inequality and the last equality are by $\alpha>0.5$.

\end{proof}

\subsection{An alternative upper bound for BSH}

The following bound of BSH is similar to the bound of Theorem \ref{them:datapoor}. Though the bound of Theorem \ref{them:otherBSH} is minimax in nature, we show, in Corollary  \ref{cor:bsh_old_eqgap} and \ref{cor:bsh_old_poly}, that it still achieves the same upper bound as BUCB for the EqualGap$(m)$ instance and a better upper bound for Polynomial$(\alpha)$ than BUCB when $n$ is large enough because it does not scale with $n$ polynomially. Since this bound involves an optimization problem, we speculate it could be tighter than the bound of Theorem \ref{them:datapoor} for certain instances.
\begin{theorem}
\label{them:otherBSH}
For any $\eps\in(0,1)$, the error probability of Bracketing SH satisfies 
\begin{align*}
\pp{\mu_{J_t}<\mu_1-\eps}\le\ep{-\tilde{\Theta}\rbr{\min\cbr{\max_{i\in[g(\eps/2)]}\fr{i\Delta^2_{i,g(\eps/2)+1}}{n},\eps^2}t'}},
\end{align*}
where $t'=\fr{t}{4\ln t}$. Accordingly, $\hO{\max\cbr{\min_{i\in[g(\eps/2)]}\fr{n}{i\Delta^2_{i,g(\eps/2)+1}}\log\rbr{\fr{1}{\delta}},\fr{1}{\eps^2}\log\rbr{\fr{1}{\delta}}}}$ samples are sufficient for Bracketing SH to output an $\eps$-good arm with probability $1-\delta$.
\end{theorem}
\begin{proof}
The proof idea is similar to the proof of Theorem \ref{them:datapoor}. The core idea is to find the best bracket whose size is well-balanced.
At round $t$, define the best bracket as bracket $r_i^*(t)=\ur{\log_2\rbr{c\Delta_{i,g(\eps/2)+1}^2t'}}$, where $i\in[g(\eps/2)]$ is a free parameter and $c$ is a logarithmic term for shorthand $c=\rbr{16\ln(4e)\log_2^2(\frac{i\Delta^2_{i,g(\eps/2)+1}t'}{n})\log_2\rbr{2\Delta_{i,g(\eps/2)+1}^2t'}}^{-1}$. Thus the size of bracket $r_i^*(t)$ satisfies,
\begin{align*}
    c\Delta_{i,g(\eps/2)+1}^2t'\le\abr{A_{r_i^*(t)}}\le2^{\log_2\rbr{c\Delta_{i,g(\eps/2)+1}^2t'}+1}=2c\Delta_{i,g(\eps/2)+1}^2t'.
\end{align*}
Define the following events:
\begin{itemize}
    \item $E_1$: the number of arms with mean reward at least $\mu_i$ included in bracket $r_i^*(t)$ is at least $j_t$, where $j_t=\dr{\frac{ci\Delta^2_{i,g(\eps/2)+1}t'}{2n}}$.
    \item $E_2$: $\mu_{a_{r_i^*(t)}}\ge\mu_1-\eps/2$, where $a_{r_i^*(t)}$ is the arm representing bracket $r_i^*(t)$ at round $t$.
\end{itemize}
Then the error probability of Bracketing SH can be expressed as follows,
\begin{align}
    \pp{\mu_{J_t}<\mu_1-\eps}=&\pp{\mu_{J_t}<\mu_1-\eps, E^c_1}+\pp{\mu_{J_t}<\mu_1-\eps,E_1,E^c_2}+\pp{\mu_{J_t}<\mu_1-\eps,E_1,E_2}\nn\\
    \le&\pp{E^c_1}+\pp{E_1,E^c_2}+\pp{\mu_{J_t}<\mu_1-\eps,E_2}. \label{eq-9}
\end{align}
We bound the three terms respectively. For the first term, we use the same technique as in the proof sketch.
\begin{align}
    \pp{E^c_1}\le&\ep{-\abr{A_{r_i^*(t)}}\cd\textsf{KL}\rbr{\frac{j_t}{\abr{A_{r_i^*(t)}}},\frac{i}{n}}}\nn\\
    \stackrel{(a_1)}{\le}&\ep{-\frac{\abr{A_{r_i^*(t)}}n}{2i}\rbr{\frac{j_t}{\abr{A_{r_i^*(t)}}}-\frac{i}{n}}^2}\nn\\
    \le&\ep{-\frac{\Delta_{i,g(\eps/2)+1}^2t'n}{2i}\rbr{\frac{\frac{i\Delta^2_{i,g(\eps/2)+1}t'}{2n}}{\Delta_{i,g(\eps/2)+1}^2t'}-\frac{i}{n}}^2}\nn\\
    \le&\ep{-\frac{ci\Delta_{i,g(\eps/2)+1}^2t'}{8n}}\nn\\
    \le&\ep{-\tilde{\Theta}\rbr{\frac{i\Delta^2_{i,g(\eps/2)+1}t'}{n}}}.\label{eq-6}
\end{align}
The inequality $(a_1)$ is due to $\textsf{KL}\rbr{p,q}\ge\fr{(p-q)^2}{2\max\{p,q\}}$.
For the second term of \eqref{eq-9}, we bound the event that SH returns a non-$(j_t,\Delta_{i,g(\eps/2)+1})$-good arm, where $j_t$ is with respect to the best bracket. We use $\mu_{(j_t)}$ for the mean reward of the $j_t$-th best arm in bracket $r_i^*(t)$. Thus $\mu_{(j_t)}-\Delta_{i,g(\eps/2)+1}\ge\mu_1-\eps/2$. By Theorem \ref{them:minmax}, 
\begin{align}
    \pp{E_1,E^c_2}\le&\pp{\mu_{a_{r_i^*(t)}}<\mu_1-\eps/2\mid E_1}\nn\\
    \le&\pp{\mu_{a_{r_i^*(t)}}<\mu_{(j_t)}-\Delta_{i,g(\eps/2)+1}\mid E_1}\nn\\
    \stackrel{(a_1)}{\le}&\log_2\abr{A_{r_i^*(t)}}\cd\ep{-\textup{const}\cd j_t\rbr{\fr{\Delta_{i,g(\eps/2)+1}^2t'}{4\abr{A_{r_i^*(t)}}\log_2^2(2j_t)\log_2\abr{A_{r_i^*(t)}}}-\ln(4e)}}\nn\\
    \le&\log_2\abr{A_{r_i^*(t)}}\cd\ep{-\textup{const}\cd \frac{ci\Delta^2_{i,g(\eps/2)+1}t'}{n}}\nn\\
    \le&\ep{-\tilde{\Theta}\rbr{\frac{i\Delta^2_{i,g(\eps/2)+1}t'}{n}}}.\label{eq-7}
\end{align}
For the inequality $(a_1)$, we bound the term in parenthesis as a constant,
\begin{align*}
    &\fr{\Delta_{i,g(\eps/2)+1}^2t'}{4\abr{A_{r_i^*(t)}}\log_2^2(2j_t)\log_2\abr{A_{r_i^*(t)}}}-\ln(4e)\\
    >&\fr{\Delta_{i,g(\eps/2)+1}^2t'}{8c\Delta_{i,g(\eps/2)+1}^2t'\log_2^2(2c\frac{i\Delta^2_{i,g(\eps/2)+1}t'}{2n})\log_2\rbr{2c\Delta_{i,g(\eps/2)+1}^2t'}}-\ln(4e)\\
    >&\fr{1}{8c\log_2^2(\frac{i\Delta^2_{i,g(\eps/2)+1}t'}{n})\log_2\rbr{2\Delta_{i,g(\eps/2)+1}^2t'}}-\ln(4e)\\
    >&2\ln(4e)-\ln(4e)\\
    =&\ln(4e).
\end{align*}
The third term of \eqref{eq-9} means an arm whose mean reward is less than $\mu_1-\eps$ has a larger empirical reward than an arm whose mean reward is larger than $\mu_1-\eps/2$. Note the recently opened brackets may not have finished their first SH yet. In this case, the DSH always returns an empirical reward of negative infinity. Thus the arms returned from these brackets will never be selected by BSH. Denote $\dot{A}(t):=\cbr{a\in\bigcup_{k=1,k\neq r_i^*(t)}^{L_t}A_{k},\mu_a<\mu_1-\eps}$.
We have 
\begin{align*}
    \abr{\dot{A}(t)}\le&\sum_{k=1}^{L_t}2^k\le2\cd\rbr{2^{1 + \log_2\rbr{1 + \ln(2) t}}-1}\le8\cd\rbr{1 + \ln(2) t}.
\end{align*}
By the number of arm pulls in Lemma \ref{lem:numOfpull},
\begin{align}
    \pp{\mu_{J_t}<\mu_1-\eps,E_2}\le&\pp{\exists a_1\in A_{r_i^*(t)},\exists a_2\in\dot{A}(t), \text{s.t.},\mu_{a_1}\ge\mu_1-\eps/2,\hat{\mu}_{a_1}<\hat{\mu}_{a_2}}\nn\\
    \le&\abr{\dot{A}(t)}\abr{A_{r_i^*(t)}}\ep{-\rbr{\fr{\eps}{2}}^2\cd\text{const}\cd\fr{t'}{\log_2n}}\nn\\
    \le&\ep{-\text{const}\cd\eps^2\fr{t'}{\log_2n}+\ln\rbr{\abr{\dot{A}(t)}\abr{A_{r_i^*(t)}}}}\nn\\
    \le&\ep{-\text{const}\cd\eps^2\fr{t'}{\log_2n}+\ln\rbr{8\cd\rbr{1 + \ln(2) t}2\Delta_{i,g(\eps/2)+1}^2t'}}\nn\\
    \le&\ep{-\text{const}\cd\eps^2\fr{t'}{\log_2n}+\OO{\ln t}}\nn\\
    \le&\ep{-\tilde{\Theta}\rbr{\eps^2t'}}.\label{eq-8}
\end{align}
Combine \eqref{eq-6}\eqref{eq-7}\eqref{eq-8} and the fact $i\in[g(\eps/2)]$ is a free parameter, then we have 
\begin{align*}
\pp{\mu_{J_t}<\mu_1-\eps}\le\ep{-\tilde{\Theta}\rbr{\min\cbr{\max_{i\in[g(\eps/2)]}\fr{i\Delta^2_{i,g(\eps/2)+1}}{n},\eps^2}t'}}.
\end{align*}
\end{proof}

\begin{corollary}
\label{cor:bsh_old_eqgap}
Consider the EqualGap$(m)$ instance. 
BUCB achieves an expected $(\eps,\delta)$-unverifiable sample complexity as
\begin{align*}
    \ee{\tau_{\eps,\delta}}\le\hO{\fr{n}{\eps^2m}\log\rbr{\fr{1}{\delta}}}.
\end{align*}
For BSH, the bound of Theorem \ref{them:otherBSH}  shows that BSH achieves an $(\eps,\delta)$-unverifiable sample complexity as
\begin{align*}
   \tau_{\eps,\delta}\le\hO{\fr{n}{\eps^2m}\log\rbr{\fr{1}{\delta}}}
\end{align*}
with probability $1-\delta$.
\end{corollary}
\begin{proof}
For BUCB, the proof is presented in the proof of Corollary~\ref{cor-5}.
For BSH, the result is straightforward by plugging the instance,
\begin{align*}
    \min_{i\in[g(\eps/2)]}\fr{n}{i\Delta^2_{i,g(\eps/2)+1}}\log\rbr{\fr{1}{\delta}}\le&\min\cbr{\fr{n}{\rbr{\eps/2}^2}\log\rbr{\fr{1}{\delta}},\fr{n}{m\eps^2}\log\rbr{\fr{1}{\delta}}}=\fr{n}{m\eps^2}\log\rbr{\fr{1}{\delta}}.
\end{align*}
\end{proof}

The following corollary shows that even if we use the upper bound of Theorem \ref{them:otherBSH} that is minimax in nature, we can still achieve an upper bound for the Polynomial$(\alpha)$ instance that does not scale with the instance size $n$, and for moderately large $\epsilon$ or large enough $n$, the upper bound of Theorem \ref{them:otherBSH} is always better than the upper bound of BUCB.

\begin{corollary}
\label{cor:bsh_old_poly}
Consider the Polynomial$(\alpha)$ instance $\Delta_i=\rbr{\fr{i}{n}}^\alpha$.
BUCB achieves an expected $(\eps,\delta)$-unverifiable sample complexity as
\begin{align*}
\ee{\tau_{\eps,\delta}}\le\hat{\tau}_{\eps,\delta}=\hTT{\eps^{-\fr{2\alpha-1}{\alpha}}n\log\rbr{\fr{1}{\delta}}}. 
\end{align*}
For BSH, the bound of Theorem \ref{them:otherBSH}  shows that BSH achieves an $(\eps,\delta)$-unverifiable sample complexity as
\begin{align*}
    \tau_{\eps,\delta}\le\hO{\eps^{-\fr{2\alpha+1}{\alpha}}\log\rbr{\fr{1}{\delta}}}~.
\end{align*}
with probability $1-\delta$.
\end{corollary}
\begin{proof}
For BUCB, the proof is presented in the proof of Corollary~\ref{cor-4}.
For BSH, first, notice that
\begin{align*}
    \Delta^2_{i,j+1}>\Delta^2_{i,j}=&\rbr{\rbr{\fr{j}{n}}^\alpha-\rbr{\fr{i}{n}}^\alpha}^2\\
    =&\rbr{\fr{j}{n}}^{2\alpha}+\rbr{\fr{i}{n}}^{2\alpha}-2\rbr{\fr{ij}{n^2}}^{\alpha}.
\end{align*}
The sample complexity satisfies,
\begin{align*}
    \min_{i\in[g(\eps/2)]}\fr{n}{i\Delta^2_{i,g(\eps/2)+1}}\log\rbr{\fr{1}{\delta}}\le&\min_{i\in[g(\eps/2)]}\fr{n}{i\rbr{\rbr{\fr{g(\eps/2)}{n}}^{2\alpha}+\rbr{\fr{i}{n}}^{2\alpha}-2\rbr{\fr{ig(\eps/2)}{n^2}}^{\alpha}}}\log\rbr{\fr{1}{\delta}}\\
    =&\min_{i\in[g(\eps/2)]}\fr{n\cd n^{2\alpha}}{i\rbr{\rbr{g(\eps/2)}^{2\alpha}+i^{2\alpha}-2\rbr{ig(\eps/2)}^{\alpha}}}\log\rbr{\fr{1}{\delta}}\\
    \stackrel{(a_1)}{\le}&\fr{2\cd n^{2\alpha+1}}{(g(\eps/2))^{2\alpha+1}\rbr{1+2^{-2\alpha}-2^{-\alpha+1}}}\log\rbr{\fr{1}{\delta}}\\
    =&\fr{2\cd n^{2\alpha+1}}{(n(\eps/2)^{1/\alpha})^{2\alpha+1}\rbr{1+2^{-2\alpha}-2^{-\alpha+1}}}\log\rbr{\fr{1}{\delta}}\\
    =&2^{\fr{2\alpha+1}{\alpha}+1}\rbr{1+2^{-2\alpha}-2^{-\alpha+1}}^{-1}\eps^{-\fr{2\alpha+1}{\alpha}}\log\rbr{\fr{1}{\delta}}.\\
\end{align*}
The inequality $(a_1)$ is by taking $i=g(\eps/2)/2$.
\end{proof}

\subsection{\texorpdfstring{$(\eps,\delta)$}--unverifiable sample complexity of algorithms with \texorpdfstring{$\eps$}--error probability bound}
\label{sec-unverifiable-sc}

\newtheorem*{def-uver}{Definition~\ref{def-uver}}
\begin{def-uver}[$(\eps,\delta)$-unverifiable sample
complexity \citep{Katz20}] 
For an algorithm $\pi$ and an instance $\rho$. Let $\tau_{\eps,\delta}$ be a stopping time such that
\begin{align*}
    \pp{\forall t\ge\tau_{\eps,\delta}: \mu_{J_t}>\mu_1-\eps}\ge1-\delta.
\end{align*}
Then, $\tau_{\eps,\delta}$ is called $(\eps,\delta)$-unverifiable sample complexity of the algorithm with respect to $\rho$.
\end{def-uver}
The $(\eps,\delta)$-unverifiable sample complexity requires all the outputs at and after $t=\tau_{\eps,\delta}$ are $\eps$-good. This is slightly different from the sample complexity of BSH. However, we show they are order-wise equivalent. To see this, let us consider an anytime algorithm that achieves an exponentially decreasing error probability,
\begin{align*}
    \pp{\mu_{J_t}<\mu_{1}-\eps}\le\ep{-c\cd t}.
\end{align*}
Let $t_{\eps,\delta}$ be the first time step that satisfies $\pp{\mu_{J_{t_{\eps,\delta}}}<\mu_{1}-\eps}\le\delta$. Note this only guarantees the output at round $t_{\eps,\delta}$ is $\eps$-good with probability $1-\delta$, instead of all the outputs at and after $t_{\eps,\delta}$. We prove our claim by showing the probability that there is a non-$\eps$-good output at or after $t_{\eps,\delta}$ has an exponentially decreasing rate with the same parameter.
\begin{align*}
    \pp{\exists t\ge t_{\eps,\delta}: \mu_{J_t}<\mu_1-\eps}\le&\sum_{t=t_{\eps,\delta}}^{\infty}\pp{\mu_{J_t}<\mu_1-\eps}\\
    \le&\sum_{t=t_{\eps,\delta}}^{\infty}\ep{-c\cd t}\\
    =&\lim_{t\rightarrow\infty}\ep{-c\cd t_{\eps,\delta}}\fr{1-\ep{-c\cd t}}{1-\ep{-c}}\\
    <&\ep{-c\cd t_{\eps,\delta}}\fr{1}{1-\ep{-c}}\\
    =&\ep{-\Theta(c\cd t_{\eps,\delta})}.
\end{align*}

\section{Discussion on the Practical Algorithm Implementation}
\label{sec-discussion-implementation}
The design of DSH (Algorithm \ref{Alg:DSH}) and BSH (Algorithm \ref{Alg:BSH}) involves two key ideas, bracketing and doubling trick. 
In practice, both two techniques could be implemented in a more efficient way. In addition, the base algorithm SH (Algorithm \ref{Alg:SH}) has a commonly used implementation for reusing samples \citep{baharav2019ultra,jun16anytime}. We summarize these strategies as follows for the practitioner's consideration. However, they do not make any order-wise difference (not more than logarithmic factors) in terms of the theoretical guarantee.
\begin{itemize}
    \item SH: The whole procedure of SH is divided into $\log_2n$ stages. All the stages are independent of each other since the samples of prior stages are abandoned when a new stage starts, as described in Algorithm \ref{Alg:SH}. In practical usage, one can keep all the samples since the first stage for the surviving arms. 
    \item DSH: The common doubling trick does not require keeping the samples after finishing one invocation of the base algorithm. 
    Not requiring to keep the samples is convenient to implement as we only need to repeatedly initialize a new instance of the base algorithm with the doubled budget parameter. In fact, for multi-armed bandit problems, keeping all the samples of prior invocations is beneficial. For the implementation, one can create a class for the base algorithm SH, and we initialize a new instance of the class with initial empirical rewards equal to the empirical rewards saved in the prior instance.
    \item BSH: 
    The bracketing technique does not promise to avoid overlap
    with the already-opened brackets.
    A more practical implementation of BSH is to share the samples of the same arm across different brackets. Such that the empirical reward is more accurate. 
    However, reusing sampling across different brackets is meaningless if we consider the infinitely-armed bandit models. 
    Because, for the infinitely-armed bandit models, we will not draw exactly the same arm more than once. Thus it is barely possible to have overlap among the opened brackets.
\end{itemize}
Note our implementation reported in section \ref{sec-expr} only uses the first reusing strategy for the base algorithm SH.

\putbib[library-shared,ref]
\end{bibunit}


\bibliographystyle{icml2023}
\end{document}